\documentclass[11pt]{article} 
\pdfoutput=1
\usepackage[letterpaper]{geometry}
\geometry{verbose,tmargin=1in,bmargin=1in,lmargin=1in,rmargin=1in}

\usepackage{comment,url,algorithm,algorithmic,relsize}
\usepackage{amssymb,amsfonts,amsmath,amsthm,amscd,dsfont,mathrsfs,mathtools,multirow,microtype,nicefrac,pifont}
\usepackage{float,psfrag,color,url}
\usepackage{upgreek}
\usepackage[dvipsnames]{xcolor}
\usepackage{epstopdf,bbm,mathtools,enumitem}
\usepackage[toc,page]{appendix}
\usepackage[mathscr]{euscript}





\def\balign#1\ealign{\begin{align}#1\end{align}}
\def\baligns#1\ealigns{\begin{align*}#1\end{align*}}
\def\balignat#1\ealign{\begin{alignat}#1\end{alignat}}
\def\balignats#1\ealigns{\begin{alignat*}#1\end{alignat*}}
\def\bitemize#1\eitemize{\begin{itemize}#1\end{itemize}}
\def\benumerate#1\eenumerate{\begin{enumerate}#1\end{enumerate}}

\newenvironment{talign*}
 {\csname align*\endcsname}
 {\endalign}
\newenvironment{talign}
 {\csname align\endcsname}
 {\endalign}

\def\balignst#1\ealignst{\begin{talign*}#1\end{talign*}}
\def\balignt#1\ealignt{\begin{talign}#1\end{talign}}



\let\originalleft\left
\let\originalright\right
\renewcommand{\left}{\mathopen{}\mathclose\bgroup\originalleft}
\renewcommand{\right}{\aftergroup\egroup\originalright}

\def\tinycitep*#1{{\tiny\citep*{#1}}}
\def\tinycitealt*#1{{\tiny\citealt*{#1}}}
\def\tinycite*#1{{\tiny\cite*{#1}}}
\def\smallcitep*#1{{\scriptsize\citep*{#1}}}
\def\smallcitealt*#1{{\scriptsize\citealt*{#1}}}
\def\smallcite*#1{{\scriptsize\cite*{#1}}}

\def\<{\left\langle} 
\def\>{\right\rangle}




\DeclareSymbolFont{rsfs}{U}{rsfs}{m}{n}
\DeclareSymbolFontAlphabet{\mathscrsfs}{rsfs}


\ifdefined\nonewproofenvironments\else
\ifdefined\ispres\else
\newtheorem{theorem}{Theorem}
\newtheorem{lemma}[theorem]{Lemma}

\newtheorem{definition}[theorem]{Definition}

\renewenvironment{proof}{\noindent\textbf{Proof.}\hspace*{.3em}}{\qed \vspace{.1in}}
\newenvironment{proof-sketch}{\noindent\textbf{Proof Sketch}
  \hspace*{1em}}{\qed\bigskip\\}
\newenvironment{proof-idea}{\noindent\textbf{Proof Idea}
  \hspace*{1em}}{\qed\bigskip\\}
\newenvironment{proof-of-lemma}[1][{}]{\noindent\textbf{Proof of Lemma {#1}}
  \hspace*{1em}}{\qed\\}
  \newenvironment{proof-of-proposition}[1][{}]{\noindent\textbf{Proof of Proposition {#1}}
  \hspace*{1em}}{\qed\\}
\newenvironment{proof-of-theorem}[1][{}]{\noindent\textbf{Proof of Theorem {#1}}
  \hspace*{1em}}{\qed\\}
\newenvironment{proof-attempt}{\noindent\textbf{Proof Attempt}
  \hspace*{1em}}{\qed\bigskip\\}

\newtheorem*{remark*}{Remark}

\fi

\newtheorem{proposition}[theorem]{Proposition}

\newtheorem{assumption}{Assumption}
\theoremstyle{definition}
\newtheorem{example}[theorem]{Example}

\fi
\makeatletter
\@addtoreset{equation}{section}
\makeatother




\definecolor{OliveGreen}{rgb}{0,0.6,0}

\usepackage[normalem]{ulem}

\RequirePackage[authoryear]{natbib}
\setcitestyle{authoryear,open={(},close={)}} 
\RequirePackage[colorlinks,citecolor=blue,urlcolor=blue]{hyperref}
\RequirePackage{authblk}

\makeatletter
\renewcommand{\paragraph}{%
  \@startsection{paragraph}{4}%
  {\z@}{1.25ex \@plus 1ex \@minus .2ex}{-1em}%
  {\normalfont\normalsize\bfseries}%
}
\makeatother

\usepackage{amsfonts,amscd,dsfont,mathrsfs,mathtools,microtype,nicefrac,pifont}
\usepackage{wrapfig}
\usepackage{subfigure}
\usepackage{mathrsfs}
\usepackage{float}
\usepackage{makecell}
\usepackage{multirow}
\usepackage{enumitem}
\usepackage{hyperref}
\usepackage[toc,page]{appendix}
\usepackage{algorithm,algorithmic}
\usepackage{color}
\usepackage[table]{xcolor}

\allowdisplaybreaks

\DeclareMathOperator{\diag}{diag\,}

\usepackage{times}

\title{\textrm{\bf Mirror Flow Matching with Heavy-Tailed Priors \\ for Generative Modeling on Convex Domains}}

\author[1]{Yunrui Guan}
\author[2]{Krishnakumar Balasubramanian}
\author[1]{Shiqian Ma}
\affil[1]{Department of Computational Applied Mathematics and Operations Research, Rice University.}
\affil[2]{Department of Statistics, University of California, Davis.}
\affil[1]{\texttt{\{yg83,sqma\}}@rice.edu}
\affil[2]{\texttt{\{kbala\}}@ucdavis.edu}
\date{}

\begin{document}

\maketitle

\begin{abstract}
We study generative modeling on convex domains using flow matching and mirror maps, and identify two fundamental challenges. First, standard log-barrier mirror maps induce heavy-tailed dual distributions, leading to ill-posed dynamics. Second, coupling with Gaussian priors performs poorly when matching heavy-tailed targets. To address these issues, we propose Mirror Flow Matching based on a \emph{regularized mirror map} that controls dual tail behavior and guarantees finite moments, together with coupling to a Student-$t$ prior that aligns with heavy-tailed targets and stabilizes training. We provide theoretical guarantees, including spatial Lipschitzness and temporal regularity of the velocity field, Wasserstein convergence rates for flow matching with Student-$t$ priors and primal-space guarantees for constrained generation, under $\varepsilon$-accurate learned velocity fields. Empirically, our method outperforms baselines in synthetic convex-domain simulations and achieves competitive sample quality on real-world constrained generative tasks.
\end{abstract}

\section{Introduction}


Flow matching~\citep{lipman2023flow, liu2023flow, albergo2023stochastic, albergo2023building, tong2024improving, chen2024flow} has emerged as a powerful framework for generative modeling, unifying score-based diffusion and optimal transport approaches under a single perspective. The central idea in flow matching is to construct a continuous-time deterministic flow that transports a simple prior distribution (e.g., Gaussian) to a complex target distribution, by learning its velocity field. Formally, given random variables $X_{0} \sim \pi_{0}$ and $X_{1} \sim \pi_{1}$, both supported on $\mathbb{R}^{d}$, we seek a time-dependent vector field $v:\mathbb{R}^{d} \times [0,1] \to \mathbb{R}^{d}$ such that the solution of the ODE $dX_{t} = v(X_{t},t)\,dt,$ with $X_{0} \sim \pi_{0}$, satisfies $X_{1} \sim \pi_{1}$. A simple construction is based on straight-line interpolation $X_{t} = (1-t)X_{0} + tX_{1}$, which yields the conditional velocity field $v^{*}(x,t) = \mathbb{E}[X_{1} - X_{0} \mid X_{t} = x]$. This vector field $v^{*}$ minimizes the regression loss $\min_{v}\; \mathbb{E}\![\|v(X_{t},t) - \tfrac{d}{dt}X_{t}\|^{2}],$ making it the optimal velocity field for the interpolation path. Since computing $v^{*}$ exactly is intractable, modern flow-matching methods approximate $v$ with a neural network and simulate the ODE numerically. This pathwise formulation leads to scalable training objectives, principled continuous-time generative processes, and improved sample quality.  

\textbf{Constrained Flow Matching.} In many applications, the target is supported on constrained domains like polytope, simplex, or positive semidefinite matrices, rather than the full Euclidean space. Examples include molecular generation, where atoms and bonds must satisfy physical stability constraints~\citep{fishman2023metropolis}, preference alignment~\citep{kim2024preference}, policy optimization and physical constraints for robotics~\citep{zhang2025flowpolicy, utkarsh2025physics} and watermarked content generation~\citep{liu2023mirror}. Standard flow-based methods fail in this setting: projecting unconstrained samples back onto the domain distorts the distribution. Several strategies address this challenge including reflection-based methods \citep{lou2023reflected, fishman2023diffusion, xie2024reflected, christopher2024constraine} that keep trajectories inside the domain using boundary normals; mirror-map diffusion models \citep{liu2023mirror, feng2025neural} that transform constrained problems into unconstrained ones using mirror-maps; gauge-map approaches \citep{li2025gauge} that enforce feasibility via reflections; and distance-penalty methods \citep{huan2025efficient, khalafi2024constrained} that penalize distance to the constraint set, at notable computational cost. Despite this progress, no framework yet ensures constraint satisfaction while providing convergence rates for flow matching.

In this work, we focus on the development of \emph{mirror flow matching}, where the velocity field is adapted to the geometry of the constraint set. Formally, let $\mathcal{K} = \{\phi_{i}(x) < 0,~ \phi:\mathbb{R}^d\to \mathbb{R},~i = 1,\ldots m\}$, where $\phi_{i}$ are smooth convex functions, be a closed convex set, and suppose the target distribution $\pi_{1}$ is supported on $\mathcal{K}$. Our approach is based on constructing a mirror map $\nabla \Psi : \mathcal{K} \to \mathbb{R}^{d}$, where $\Psi:\mathcal{K}\to \mathbb{R}$ is a strictly convex, differentiable potential. The mirror map transports points from the constrained \emph{primal} space $\mathcal{K}$ to an unconstrained \emph{dual} space. In this dual space, one can perform standard (unconstrained) flow matching, i.e.,  define $Z_{t} = \nabla \Psi(X_{t})$, and evolve it via $dZ_{t} = v^D(Z_{t},t)\,dt$ with $Z_{0} = \nabla \Psi(X_{0}),$ where $v^D$ is a velocity field learned by minimizing the unconstrained flow matching objective. The primal trajectory is then recovered by mapping back using the inverse mirror map $X_{t} = (\nabla \Psi)^{-1}(Z_{t})$. This mirror-descent-based formulation ensures that the entire trajectory $\{X_{t}\}_{t\in[0,1]}$ remains in $\mathcal{K}$ while leveraging the flexibility of unconstrained flow matching in the dual space. Thus, mirror flow matching combines geometry-aware sampling with scalable learning, broadening the applicability of flow models to structured domains that naturally arise in the aforementioned application areas.

\subsection{Challenges and Solutions}

\textbf{Methodological Challenges.} Extending flow matching to constrained domains via mirror maps introduces key challenges. First, the transformed target distribution in the dual space may have heavy tails, causing standard mirror maps (e.g., log-barrier) to violate moment conditions required for well-posed flow ODEs (Figure~\ref{N_Dual}, red dots). We address this with a \emph{regularized mirror map} that controls heavy tails and ensures finite $p$-th moments for all $p \geq 1$ (Figure~\ref{N_Dual}, blue dots), stabilizing training. Second, Gaussian priors often mismatch the heavy-tailed dual distributions; we instead adopt a \emph{Student-$t$ prior}, improving alignment, sample quality, and stability. Together, these modifications overcome limitations of standard log-barrier and Gaussian priors, yielding high-fidelity constrained generative modeling. A visual illustration is provided in Appendix Section~\ref{sec:visualmetchallenge}.

\textbf{Theoretical Challenges.} In addition to the methodological issues above, theoretical analysis of mirror flow matching poses challenges. Rigorous error bounds for the sampling stage require the velocity field $v(x,t)$ to be Lipschitz in $x$ \citep{benton2023error, bansal2024wasserstein, zhou2025an, gao2024gaussian}, while ODE discretization error further requires Lipschitz continuity in both $x$ and $t$ \citep{bansal2024wasserstein, zhou2025an}. However, the dual velocity field $v^D(z,t)$ is generally not Lipschitz over $t \in [0,1]$. Partial progress includes spatial Lipschitzness on $t \in [0,T] \subsetneq [0,1]$ under bounded $\pi_{1}$ \citep{benton2023error, zhou2025an} or Gaussian-like $\pi_{1}$ \citep{gao2024gaussian}. In general, unbounded $\pi_{1}$ can induce polynomial growth in $\|\nabla_x v(x,t)\|$ as $\|x\|$ grows and singularities near $t=1$, motivating \emph{early stopping}. Recent work \citep{cordero2025non} leverages Log-Sobolev inequalities to establish spatial Lipschitzness, though $t$-Lipschitzness is not addressed. We overcome this challenge by using t-distribution as priors. While such priors have been explored empirically (for example, \cite[Appendix~B]{pandey2024heavy}), our motivation comes from addressing the above theoretical challenge.  

\textbf{Contributions.} In this work, we introduce flow matching with a Student-$t$ prior (see Section~\ref{Sec_Geom}) and provide new theoretical guarantees establishing both spatial Lipschitzness and temporal regularity (see Proposition~\ref{P_Lipschitz_x_t}). This result enables us to obtain explicit error bounds under substantially more general target distributions (see Theorem~\ref{T_Error_T_Flow}) in the dual Euclidean space under the assumption that the learned velocity fields approximates the true dynamics up to $\varepsilon$-accuracy. Finally in Theorem~\ref{thm:primalg} we further prove \emph{primal-space guarantees} for constrained dynamics.

\section{Ingredients for Designing Mirror Flow Matching}\label{Sec_Mirror_Map}
\subsection{Ingredient 1: The Mirror map}

Before introducing our proposed mirror map, we first explain why the classical log-barrier is not suitable in our setting.  
The main issue arises from our first identified challenge: ensuring the existence of moments.  
As the following general result shows, if the log-barrier transformation induces heavy tails, then even low-order moments (e.g., the second moment) may fail to exist.

\begin{wrapfigure}{r}{0.3\textwidth}
\includegraphics[width=0.95\linewidth]{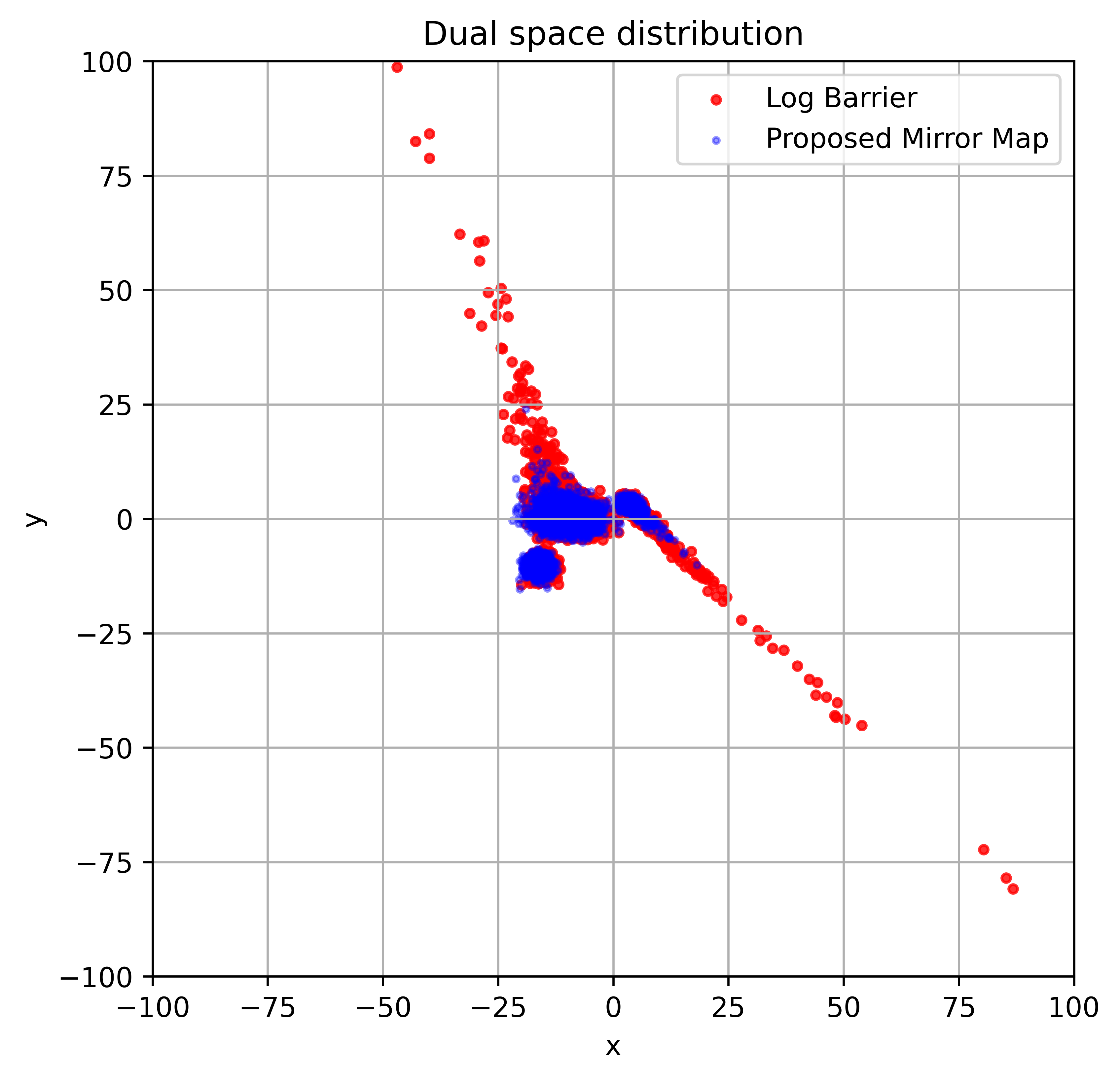}
    \label{N_Dual_Log_Barrier}
\caption{Dual space distribution comparison between the log barrier and our mirror map ($\kappa = 0.5$). The primal distribution is a truncated Gaussian mixture within a polytope (see Appendix~\ref{sec:visualmetchallenge}). 
The log barrier yields a heavy-tailed distribution, while our mirror map produces a much lighter tail.}
\label{N_Dual}
\end{wrapfigure}

\begin{lemma}
    \label{L_tail_moment}
    Let $Y$ be a random variable in $\mathbb{R}^{d}$ with law $P$. Then, (i) if $P(\|Y\| \ge R) \ge {C}/{R^{p}}$ for some constant $C>0$, then $\mathbb{E}[\|Y\|^{p}]$ does not exist, and (ii) if $P(\|Y\| \ge R) \le {C}/{R^{\beta}}$ with $\beta > p$, then $\mathbb{E}[\|Y\|^{p}]$ is finite.
\end{lemma}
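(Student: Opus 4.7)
The plan is to reduce both claims to the layer-cake identity
\[
\mathbb{E}[\|Y\|^{p}] \;=\; \int_{0}^{\infty} p R^{p-1}\, P(\|Y\| \ge R)\, dR,
\]
which holds for any nonnegative random variable and follows from Fubini applied to $\|Y\|^{p} = \int_{0}^{\|Y\|} p R^{p-1} dR$. Once this identity is in hand, both parts of the lemma become one-line integral comparisons.

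For part (i), I would observe that the hypothesis $P(\|Y\| \ge R) \ge C/R^{p}$ is only a meaningful tail statement for $R$ large enough (for small $R$ the bound is vacuous, since probabilities are at most $1$). So I would fix some $R_{0} \ge 1$ with $C/R_{0}^{p} \le 1$ and restrict the layer-cake integral to $[R_{0},\infty)$, giving
\[
\mathbb{E}[\|Y\|^{p}] \;\ge\; \int_{R_{0}}^{\infty} p R^{p-1}\cdot \frac{C}{R^{p}}\, dR \;=\; pC\int_{R_{0}}^{\infty}\frac{dR}{R} \;=\; +\infty,
\]
so the $p$-th moment diverges. (If the lemma is intended with $P(\|Y\|\ge R) \ge C/R^{p}$ for all $R\ge R_{0}$, this is exactly the argument; if the user reads the hypothesis as holding for all $R>0$, one simply takes $R_{0}=(C)^{1/p}\vee 1$.)

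For part (ii), I would split the layer-cake integral at $R=1$ (or any convenient cutoff) and bound each piece separately:
\[
\mathbb{E}[\|Y\|^{p}] \;=\; \int_{0}^{1} p R^{p-1} P(\|Y\|\ge R)\, dR + \int_{1}^{\infty} p R^{p-1} P(\|Y\|\ge R)\, dR.
\]
The first integral is bounded by $\int_{0}^{1} p R^{p-1} dR = 1$ since $P(\cdot)\le 1$, and the second is bounded using the tail hypothesis by
\[
\int_{1}^{\infty} p R^{p-1}\cdot \frac{C}{R^{\beta}}\, dR \;=\; pC\int_{1}^{\infty} R^{p-1-\beta}\, dR \;=\; \frac{pC}{\beta-p},
\]
which is finite precisely because $\beta > p$. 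Summing gives $\mathbb{E}[\|Y\|^{p}] \le 1 + pC/(\beta-p) < \infty$.

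There is essentially no obstacle here: the only subtlety is the bookkeeping on the lower-tail hypothesis in part (i), where one must restrict to $R$ large enough that $C/R^{p} \le 1$ so that the assumed lower bound is consistent. Once the layer-cake identity is invoked, both claims reduce to the standard convergence criterion for $\int R^{p-1-\beta}\, dR$ at infinity.
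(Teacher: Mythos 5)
Your proof is correct and uses essentially the same approach as the paper: both rest on the layer-cake identity $\mathbb{E}[\|Y\|^{p}]=\int_{0}^{\infty}pR^{p-1}P(\|Y\|\ge R)\,dR$ followed by a comparison with $\int R^{-1}\,dR$ (part (i)) and $\int R^{p-1-\beta}\,dR$ (part (ii)). If anything, your bookkeeping is tidier than the paper's: your split at $R=1$ in part (ii) avoids the paper's bound $\int_{0}^{\infty}CpR^{p-1-\beta}\,dR$, which as written diverges at the origin when $\beta>p$, and your cutoff $R_{0}$ in part (i) correctly accounts for the fact that the lower-tail hypothesis is only meaningful once $C/R^{p}\le 1$.
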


In addition to controlling tails, we would also like the geometry induced by the mirror map to have a desirable metric property:  
the metric in the dual space should be \emph{stronger} than that in the primal space.  
Formally, we require
\begin{align}\label{Eq_stronger_metric}
    \|x-y\| \le L_{\Psi} \|\nabla \Psi(x) - \nabla \Psi(y)\|, 
    \qquad \forall x,y \in \mathcal{K},
\end{align}
for some constant $L_{\Psi} > 0$.  
To see why this is important, we first recall some definitions of $p$-Wasserstein distance in primal space and dual space.
Let $\nu, \mu$ be two probability measures on $\mathcal{K}$.
Then we have:
\begin{align*}
    W_{p}(\nu, \mu)^{p} &= \inf_{\gamma \in \Gamma(\nu, \mu)} \mathbb{E}_{\gamma} [\|x-y\|^{p}],\\
    W_{p, \Psi}(\nu, \mu)^{p}& = \inf_{\gamma \in \Gamma(\nu, \mu)} \mathbb{E}_{\gamma} [\|\nabla \Psi(x) - \nabla \Psi (y)\|^{p}],
\end{align*}
where $\gamma \in \Gamma(\nu, \mu)$ means $\gamma$ is a coupling of $\nu, \mu$. The first one is just the Wasserstein distance for $\mathcal{K}$ under Euclidean distance, 
and the second one is actually the Wasserstein distance in the dual space. 
To see this, let $\nu', \mu'$ denote the distribution of $\nu, \mu$ in dual space, i.e., 
$\nu' = (\nabla \Psi)_{\#} \nu$ and $\mu' = (\nabla \Psi)_{\#} \mu$.
Then we have $W_{2, \Psi} (\mu, \nu) = W_{2}(\mu', \nu')$.
We remark that $W_{2, \Psi}$ was used to analyze the convergence of mirror Langevin algorithm (e.g., see \cite{li2022mirror}). 

In general, an upper bound for $W_{2, \Psi}(\nu, \mu)$ doesn't directly imply an error bound for $W_{2}(\nu, \mu)$ in the primal space.
But under inequality (\ref{Eq_stronger_metric}), Wasserstein distances in the primal space can be controlled by those in the dual space:
\begin{align*}
    W_{2}(\nu, \mu)^{2} 
    = \inf_{\gamma \in \Gamma(\nu, \mu)} \mathbb{E}_{\gamma}[\|x-y\|^{2}] \le \inf_{\gamma \in \Gamma(\nu, \mu)} \mathbb{E}_{\gamma}[L_{\Psi}^{2} \|\nabla \Psi(x) - \nabla \Psi(y)\|^{2}] = L_{\Psi}^{2} W_{2,\Psi}(\nu, \mu)^{2}.
\end{align*} 
Inequality (\ref{Eq_stronger_metric}) is equivalent to $\nabla \Psi^{*}$ being $L_{\Psi}$-Lipschitz.  
Since $\nabla^{2}\Psi$ and $\nabla^{2}\Psi^{*}$ are inverses of each other, this condition is in turn equivalent to $\Psi$ being strongly convex.  
However, classical mirror maps are generally only \emph{strictly} convex, not strongly convex.  
As a result, $L_{\Psi}$ can be arbitrarily large in certain domains; for instance, even for simple 2D polytopes with three facets ($d=2, m=3$), the constant $L_{\Psi}$ may blow up (see Example \ref{Example_Polytope} in the Appendix).  

These observations suggest that we need to design a new mirror map that balances tail behavior and convexity.  
In particular, the desired mirror map should satisfy the following goals:
\begin{enumerate}[noitemsep]
    \item Transform the constrained distribution into an unconstrained distribution on $\mathbb{R}^{d}$.
    \item Ensure that key moments (e.g., the second moment) of the transformed distribution exist.
    \item Be strongly convex, so that convergence guarantees in the dual Euclidean metric can be transferred to guarantees in the primal Euclidean metric.
\end{enumerate}

Motivated by the mirror-map framework of \cite{vural2022mirror}, we propose in Proposition \ref{P_Wasserstein_Primal_Dual_Tail_Bound} a \emph{modified log-barrier} that achieves these properties.  

\begin{proposition}
    \label{P_Wasserstein_Primal_Dual_Tail_Bound}
    Let $\mathcal{K} = \{\phi_{i}(x) < 0, \forall i \in [m]\}$, where $\phi_{i}$ are smooth convex functions with bounded gradient. 
    Let $\Psi(x) = - \frac{1}{1-\kappa} \sum_{i = 1}^{m} (-\phi_{i}(x))^{1-\kappa} + \frac{1}{2}\|x\|^{2}$. Then we have $W_{2}(\nu, \mu) \le W_{2, \Psi}(\nu, \mu)$. Denote $\mathcal{K}_{\delta} = \{x \in \mathcal{K}: -\phi_{i}(x) \ge \delta\}$.
    Let $X$ be a random variable on $\mathcal{K}$ whose law is denoted as $P$. 
    Assume there exists positive constants $C_{\mathcal{K}}, \beta, \delta_{0}$ s.t. for all $0 < \delta < \delta_{0}$ 
    it holds that 
    $P(\mathcal{K} \backslash \mathcal{K}_{\delta})
        \le  C_{\mathcal{K}} \delta^{\beta}$. 
    Then there exists some constant $C$ s.t. in the dual space $\mathbb{R}^{d}$, 
    for all $R \ge {C'}/{\delta_{0}^{\kappa}}$ (here $C'$ is some constant that depends on $\mathcal{K}$, $P(\|\nabla \Psi(X)\| \ge R) \le {C}/{R^{\beta/\kappa}}$.     By choosing $\kappa < {\beta}/{p}$, we can guarantee $\mathbb{E}[\|\nabla \Psi(X)\|^{p}]$ exists.
\end{proposition}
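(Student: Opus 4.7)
The plan is to prove the three claims in order, since each builds on the previous. First I would establish the metric inequality $W_2(\nu,\mu) \le W_{2,\Psi}(\nu,\mu)$ via strong convexity of $\Psi$. The key computation is that for each $i$, with $f_i := -\phi_i > 0$ on $\mathcal{K}$,
\begin{align*}
\nabla^2\!\left[-\tfrac{1}{1-\kappa}f_i^{1-\kappa}\right]
= f_i^{-\kappa}\nabla^2\phi_i + \kappa\, f_i^{-\kappa-1}\nabla\phi_i\nabla\phi_i^{\top} \;\succeq\; 0,
\end{align*}
since $\phi_i$ is convex and the outer product is PSD. Adding the contribution from $\tfrac{1}{2}\|x\|^2$ yields $\nabla^2\Psi(x)\succeq I_d$, so $\Psi$ is $1$-strongly convex. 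Strong monotonicity of $\nabla\Psi$ combined with Cauchy--Schwarz gives the pointwise bound $\|x-y\|\le \|\nabla\Psi(x)-\nabla\Psi(y)\|$, and taking infimum over couplings of $\nu,\mu$ yields the advertised Wasserstein inequality.

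Next I would derive the tail bound. A direct computation gives
\begin{align*}
\nabla\Psi(x) = \sum_{i=1}^m (-\phi_i(x))^{-\kappa}\nabla\phi_i(x) + x .
\end{align*}
Using the assumption $\|\nabla\phi_i\|\le M$ and the fact that $x\in\mathcal{K}_\delta$ means $-\phi_i(x)\ge\delta$ for every $i$, I get $\|\nabla\Psi(x)\| \le mM\delta^{-\kappa} + \|x\|$ on $\mathcal{K}_\delta$. Absorbing the $\|x\|$-term into the leading constant (using that $\mathcal{K}$ is bounded, as is standard for convex generative domains, or equivalently that the support of $X$ is bounded under the hypotheses), the implication
\begin{align*}
\|\nabla\Psi(x)\|\ge R \;\Longrightarrow\; x\in\mathcal{K}\setminus\mathcal{K}_\delta \quad\text{with } \delta = (C''/R)^{1/\kappa}
\end{align*}
holds for a suitable constant $C''$ depending on $m, M$ and $\mathrm{diam}(\mathcal{K})$. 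Requiring $\delta\le\delta_0$ forces $R \ge C''/\delta_0^{\kappa}$, which is where the constant $C'$ in the statement comes from. Inserting this into the hypothesis $P(\mathcal{K}\setminus\mathcal{K}_\delta)\le C_{\mathcal{K}}\delta^{\beta}$ gives $P(\|\nabla\Psi(X)\|\ge R) \le C_{\mathcal{K}}(C'')^{\beta/\kappa}R^{-\beta/\kappa}$, which is the claimed bound with $C = C_{\mathcal{K}}(C'')^{\beta/\kappa}$.

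The final moment claim is a routine consequence of the layer-cake formula $\mathbb{E}[\|\nabla\Psi(X)\|^p] = \int_0^\infty p R^{p-1}P(\|\nabla\Psi(X)\|\ge R)\,dR$: splitting the integral at $R_0 := C'/\delta_0^{\kappa}$, the low-$R$ piece is bounded by $R_0^p$, and the tail piece is controlled by $\int_{R_0}^\infty R^{p-1-\beta/\kappa}\,dR$, which converges precisely when $\kappa < \beta/p$.

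I expect the main obstacle to be the book-keeping in the tail bound step -- specifically, cleanly isolating the contribution of the linear term $x$ in $\nabla\Psi$ versus the singular boundary term, and tracking how the constants $C'$ and $C$ depend on $m$, $M$, the boundary-decay exponent $\beta$, and the geometry of $\mathcal{K}$. Once the inclusion $\{\|\nabla\Psi\|\ge R\}\subseteq \mathcal{K}\setminus\mathcal{K}_\delta$ is established with the right $\delta$-to-$R$ correspondence, the rest is a short calculation.
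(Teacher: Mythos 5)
Your proposal matches the paper's own proof essentially step for step: the Hessian computation $\nabla^{2}\Psi \succeq I$ giving $W_{2}\le W_{2,\Psi}$, the bound $\|\nabla\Psi(x)\|\le \|x\| + m\sup_i\|\nabla\phi_i\|\,\delta^{-\kappa}$ on $\mathcal{K}_{\delta}$ yielding the inclusion $\{\|\nabla\Psi(X)\|\ge R\}\subseteq \mathcal{K}\setminus\mathcal{K}_{\delta}$ with $\delta \asymp R^{-1/\kappa}$, and the layer-cake argument (the paper's Lemma~\ref{L_tail_moment}) for the moment claim. Your explicit flagging of the implicit boundedness of $\mathcal{K}$ (needed to absorb the $\|x\|$ term) and the split of the tail integral at $R_{0}$ are slightly more careful renderings of steps the paper performs tacitly, but the route is the same.
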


Specific examples (including $L_{2}$ ball and polytopes) are discussed in Appendix Section~\ref{sec:propeg}. We verify that the boundary-measure condition $P(\mathcal{K}\setminus \mathcal{K}_{\delta}) \le C_{\mathcal{K}}\delta^{\beta}$ is natural in typical cases.  

\begin{example}[Uniform distribution on the cube]
    Let $\mathcal{K} = [-1,1]^{d}$ and let $P$ be the uniform distribution on $\mathcal{K}$.  
    Define the $\delta$-interior as $\mathcal{K}_{\delta} = \{x \in \mathcal{K} : d(x,\partial \mathcal{K}) \ge \delta\}$.  
    Then the boundary layer has probability mass $
        P(\mathcal{K} \setminus \mathcal{K}_{\delta}) 
        = \frac{2^{d} - (2-2\delta)^{d}}{2^{d}}
        = 1 - (1-\delta)^{d}.$ Using the first-order expansion $(1-\delta)^{d} \approx 1 - d\delta$, we obtain $P(\mathcal{K} \setminus \mathcal{K}_{\delta}) \approx d\delta$. Hence the condition $P(\mathcal{K}\setminus \mathcal{K}_{\delta}) \le C_{\mathcal{K}} \delta^{\beta}$ holds with $\beta=1$ and $C_{\mathcal{K}} = d$. This shows the assumption is mild and satisfied by standard convex bodies such as the cube under uniform measure.
\end{example}

\subsection{Ingredient 2: The Prior Distribution}\label{sec:ingredient2}


For flow matching, let the target distribution be denoted by $X_{1} \sim \pi_{1}$ with density $p$, 
and let the initial distribution (prior) be $X_{0} \sim \pi_{0}$. 
The evolution between $\pi_{0}$ and $\pi_{1}$ is described by a time-dependent vector field, where 
$v(x, t)$ denotes the true vector field. Considering straight-line interpolation, by definition, the velocity field at a point $(x,t)$ is the conditional expectation of the instantaneous displacement along this interpolation: $    v(x,t) = \mathbb{E}[X_{1} - X_{0} \mid X_{t} = x].$ To make this expression explicit \citep{karras2022elucidating, wan2025elucidating}, note that the interpolation relation $X_{t} = (1-t)X_{0} + t X_{1}$ can be inverted to obtain $X_{0} = \frac{1}{1-t}\big( X_{t} - t X_{1} \big).$ Substituting this into the displacement $X_{1} - X_{0}$ yields $X_{1} - X_{0} = -\frac{1}{1-t}X_{t} + \frac{1}{1-t} X_{1}.$ Taking conditional expectation given $X_{t} = x$, we obtain the closed-form expression for the true velocity field:
\begin{align*}
    v(x, t) = \mathbb{E}\left[-\frac{1}{1-t}X_{t} + \frac{1}{1-t} X_{1} \;\middle|\; X_{t} = x \right] = -\frac{1}{1-t}x + \frac{1}{1-t}\,\mathbb{E}[X_{1} \mid X_{t} = x].
\end{align*}
Thus, the vector field $v(x,t)$ consists of two interpretable terms: 
a deterministic contraction term $-\tfrac{1}{1-t}x$ that pulls $x$ toward the origin, 
and a prediction term $\tfrac{1}{1-t}\mathbb{E}[X_{1} \mid X_{t}=x]$ that directs the flow toward the target distribution $\pi_{1}$. 

A crucial modeling choice in flow matching is the prior distribution. The choice of the prior distribution affects this conditonal expectation $\mathbb{E}[X_{1} \mid X_{t}=x]$ significantly. While Gaussian priors are the standard choice in unconstrained generative modeling, they are poorly suited when the target distribution exhibits heavy tails. The following example illustrates this pathology.  
Denote standard Student t distribution as $t_{d, \nu}(x) = C_{\nu, d} (1 + \frac{1}{\nu} \|x\|^{2})^{-\frac{\nu + d}{2}}$. 
\begin{example} \label{example_G_vs_T} 
Consider the one-dimensional target density $X_{1} \sim p(x) \propto (1+\frac{1}{2}x^{2})^{-\frac{3}{2}}$. Suppose we use a Gaussian prior $X_{0} \sim \mathcal{N}(0,1)$. Then the conditional distribution of $X_{1}$ given an interpolated point $X_{t}=x$, is given by  
\[
p(X_{1}|X_{t} = x) \propto 
 g(x_{1}) \;\coloneqq\; \exp\!\left(-\frac{(tx_{1}-x)^{2}}{2(1-t)^{2}}\right)\left(\frac{1}{1+\frac{1}{2}x_{1}^{2}}\right)^{\frac{3}{2}}.  
\]  
This conditional distribution develops two modes: one near $x_{1}=0$ and another near $x_{1} \approx x/t$. 
Although the $t \to 0$ limit will not cause a singularity \citep{wan2025elucidating}, we emphasize that for large values of $\|x\|$, the vector field would scales as $\exp(x^{2})$ for some small values of $t$, implying that the true velocity field $v(x,t)$ can blow up super-exponentially in $x$. 
Furthermore, as discussed in \cite{wan2025elucidating, zhou2025an}, singularities exist as $t \to 1$. By contrast, if we replace the Gaussian prior with a heavy-tailed Student-$t$ prior (e.g., with $\nu=1$), the conditional density becomes  
\[p(X_{1}|X_{t} = x) \propto 
g(x_{1}) \;=\; \Bigl(1 + \Bigl\|\tfrac{x-tx_{1}}{1-t}\Bigr\|^{2}\Bigr)^{-1}  \left(\frac{1}{1+\frac{1}{2}x_{1}^{2}}\right)^{\frac{3}{2}},  
\]  
for which the dominant mode remains near $x_{1}=0$ even as $x$ being large, over $t \in [0, T] \subsetneq [0, 1]$. In this case, the conditional expectation does not explode with $x$, and the resulting velocity field remains controlled. See Appendix Section~\ref{sec:visualtdist} for a visualization. 
\end{example}

This example highlights a key principle: when the target distribution is heavier-tailed than the prior, the conditional distribution is likely to have a mode that is dominant near $\frac{x}{t}$ for some values of $t$. Then the induced velocity field can diverge at large $\|x\|$, producing ill-posed dynamics and complicating error analysis. In particular, such blow-ups directly cause the Lipschitz constant of $v(x,t)$ to diverge as $\|x\| \to \infty$, 
necessitating additional assumptions on the tail of data distribution (e.g., bounded support) \citep{benton2023error, bansal2024wasserstein, gao2024gaussian, zhou2025an}.
Choosing a Student-$t$ prior prevents these blow-ups by making the data distribution to dominate the tail behavior of the conditional distribution, suppressing the mode near ${x}/{t}$. In this way, the mode near zero will be dominant, ensuring controlled velocity fields, finite-moment guarantees of the interpolation conditional distribution, and stability in both theoretical analysis and practical training. 


\section{Mirror Flow Matching}
\label{Sec_Geom}

Recall from Section~\ref{Sec_Mirror_Map} that we discussed choices of mirror maps for closed convex sets of the form $\mathcal{K} = \{x \in \mathbb{R}^{d} : \phi_{i}(x) < 0, \ \forall i \in [m]\}$. In mirror flow matching, both the prior $\pi_{0}$ and the target $\pi_{1}$ are required to be supported on $\mathcal{K}$. The objective is to learn a continuous-time flow \( X_t \) defined by the ODE $\frac{d}{dt} X_t = v^{P}(X_t, t)$ with $X_0 \sim \pi_0(x)$ that transports $\pi_{0}$ to $\pi_{1}$ over the interval $t \in [0,1]$.

Mirror flow matching achieves this transport by interpolating in a transformed (mirror) space.  
Given a mirror map $\nabla \Psi$, we map $x \in \mathcal{K}$ into the dual space via $z = \nabla \Psi(x)$.  
As shown in \cite{li2022mirror}, the dual Euclidean space $(\mathbb{R}^{d}, I_{d})$ is isometric to the primal space equipped with the squared Hessian metric $(\mathcal{K}, (\nabla^{2}\Psi)^{2})$.  
We denote these metrics as $g^{P} = (\nabla^{2}\Psi)^{2}$ and  $g^{D} = I_{d}.$ The procedure is then as follows: (1.) Map primal data to dual space: $z = \nabla \Psi(x)$. (2.) Perform flow matching in the dual space using straight-line interpolation $Z_t = (1-t)Z_0 + t Z_1$.  (3.) After generating samples $\hat{z}$ in the dual space, map them back to primal space using the inverse mirror map $\hat{x} = \nabla \Psi^*(\hat{z})$. In particular, interpolation in primal space is defined as $X_t = \nabla \Psi^*(Z_t),$ which can be interpreted as the \emph{geodesic interpolation} between $X_0$ and $X_1$ under the squared Hessian metric. See Figure \ref{N_Traj} for an illustrative trajectory visualization in both the primal and dual spaces.
\begin{figure}[t]
\centering
\subfigure
[Primal space trajectory]{
        \includegraphics[width=0.31\linewidth]{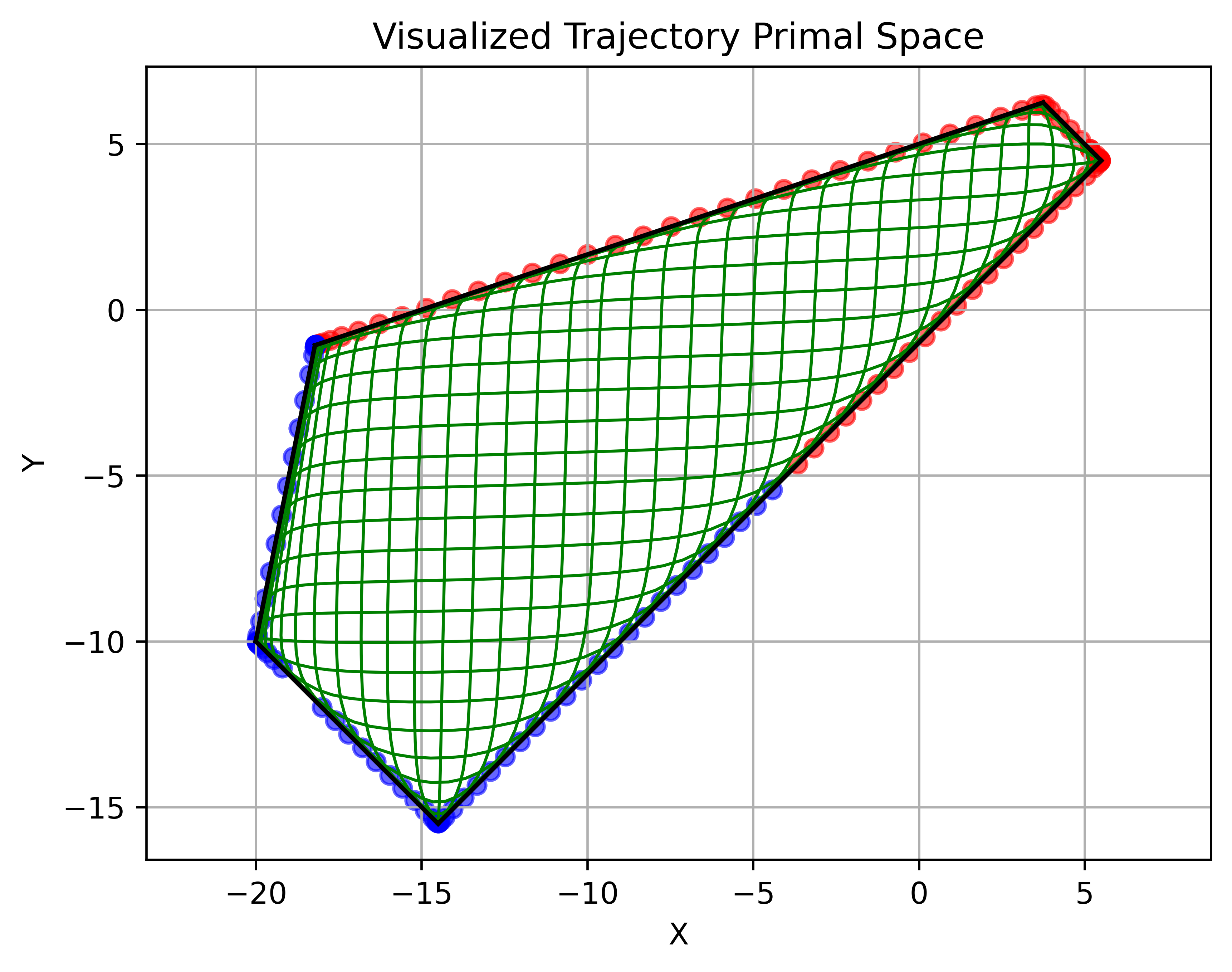}
    \label{N_Traj_Primal}
    }
\subfigure
[Dual space trajectory]{
        \includegraphics[width=0.31\linewidth]{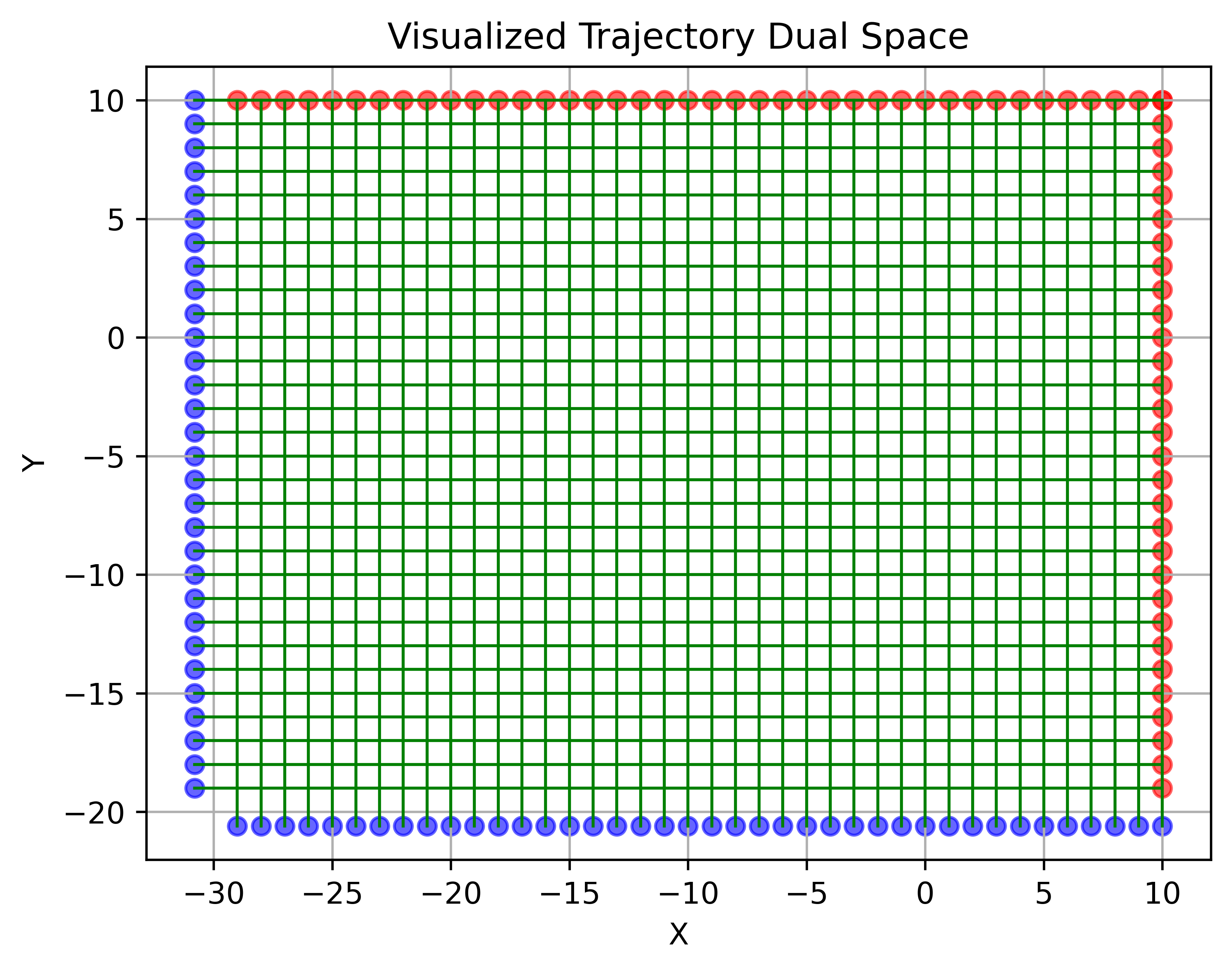}
    \label{N_Traj_Dual}
    }
\caption{Visualization of interpolations in primal and dual spaces -- Straight line interpolation in the dual space (Figure (b)) corresponds to curved ``geodesic" interpolation in primal space Figure (a)).} 
\label{N_Traj}
\end{figure}

\textbf{Relation between dual and primal velocity fields.} Consider a dual-space flow $Z_t$ defined by vector field $v^{D}$.  
By direct differentiation, the corresponding primal velocity field is
\begin{align}
    \label{Eq_Primal_Vect_field}
    v^{P}(X_t, t) := \frac{d}{dt} X_t 
    = \nabla^{2}\Psi^*(Z_t)\Big(\frac{d}{dt}Z_t\Big) 
    = \nabla^{2}\Psi^*(Z_t) v^{D}(Z_t, t).
\end{align}
The flow matching objective in the dual space is
\begin{align}
    \label{Eq_FM_Obj_Dual}
    \min_{v} \, \mathbb{E}_{t,Z_0,Z_1} 
    \big[ \| v^{D}(Z_t, t) - \tfrac{d}{dt}Z_t \|_{g^{D}}^2 \big],
    \qquad Z_t = (1-t)Z_0 + t Z_1,
\end{align}
whose solution is known to be the conditional expectation 
$v^{D}(z, t) = \mathbb{E}[\tfrac{d}{dt}Z_t \mid Z_t = z]$ \citep{liu2022flow}. The following proposition establishes the equivalence between primal and dual formulations.

\begin{proposition}\label{P_vec_field_primal_dual}
    Learning a vector field in the dual Euclidean space $(\mathbb{R}^{d}, I_{d})$ is equivalent to learning a vector field in the primal space $(\mathcal{K}, (\nabla^{2}\Psi)^{2})$.  
    Specifically,
    \begin{align*}
        \min_{v} \mathbb{E}\!\left[\|v^{P}(X_t, t) - \tfrac{d}{dt}X_t\|_{g^{P}}^2\right]
        \quad \text{and} \quad
        \min_{v} \mathbb{E}\!\left[\|v^{D}(Z_t, t) - \tfrac{d}{dt}Z_t\|_{g^{D}}^2\right]
    \end{align*}
    are equivalent, with the correspondence $v^{D}(z, t) = \nabla^{2}\Psi(x) v^{P}(x, t).$ Moreover, the primal flow matching objective is solved by $        v^{P}(x, t) = \mathbb{E}\!\left[\tfrac{d}{dt}X_t \,\middle|\, X_t = x\right].$ 
\end{proposition}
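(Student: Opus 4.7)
The plan is to establish the two claims of Proposition~\ref{P_vec_field_primal_dual} in turn: first the pointwise equivalence of the two matching objectives under the change of variables $v^{D}(z,t) = \nabla^{2}\Psi(x)\,v^{P}(x,t)$, and then the conditional-expectation characterization of the primal minimizer. The key algebraic fact I would record up front is the inverse identity $\nabla^{2}\Psi^{*}(z) = (\nabla^{2}\Psi(x))^{-1}$ whenever $z = \nabla\Psi(x)$, obtained by differentiating $\nabla\Psi^{*}\circ\nabla\Psi = \mathrm{id}_{\mathcal{K}}$. This identity is what underlies the isometry between $(\mathbb{R}^{d},I_{d})$ and $(\mathcal{K},(\nabla^{2}\Psi)^{2})$ invoked from \cite{li2022mirror}.

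For the equivalence, starting from $X_{t} = \nabla\Psi^{*}(Z_{t})$ and differentiating in $t$ gives $\tfrac{d}{dt}X_{t} = \nabla^{2}\Psi^{*}(Z_{t})\,\tfrac{d}{dt}Z_{t}$, which combined with the postulated correspondence yields the clean factorization
\begin{equation*}
v^{P}(X_{t},t) - \tfrac{d}{dt}X_{t} \;=\; \nabla^{2}\Psi^{*}(Z_{t})\bigl[v^{D}(Z_{t},t) - \tfrac{d}{dt}Z_{t}\bigr].
\end{equation*}
Squaring this in the primal metric $g^{P} = (\nabla^{2}\Psi)^{2}$ and using symmetry of the Hessians produces a sandwich $\nabla^{2}\Psi^{*}(Z_{t})\,(\nabla^{2}\Psi(X_{t}))^{2}\,\nabla^{2}\Psi^{*}(Z_{t})$ in the middle. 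Applying the inverse identity collapses this product to the identity matrix, so $\|v^{P}(X_{t},t) - \tfrac{d}{dt}X_{t}\|_{g^{P}}^{2} = \|v^{D}(Z_{t},t) - \tfrac{d}{dt}Z_{t}\|^{2}$ pointwise. Taking expectations over $(t,Z_{0},Z_{1})$ (equivalently over $(t,X_{0},X_{1})$, since $\nabla\Psi$ is a bijection) gives the equivalence of the two objectives and shows that minimizers transfer via the correspondence $v^{D} = \nabla^{2}\Psi\,v^{P}$.

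For the second claim, I would carry out the standard conditional-expectation argument adapted to the metric-weighted quadratic form. Conditioning on $X_{t}$ reduces the primal objective to minimizing, for each realization $x$, the inner functional $\mathbb{E}\bigl[\|u - \tfrac{d}{dt}X_{t}\|_{g^{P}(x)}^{2}\,\big|\,X_{t}=x\bigr]$ over deterministic vectors $u \in \mathbb{R}^{d}$. Expanding the quadratic form using symmetry and positive definiteness of $g^{P}(x)$, differentiating in $u$, and inverting yields the unique minimizer $u = \mathbb{E}[\tfrac{d}{dt}X_{t}\mid X_{t}=x]$, exactly as in the unweighted Euclidean case. Choosing $v^{P}(x,t)$ to be this minimizer for every $(x,t)$ gives the claimed formula.

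The main obstacle I anticipate is purely bookkeeping: one must keep straight that $\nabla^{2}\Psi$ is evaluated at the primal point $X_{t}$ while $\nabla^{2}\Psi^{*}$ is evaluated at the dual point $Z_{t} = \nabla\Psi(X_{t})$, so that their product is genuinely the identity and the sandwich collapses cleanly. Once that is in place, both the isometry-based equivalence and the weighted conditional-expectation argument are routine extensions of the Euclidean flow matching derivation in \cite{liu2022flow}, so no genuinely new analytic input is needed.
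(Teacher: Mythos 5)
Your proposal is correct and follows essentially the same route as the paper: differentiate $X_t = \nabla\Psi^{*}(Z_t)$, use the inverse identity $\nabla^{2}\Psi(x) = (\nabla^{2}\Psi^{*}(\nabla\Psi(x)))^{-1}$ to collapse the weighted quadratic form pointwise, and then characterize the primal minimizer as a conditional expectation. The only cosmetic difference is in the last step, where you use the first-order condition for the $g^{P}(x)$-weighted quadratic while the paper runs the orthogonality/tower-property projection argument (with the vector-valued conditional expectation machinery of Hyt\"onen et al.); these are interchangeable here since $g^{P}(x)$ is positive definite.
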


This result shows that training in the dual space with straight-line interpolation is equivalent to training in the primal space with geodesic interpolation under the squared Hessian metric.  
From an algorithmic standpoint, this equivalence is highly convenient: we can train the dual-space vector field $v^{D}$, which is simpler due to its Euclidean geometry, and recover the primal vector field $v^{P}$ by the transformation in \eqref{Eq_Primal_Vect_field}.  
Thus, the difficult geometry of $\mathcal{K}$ is automatically handled by the mirror map, while optimization is carried out in an unconstrained Euclidean space.

The algorithmic procedure for mirror flow matching is summarized in Algorithm~\ref{Alg_Mirror_Student_t_flow}. This pipeline leverages the simplicity of Euclidean training in dual space, while ensuring that the generated samples respect the original convex constraints in primal space. Here, $h$ denotes the step size (in sampling stage) and $T < 1$ denotes the terminal time if early stopping is adopted.

\begin{algorithm}[t]
    \begin{algorithmic}[1]
    \STATE Map data distribution from $\mathcal{K}$ to $\mathbb{R}^{d}$ using $\nabla \Psi$, obtain samples for $Z_{1}$.
    \STATE Learn a vector field $\hat{v}^{D}(z, t)$ with prior $\pi_{0}(x) \sim t_{d, \nu}$ 
    via $\min_{\hat{v}^{D}} \mathbb{E}_{t, Z_{0} \sim \pi_{0}^{D}, Z_{1} \sim \pi_{1}^{D}} \left[ \| \hat{v}^{D}(Z_{t}, t) - (Z_{1} - Z_{0}) \|^{2} \right]$ where $Z_{t} = t Z_{1} + (1 - t) Z_{0}$.
    \STATE Choose step size $h$ for Euler discretization s.t. $\frac{1}{h}$ is integer. 
    Choose $T \in (0, 1)$ as early stopping time, satisfying $\frac{T}{h} \in \mathbb{Z}$.
    \STATE Perform Euler discretization to sample from $\pi_{1}^{D}$ with constant step size $h$, up to time $T$:
    \STATE Generate $\overline{z}_{0} \sim \pi_{0}^{D}$. 
    \FOR{$k = 0$ to $\frac{T}{h}-1$}
    \STATE $\overline{z}_{h(k+1)} = \overline{z}_{hk} + h \hat{v}(\overline{z}_{k}, hk)$
    \ENDFOR
    \STATE \label{Alg_Line_Output} Denote the obtained sample by $\overline{z}_{T} \sim \hat{\pi}_{T}^{D}$.  
    \STATE \label{Alg_Line_Primal_Output} Map samples $\overline{z}_{T}$ back to $\mathcal{K}$ using $\nabla \Psi^{*}$ to obtain $\overline{x}_{T}$.
    \end{algorithmic}
    \caption{Mirror Flow matching with Student t distribution}
    \label{Alg_Mirror_Student_t_flow} 
\end{algorithm}

\section{Theoretical Results}\label{sec:mainresults}
In this section, we provide a theoretical analysis of error bounds for flow matching. 
A key component of our analysis is the accuracy of the neural network used to approximate the target velocity field. 
We adopt the following assumption, which is standard in the literature on flow-based generative modeling (see, e.g., \cite{benton2023error, bansal2024wasserstein, li2025gauge}) 
as well as in the study of diffusion models (see, e.g., \cite{chen2023probability, li2024towards}). 
Theoretical justification for this assumption can be found in \cite{wang2024evaluating, zhou2025an}, where the authors establish that such an $\varepsilon$-level approximation error can be achieved by a neural network under suitable training conditions.

\begin{assumption} 
    \label{A_vector_field_accuracy} (Neural Network Estimation Error)
    Let $v(x,t)$ denote the true velocity field and $\hat{v}(x,t)$ its neural network approximation. 
    We assume that the approximation error is bounded in mean square, i.e., $
        \mathbb{E}\big[\| v(x, t) - \hat{v}(x, t)\|^{2}\big] \le \varepsilon^{2}.$
\end{assumption}

Intuitively, Assumption~\ref{A_vector_field_accuracy} states that the learned velocity field $\hat{v}$ is close to the true velocity field $v$ in an average sense across both space and time. 
The parameter $\varepsilon$ therefore quantifies the quality of the neural network approximation: smaller $\varepsilon$ implies a more accurate approximation, which directly translates into higher fidelity of the generated samples.

\subsection{Guarantees for Euclidean Flow Matching with t-Distribution Priors}

In this subsection, we provide an error analysis for flow matching in Euclidean space when the prior distribution is chosen to be a Student-$t$ distribution (henceforth referred to as \emph{t-Flow}).  
Our analysis applies to the general framework of flow matching with straight-line interpolation, and is not restricted to the mirror flow matching setup.  
To maintain notation consistency, we denote random variables as $Z \in \mathbb{R}^d$ with density $\pi_1^D$. We begin by introducing the assumptions required.
\begin{assumption}[Finite Moments]
    \label{A_data_bounded_moment}
    Let $Z_{0}$ denote the prior (chosen as Student-$t$) random variable and $Z_{1}$ denote the target random variable, both supported on $\mathbb{R}^{d}$. We assume that they have finite second moments, i.e., $\mathbb{E}[\|Z_{0}\|^{2}] < \infty, \mathbb{E}[\|Z_{1}\|^{2}] < \infty$, which is necessary for well-definedness.
\end{assumption}
\begin{assumption}[Polynomial Tail Bound]
    \label{A_Data_Tail_Bound}
    Let $\pi_1^D(x)$ denote the probability density function of the data distribution supported on $\mathbb{R}^{d}$. It is assumed to satisfy: (1) For $\|x\| \ge 1$, we have $\pi_1^D(x) \le \frac{C}{\|x\|^{\alpha}}$, and (2) For $\|x\| < 1$, we have $\pi_1^D(x) \le C_{u}$.
\end{assumption}
The above assumption allows the target distribution to be heavy-tailed, covering a wide range of realistic distributions. We next establish Lipschitz guarantees for the true vector field, showing that under Assumption~\ref{A_Data_Tail_Bound}, the velocity field induced by t-Flow is both spatially Lipschitz and admits a controlled temporal derivative, which is crucial for bounding the discretization error in Theorem~\ref{T_Error_T_Flow}.

\begin{proposition}
    \label{P_Lipschitz_x_t}
    Let $v^D$ be the minimizer of the t-Flow objective (Equation \ref{Eq_FM_Obj_Dual}).  
    Under Assumption \ref{A_Data_Tail_Bound} with $\alpha \ge 2d + \nu + 2$, there exist constants $B_{1}, B_{2}$ such that, for all $t \in [0, T]$:\vspace{-0.16in}
    \begin{enumerate}[noitemsep]
        \item (Spatial Lipschitzness) The vector field $v^D(z, t)$ is $L_{1}$-Lipschitz in $z$, with $            L_{1} := \frac{d+\nu}{(1-T)^{2}} B_{1}.$
        \item (Temporal Regularity) The time derivative of the velocity field is bounded as
        \begin{align*}
            \Big\|\frac{\partial}{\partial t} v(z, t) \Big\|
            \le \frac{1}{(1-T)^{2}} \|z\| + \frac{1}{(1-T)^{2}} B_{1}
            + \frac{1}{1-T}  \frac{\nu + d}{\nu} \frac{3\sqrt{\nu}}{2(1-T)^{2}} \left(B_{2} + 3 B_{1}^{2}\right).
        \end{align*}
    \end{enumerate}
\end{proposition}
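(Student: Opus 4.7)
The plan is to build everything from the closed form $v^{D}(z,t) = \frac{1}{1-t}\bigl(m(z,t) - z\bigr)$ with $m(z,t) := \mathbb{E}[Z_{1}\mid Z_{t}=z]$, so that both claims reduce to differentiating $m(z,t)$ in $z$ and in $t$. The decisive feature of the Student-$t$ prior, and what ultimately removes the need for bounded-support hypotheses on $\pi_{1}^{D}$, is that both its score and the pairing of its score with the state are uniformly bounded:
\begin{align*}
\nabla \log p_{Z_{0}}(w) = -\tfrac{\nu+d}{\nu + \|w\|^{2}}\,w, \qquad \|\nabla \log p_{Z_{0}}(w)\| \le \tfrac{\nu+d}{2\sqrt{\nu}}, \qquad |\nabla \log p_{Z_{0}}(w)\cdot w| \le \nu+d.
\end{align*}
Neither bound holds for a Gaussian prior; the second is what will eventually eliminate a spurious $\|z\|$-dependence in the temporal estimate.

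For claim (1), I would differentiate $m$ using the Stein-type identity
\begin{align*}
\nabla_{z} m(z,t) = \tfrac{1}{1-t}\,\mathrm{Cov}\!\left(Z_{1},\,\nabla\log p_{Z_{0}}(W)\,\big|\,Z_{t}=z\right), \qquad W := \tfrac{z-tZ_{1}}{1-t},
\end{align*}
which follows from $\nabla_{z}\log p(z_{1}\mid z) = \tfrac{1}{1-t}\nabla\log p_{Z_{0}}(W) - \nabla\log p_{Z_{t}}(z)$ together with $\mathbb{E}[\nabla_{z}\log p(Z_{1}\mid z)\mid Z_{t}=z]=0$. Since $\nabla_{z} v^{D}(z,t) = \tfrac{1}{1-t}\bigl(\nabla_{z} m(z,t) - I\bigr)$, Cauchy--Schwarz combined with the uniform bound on $\|\nabla\log p_{Z_{0}}\|$ yields $\|\nabla_{z}v^{D}(z,t)\|_{\mathrm{op}} \lesssim \tfrac{\nu+d}{(1-t)^{2}}\bigl(1 + \sqrt{\mathbb{E}[\|Z_{1}\|^{2}\mid Z_{t}=z]}\bigr)$. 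Absorbing the conditional second moment into $B_{1}$ and taking $t\le T$ produces the stated $L_{1}$.

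For claim (2), I start from $\partial_{t} v^{D}(z,t) = \tfrac{m(z,t)-z}{(1-t)^{2}} + \tfrac{1}{1-t}\partial_{t}m(z,t)$; the first summand already delivers $\tfrac{\|z\|}{(1-T)^{2}}$ and $\tfrac{B_{1}}{(1-T)^{2}}$ once $\|m(z,t)\|\le B_{1}$ is available. An analogue of the spatial computation, using $\partial_{t}\bigl[(z-tz_{1})/(1-t)\bigr]=(z-z_{1})/(1-t)^{2}$, yields
\begin{align*}
\partial_{t} m(z,t) = \tfrac{1}{(1-t)^{2}}\,\mathrm{Cov}\!\left(Z_{1},\,\nabla\log p_{Z_{0}}(W)\cdot(z-Z_{1})\,\big|\,Z_{t}=z\right).
\end{align*}
The key algebraic trick is the identity $z - Z_{1} = (1-t)(W - Z_{1})$, which rewrites the integrand as $(1-t)\,\nabla\log p_{Z_{0}}(W)\cdot(W - Z_{1})$; the $W$-piece is then uniformly bounded by $\nu+d$ and the $Z_{1}$-piece by $\tfrac{\nu+d}{2\sqrt{\nu}}\|Z_{1}\|$. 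Substituting and bounding the covariance by raw first and second moments of $Z_{1}$ produces the last term of the proposition, with $B_{2}$ entering through $\mathbb{E}[\|Z_{1}\|^{2}\mid z]$ and $3B_{1}^{2}$ arising from expanding $\mathrm{Cov}$ into raw moments.

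The main obstacle is uniform-in-$(z,t)$ control of $\mathbb{E}[\|Z_{1}\|^{k}\mid Z_{t}=z]$ for $k\in\{1,2\}$, which is precisely what $B_{1},B_{2}$ encode. This requires a two-sided analysis of the marginal $p_{Z_{t}}(z) = (1-t)^{-d}\!\int \pi_{1}^{D}(z_{1})\,p_{Z_{0}}\!\bigl(\tfrac{z-tz_{1}}{1-t}\bigr)dz_{1}$: a lower bound that remains non-degenerate even when $\|z\|$ is large, obtained by exploiting the polynomial lower tail of the Student-$t$ density on a ball around $z_{1}=z/t$; and a matching uniform-in-$z$ upper bound on $\int \|z_{1}\|^{k}\pi_{1}^{D}(z_{1})\,p_{Z_{0}}\!\bigl(\tfrac{z-tz_{1}}{1-t}\bigr)dz_{1}$. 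After centering the integrand at $z/t$, the combined tail decays like $\|z_{1}\|^{-(\alpha+\nu+d)}$, and the hypothesis $\alpha \ge 2d+\nu+2$ provides just enough slack to absorb the weight $\|z_{1}\|^{2}$ and the $t^{-d}$ volume factor while keeping the resulting bounds uniform over $\mathbb{R}^{d}$ and $t\in[0,T]$. This quantitative tail bookkeeping, rather than the differentiation step, is where the bulk of the technical work resides.
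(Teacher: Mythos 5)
Your proposal follows essentially the same route as the paper's proof: the same decomposition $v^{D}(z,t)=\frac{1}{1-t}\bigl(m(z,t)-z\bigr)$, differentiation under the integral giving covariance formulas for $\nabla_{z}m$ and $\partial_{t}m$ (the paper computes these by hand in Lemmas~\ref{L_Lip_x_conditional_exp} and~\ref{L_Lip_t_conditional_exp}; your Stein/score packaging is the same object, since the paper's auxiliary variable $Y$ is exactly $-\frac{1}{1-t}\nabla\log p_{Z_{0}}(W)$), the same exploitation of the uniformly bounded Student-$t$ score and of $|\nabla\log p_{Z_{0}}(w)\cdot w|\le\nu+d$ (your identity $z-Z_{1}=(1-t)(W-Z_{1})$ is a cleaner substitute for the paper's case split $\|z_{1}\|\lessgtr\|x\|/2$), and the same reduction of everything to uniform-in-$(z,t)$ conditional moment bounds $B_{1},B_{2}$, which is the content of the paper's Proposition~\ref{P_Verify_Moment_Assumption}. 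Your constants come out in a slightly different form than the stated $L_{1}$ and the third temporal term (e.g.\ an extra conditional first-moment term from pairing $\nabla\log p_{Z_{0}}(W)\cdot W$ with $Z_{1}$), but since $B_{1},B_{2}$ are only required to exist, this is harmless.

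One step of your sketch of the moment bound would fail if executed literally: you propose to lower-bound the normalizing integral using the Student-$t$ kernel ``on a ball around $z_{1}=z/t$'' and claim the resulting bound is non-degenerate in $\|z\|$. Assumption~\ref{A_Data_Tail_Bound} gives only an \emph{upper} bound on $\pi_{1}^{D}$, so no usable target mass near $z/t$ can be assumed. The paper instead lower-bounds the normalizer by restricting to the bulk $\|z_{1}\|\le R$ near the origin and using the polynomial tail of the $t$-kernel there, obtaining a bound of order $\bigl((1-t)^{2}/((1-t)^{2}+\tfrac{2}{\nu}\|z\|^{2})\bigr)^{(\nu+d)/2}$, i.e.\ degenerate like $\|z\|^{-(\nu+d)}$. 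It is precisely this degeneracy, combined with the $\sim(\|z\|/t)^{d}$ volume of the region near $z/t$ and the $\|z_{1}\|^{2}$ weight, that accounts for the full exponent $\alpha\ge 2d+\nu+2$ (a genuinely non-degenerate lower bound would only call for $\alpha\ge 2d+2$). With that correction your tail bookkeeping matches the paper's and the rest of the argument goes through.
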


The proof is deferred to Appendix~\ref{Apendix_Proof_Lip}. To the best of our knowledge, the only prior work that controlled the temporal Lipschitzness of the vector field in order to bound discretization error is \cite{zhou2025an}. However, their analysis required the data distribution to have bounded support, whereas our result only assumes a polynomial tail bound.  
For spatial Lipschitzness, existing results either imposed stronger conditions on the data distribution \citep{zhou2025an, benton2023error, gao2024gaussian} or studied different problem settings \citep{cordero2025non}. We can now quantify the discretization error of t-Flow.

\begin{theorem}[Discretization Error of t-Flow]
    \label{T_Error_T_Flow}
    Consider t-Flow in Euclidean space. Let $\pi_{1}^{D}$ denote the data distribution supported on $\mathbb{R}^{d}$, and $\hat{\pi}_{T}^{D}$ be the law of generated sample $\overline{z}_{T}$ obtained by Euler discretization with constant step size $h$, up to time $T$ (see line \ref{Alg_Line_Output} of Algorithm \ref{Alg_Mirror_Student_t_flow}).  Under Assumption \ref{A_Data_Tail_Bound} with $\alpha \ge 2d + \nu + 2$, Assumption \ref{A_data_bounded_moment}, and Assumption \ref{A_vector_field_accuracy}, there exists a constant $D_{3}$, depending polynomially on $\frac{1}{1-T}$, $d$, $\nu$, and on $B_{1}, B_{2}, \mathbb{E}[\|Z_{1}\|^{2}], \mathbb{E}[\|Z_{0}\|^{2}]$, such that
    \begin{align*}
        W_{2}(\pi_{1}^{D}, \hat{\pi}_{T}^{D}) 
        \le \frac{e^{6L_{1}}}{L_{1}} \sqrt{h^{2} D_{3} + \varepsilon^{2}} 
        + (1-T)\sqrt{2\big(\mathbb{E}[\|Z_{1}\|^{2}] + \mathbb{E}[\|Z_{0}\|^{2}]\big)}.
    \end{align*}
\end{theorem}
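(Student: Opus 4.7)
The plan is to decompose the Wasserstein error into an \emph{early stopping} piece and a \emph{simulation} piece via the triangle inequality
\begin{align*}
    W_{2}(\pi_{1}^{D}, \hat{\pi}_{T}^{D}) \;\le\; W_{2}(\pi_{1}^{D}, \pi_{T}^{D}) \;+\; W_{2}(\pi_{T}^{D}, \hat{\pi}_{T}^{D}),
\end{align*}
where $\pi_{T}^{D}$ denotes the law of $Z_{T}$ under the exact flow. For the first term I would use the straight-line interpolation coupling $Z_{T} = (1-T)Z_{0} + T Z_{1}$, so that $Z_{1} - Z_{T} = (1-T)(Z_{1} - Z_{0})$; taking the $L^{2}$-norm and applying $(a+b)^{2}\le 2a^{2}+2b^{2}$ yields
\begin{align*}
    W_{2}(\pi_{1}^{D}, \pi_{T}^{D})^{2} \le (1-T)^{2}\,\mathbb{E}\|Z_{1}-Z_{0}\|^{2} \le 2(1-T)^{2}\bigl(\mathbb{E}\|Z_{1}\|^{2}+\mathbb{E}\|Z_{0}\|^{2}\bigr),
\end{align*}
producing exactly the second summand in the claimed bound.

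For the simulation error, I would introduce a synchronous coupling between the continuous trajectory $Z_{t}$ driven by the true field $v$ and the Euler iterates $\overline{Z}_{hk}$ driven by the learned field $\hat v$, both started from the same $Z_{0}\sim \pi_{0}^{D}$. Defining $e(t) := \mathbb{E}\|Z_{t}-\overline{Z}_{t}\|^{2}$ (where $\overline{Z}_{t}$ is the piecewise-linear interpolant of the Euler iterates), I would differentiate to obtain
\begin{align*}
    \tfrac{d}{dt} e(t) \le 2\,\mathbb{E}\bigl\langle Z_{t}-\overline{Z}_{t},\; v(Z_{t},t) - \hat v(\overline{Z}_{hk},hk)\bigr\rangle
\end{align*}
for $t\in[hk,h(k+1)]$, and then split the inner difference into three pieces: (i) a spatial Lipschitz piece $v(Z_{t},t)-v(\overline{Z}_{t},t)$ controlled by $L_{1}\|Z_{t}-\overline{Z}_{t}\|$ via Proposition~\ref{P_Lipschitz_x_t}(1); (ii) a one-step error $v(\overline{Z}_{t},t)-v(\overline{Z}_{hk},hk)$ bounded by combining the spatial Lipschitz bound with the temporal regularity bound from Proposition~\ref{P_Lipschitz_x_t}(2), using $\|\overline{Z}_{t}-\overline{Z}_{hk}\|\le h\|\hat v(\overline{Z}_{hk},hk)\|$; and (iii) the approximation error $v(\overline{Z}_{hk},hk)-\hat v(\overline{Z}_{hk},hk)$, whose mean-square is bounded by $\varepsilon^{2}$ via Assumption~\ref{A_vector_field_accuracy}. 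Applying Cauchy–Schwarz and Young's inequality reduces the differential inequality to the form $\tfrac{d}{dt}e(t) \le (2L_{1}+1)e(t) + h^{2}\Phi(t) + \varepsilon^{2}$, where $\Phi(t)$ collects the second moments of $\|Z_{t}\|$, $\|\overline{Z}_{t}\|$, and of $\|\hat v(\overline{Z}_{hk},hk)\|^{2}$ that arise from the temporal term.

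The main technical obstacle is uniformly controlling the auxiliary moment $\Phi(t)$ on $[0,T]$ so that it can be absorbed into the constant $D_{3}$. I would handle this by a bootstrapping argument: first use the spatial Lipschitz bound on $v$ and Proposition~\ref{P_Lipschitz_x_t} to show $\mathbb{E}\|Z_{t}\|^{2}$ stays bounded polynomially in $\frac{1}{1-T}$, $d$, $\nu$, $B_{1}$, $B_{2}$, $\mathbb{E}\|Z_{0}\|^{2}$, $\mathbb{E}\|Z_{1}\|^{2}$; then a similar Grönwall step on $\mathbb{E}\|\overline{Z}_{t}\|^{2}$ (using $\hat v$ being $\varepsilon$-close to $v$) gives the same polynomial control; finally, bounding $\|\hat v\|^{2}\le 2\|v\|^{2}+2\|v-\hat v\|^{2}$ and using the explicit growth of $v$ from Proposition~\ref{P_Lipschitz_x_t} yields a uniform bound $\Phi(t)\le D_{3}$. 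Feeding this back into Grönwall's inequality on $[0,T]$ with rate $2L_{1}+1$ gives
\begin{align*}
    e(T) \le \frac{e^{(2L_{1}+1)T}-1}{2L_{1}+1}\bigl(h^{2}D_{3} + \varepsilon^{2}\bigr) \le \frac{e^{6L_{1}}}{L_{1}^{2}}\bigl(h^{2}D_{3}+\varepsilon^{2}\bigr)
\end{align*}
after elementary simplification on $[0,1]$, so that $W_{2}(\pi_{T}^{D},\hat\pi_{T}^{D}) \le \sqrt{e(T)} \le \frac{e^{6L_{1}}}{L_{1}}\sqrt{h^{2}D_{3}+\varepsilon^{2}}$, completing the bound. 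The polynomial nature of $D_{3}$ in $\frac{1}{1-T},d,\nu,B_{1},B_{2}$ is inherited from the explicit polynomial structure of the bounds in Proposition~\ref{P_Lipschitz_x_t} and the moment propagation, and the hardest accounting is in keeping the growth of $\|\hat v\|^{2}$ finite independently of $h$ and $\varepsilon$ so that the $h^{2}$ scaling of the discretization term is preserved.
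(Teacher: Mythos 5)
Your overall skeleton (triangle inequality into early-stopping plus simulation error, synchronous coupling, three-way splitting of the drift difference, Young plus Gr\"onwall) matches the paper's proof, and the early-stopping term is handled identically. The genuine gap is in how you anchor the one-step discretization term and the moment control it forces. You split off $v(\overline{Z}_{t},t)-v(\overline{Z}_{hk},hk)$ along the \emph{Euler} trajectory, which requires uniform bounds on $\mathbb{E}\|\overline{Z}_{hk}\|^{2}$ and $\mathbb{E}\|\hat v(\overline{Z}_{hk},hk)\|^{2}$, and you claim a ``similar Gr\"onwall step'' gives the same \emph{polynomial} control. That claim is false as stated: the only growth bound available is $\|v(x,t)\|\le \frac{1}{1-T}(\|x\|+B_{1})$, so a Gr\"onwall argument on $\mathbb{E}\|\overline{Z}_{t}\|^{2}$ over $[0,T]$ produces a multiplicative factor of order $e^{cT/(1-T)^{2}}$, i.e.\ exponential in $\frac{1}{1-T}$ (equivalently, exponential in an $L_{1}$-type quantity). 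Your $D_{3}$ would therefore not have the polynomial dependence on $\frac{1}{1-T}$ asserted in the theorem, so the statement as written is not established by this route. (A repair inside your framework would be to bound $\mathbb{E}\|\overline{Z}_{t}\|^{2}\le 2\,\mathbb{E}\|Z_{t}\|^{2}+2e(t)$ and absorb the resulting $h^{2}e(t)$ contribution back into the Gr\"onwall inequality for $e(t)$, but that step is not in your plan.)

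The paper avoids the issue entirely by anchoring the one-step term at the \emph{true} trajectory: it splits the drift difference as $v(Z_{t},t)-v(Z_{t_{i}},t_{i})$, plus the spatial Lipschitz term $v(Z_{t_{i}},t_{i})-v(\overline{Y}_{t_{i}},t_{i})$, plus the approximation term. The first piece is handled by Lemma~\ref{L_straightness}: writing it as $\int_{t_{i}}^{t}\frac{d}{ds}v(Z_{s},s)\,ds$, bounding $\|\frac{d}{ds}v\|$ via Proposition~\ref{P_Lipschitz_x_t} and the linear growth of $v$, and then using that $\mathrm{Law}(Z_{t})=\mathrm{Law}(tZ_{1}+(1-t)Z_{0})$, so $\mathbb{E}\|Z_{t}\|^{2}\le 2(\mathbb{E}\|Z_{1}\|^{2}+\mathbb{E}\|Z_{0}\|^{2})$ follows directly from Assumption~\ref{A_data_bounded_moment}. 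No moments of the Euler iterates or of $\hat v$ along them are ever needed, which is exactly what yields a $D_{3}$ depending only polynomially on $\frac{1}{1-T},d,\nu,B_{1},B_{2}$ and the second moments. You should adopt this splitting (or the repair noted above) for the moment-control step; the rest of your argument then goes through essentially as in the paper.
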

The proof is provided in Appendix~\ref{Apendix_Proof_Discretiz_Thm}.  The error bound consists of two terms. The first term captures the discretization error (from Euler steps of size $h$) and the neural network approximation error (measured by $\varepsilon$). Both vanish as $h \to 0$ and $\varepsilon \to 0$.The second term corresponds to early stopping error, which decreases to zero as $T \to 1$. Thus, by taking $T$ close to $1$ and ensuring accurate vector field approximation with sufficiently small step size, we can guarantee high-quality samples.

We now compare our result with recent related works.  \cite{bansal2024wasserstein} did not analyze the Lipschitz properties of the velocity field, but instead imposed them as assumptions.  \cite{zhou2025an} established both spatial and temporal Lipschitzness and further analyzed neural network approximation, but required the data distribution to have bounded support.  We note that the exponential dependence on the spatial Lipschitz constant $L_{1}$ arises due to non-convexity, and also appears in existing analyses \citep{bansal2024wasserstein, zhou2025an}. It is plausible that this exponential dependency could be improved to polynomial dependence by following the probabilistic coupling strategy in \cite{chen2023probability}, though the resulting algorithm is not purely deterministic.

\subsection{Primal Space Guarantee for Mirror Flow Matching}



We next obtain the following primal space guarantee. First note that that the primal space $(\mathcal{K}, g^{P})$ and the dual space $(\mathbb{R}^{d}, g^{D})$ are isometric. Hence, we have the following result.
\begin{lemma}\label{lem:primallipschitz}
If the vector field $v^{D}$ is $L_{1}$ Lipschitz in the dual space $(\mathbb{R}^{d}, g^{D})$, it is $L_{1}$ Lipschitz in the primal space $(\mathcal{K}, g^{P})$ (under the squared Hessian metric).
\end{lemma}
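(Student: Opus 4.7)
The plan is to transfer the Lipschitz estimate from the dual space to the primal space by exploiting the isometry $\nabla\Psi:(\mathcal{K}, g^{P})\to(\mathbb{R}^{d}, g^{D})$ discussed in Section~\ref{Sec_Geom}, which is the essential structural fact.

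First, I would recall the precise correspondence. For $x\in\mathcal{K}$, set $z = \nabla\Psi(x)$. The differential of the isometry identifies tangent vectors: for any $u\in T_{x}\mathcal{K}$, its image in $T_{z}\mathbb{R}^{d}$ is $\nabla^{2}\Psi(x)\,u$, and by definition of $g^{P} = (\nabla^{2}\Psi)^{2}$ we have the norm identity
\begin{equation*}
    \|u\|_{g^{P}(x)}^{2} = u^{\top}(\nabla^{2}\Psi(x))^{2}u = \|\nabla^{2}\Psi(x)u\|^{2} = \|\nabla^{2}\Psi(x)u\|_{g^{D}(z)}^{2}.
\end{equation*}
By Proposition~\ref{P_vec_field_primal_dual} (or Equation~\eqref{Eq_Primal_Vect_field}), the primal and dual velocity fields satisfy $v^{D}(z,t) = \nabla^{2}\Psi(x)\,v^{P}(x,t)$, so $v^{D}$ is exactly the pushforward of $v^{P}$ under the isometry. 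Since $(\mathbb{R}^{d},g^{D})$ is flat, the isometry also makes $(\mathcal{K},g^{P})$ flat, so parallel transport along geodesics in the primal space is implemented simply by the differential of the isometry; this is the intrinsic device we use to compare $v^{P}(x_{1},t)$ and $v^{P}(x_{2},t)$ living in different tangent spaces.

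Next, given any $x_{1},x_{2}\in\mathcal{K}$ with dual images $z_{1} = \nabla\Psi(x_{1})$, $z_{2} = \nabla\Psi(x_{2})$, the isometry yields the distance identity $d_{g^{P}}(x_{1},x_{2}) = d_{g^{D}}(z_{1},z_{2}) = \|z_{1}-z_{2}\|$. Applying the norm identity to the difference (compared via parallel transport / the isometry), I would write
\begin{equation*}
    \|v^{P}(x_{1},t) - v^{P}(x_{2},t)\|_{g^{P}} = \|v^{D}(z_{1},t) - v^{D}(z_{2},t)\|_{g^{D}} \le L_{1}\|z_{1}-z_{2}\| = L_{1}\, d_{g^{P}}(x_{1},x_{2}),
\end{equation*}
where the middle inequality is the hypothesized $L_{1}$-Lipschitzness of $v^{D}$ in $(\mathbb{R}^{d},g^{D})$. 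This chain gives exactly the claimed $L_{1}$-Lipschitzness of $v^{P}$ in $(\mathcal{K},g^{P})$.

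The only delicate point — and the place I expect a reviewer to push back — is the meaning of ``Lipschitz vector field'' on a manifold, since the two tangent spaces $T_{x_{1}}\mathcal{K}$ and $T_{x_{2}}\mathcal{K}$ are a priori distinct. The natural resolution, which I would state explicitly, is to define the Lipschitz norm via parallel transport along the (unique) geodesic joining $x_{1}$ and $x_{2}$ in the flat metric $g^{P}$; the isometry $\nabla\Psi$ then intertwines this parallel transport with ordinary Euclidean translation in $\mathbb{R}^{d}$, which is what legitimizes the first equality above. Once this convention is fixed, the proof is essentially a one-line consequence of the isometry and no further computation is needed.
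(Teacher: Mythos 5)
Your proposal is correct and follows essentially the same route as the paper's proof: both use the isometry $\nabla\Psi$ between $(\mathcal{K},(\nabla^{2}\Psi)^{2})$ and $(\mathbb{R}^{d},I_{d})$, the correspondence $v^{D}(z,t)=\nabla^{2}\Psi(x)v^{P}(x,t)$, and parallel transport (which the flat dual geometry reduces to Euclidean translation) to compare velocity vectors at distinct base points, after which the norm and distance identities transfer the Lipschitz constant unchanged. Your explicit remark on how to interpret ``Lipschitz vector field'' across tangent spaces is exactly the role played by the transport operator $P_{x_{2}}^{x_{1}}$ in the paper's computation, so there is no gap.
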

To relate Assumption \ref{A_Data_Tail_Bound} with the distribution in primal space, we impose the following condition.
\begin{assumption}
\label{A_Primal_Data_Tail} (Primal Space Probability Density Function).
    Denote $\pi_{Euc}^{P}(x)$ as the probability density function for $\pi_{1}^{P}$ in the primal space, under Euclidean metric. Assume that $\pi_{Euc}^{P}(x)$ is smooth and that there exists a small constant $\delta_{0}$ such that  $\sup_{x \in \mathcal{K} \backslash \mathcal{K}_{\delta}} \pi_{Euc}^{P}(x) \le C_{pdf} \delta^{\gamma}, \forall \delta \le \delta_{0}$.
\end{assumption}

\begin{theorem}\label{thm:primalg}
    Let $\hat{\pi}_{T}^{P}$ be the law of output samples generated by Algorithm \ref{Alg_Mirror_Student_t_flow} (i.e., the law of $\overline{x}_{T}$ in Line \ref{Alg_Line_Primal_Output}).
    Under Assumption \ref{A_vector_field_accuracy} and \ref{A_Primal_Data_Tail}, with $\kappa \le \frac{\gamma}{2d + \nu + 2}$, and we further require $\kappa < \frac{\beta}{2}$, 
    there exists constant $L_{1}, D_{3}$ and $M \coloneqq \sqrt{2\big(\mathbb{E}[\|Z_{1}\|^{2}] + \mathbb{E}[\|Z_{0}\|^{2}]\big)}$ such that 
    \begin{align*}
        W_{2}(\pi_{1}^{P}, \hat{\pi}_{T}^{P}) \le \frac{e^{6L_{1}}}{L_{1}}\sqrt{h^{2} D_{3}  + \varepsilon^{2}} + 
        (1-T)M. 
    \end{align*}
\end{theorem}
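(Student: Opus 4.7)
The plan is to push the bound into the dual space, where Theorem~\ref{T_Error_T_Flow} already provides a discretization estimate, and then translate the result back. By Proposition~\ref{P_Wasserstein_Primal_Dual_Tail_Bound}, since the regularized potential satisfies $\nabla^{2}\Psi \succeq I_{d}$ on $\mathcal{K}$, we have $W_{2}(\pi_{1}^{P}, \hat{\pi}_{T}^{P}) \le W_{2,\Psi}(\pi_{1}^{P}, \hat{\pi}_{T}^{P})$. By definition $W_{2,\Psi}(\mu, \nu) = W_{2}((\nabla \Psi)_{\#} \mu, (\nabla \Psi)_{\#} \nu)$, and since $\hat{\pi}_{T}^{P}$ is obtained from the Euler output $\hat{\pi}_{T}^{D}$ via $\nabla \Psi^{*}$ (line~\ref{Alg_Line_Primal_Output}), pushing forward by $\nabla \Psi$ recovers $\hat{\pi}_{T}^{D}$. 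Thus it suffices to bound $W_{2}(\pi_{1}^{D}, \hat{\pi}_{T}^{D})$ through Theorem~\ref{T_Error_T_Flow}.

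To invoke Theorem~\ref{T_Error_T_Flow} for the dual target $\pi_{1}^{D}$ I need to verify its three hypotheses. Assumption~\ref{A_vector_field_accuracy} is given. For Assumption~\ref{A_data_bounded_moment}, the Student-$t$ prior satisfies $\mathbb{E}[\|Z_{0}\|^{2}] < \infty$ for $\nu > 2$, while $\mathbb{E}[\|Z_{1}\|^{2}] < \infty$ follows from the tail bound $P(\|\nabla \Psi(X)\| \ge R) \le C R^{-\beta/\kappa}$ in Proposition~\ref{P_Wasserstein_Primal_Dual_Tail_Bound} combined with Lemma~\ref{L_tail_moment}, provided $\beta/\kappa > 2$, i.e., the condition $\kappa < \beta/2$ in the statement. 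The heart of the argument is Assumption~\ref{A_Data_Tail_Bound}: I need $\pi_{1}^{D}(z) \lesssim \|z\|^{-\alpha}$ with $\alpha \ge 2d + \nu + 2$. For this I would perform a careful change of variables, $\pi_{1}^{D}(z) = \pi_{Euc}^{P}(x)\,|\det \nabla^{2}\Psi(x)|^{-1}$ with $x = \nabla \Psi^{*}(z)$. Setting $\delta(x) := \min_{i} (-\phi_{i}(x))$, large $\|z\|$ corresponds to $x \to \partial \mathcal{K}$, and the identity $\nabla \Psi(x) = \sum_{i} (-\phi_{i}(x))^{-\kappa} \nabla \phi_{i}(x) + x$ yields $\|z\| \asymp \delta^{-\kappa}$ along approaches to the boundary. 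Combining Assumption~\ref{A_Primal_Data_Tail} ($\pi_{Euc}^{P}(x) \le C_{pdf} \delta^{\gamma}$) with the polynomial-in-$\delta$ contribution of $|\det \nabla^{2}\Psi|^{-1}$ (whose degree is controlled by $d$ and $\kappa$) and translating $\delta$ back into $\|z\|$, the condition $\kappa \le \gamma/(2d+\nu+2)$ is chosen precisely so that the resulting polynomial tail exponent satisfies $\alpha \ge 2d + \nu + 2$.

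With all hypotheses verified, Theorem~\ref{T_Error_T_Flow} yields $W_{2}(\pi_{1}^{D}, \hat{\pi}_{T}^{D}) \le \tfrac{e^{6L_{1}}}{L_{1}} \sqrt{h^{2} D_{3} + \varepsilon^{2}} + (1-T) M$, which combined with the primal-to-dual contraction from the first paragraph gives the claimed bound. The main obstacle I expect is the bookkeeping in the change-of-variables step: tracking how $|\det \nabla^{2}\Psi(x)|$ behaves as $x$ approaches $\partial \mathcal{K}$, particularly near corners of $\mathcal{K}$ where several facets contribute simultaneously and the Hessian develops large eigenvalues in different directions. Once this tail translation is handled carefully, so that both polynomial-tail decay (for Assumption~\ref{A_Data_Tail_Bound}) and second-moment integrability (for Assumption~\ref{A_data_bounded_moment}) are secured, the remaining steps reduce to direct appeals to the dual discretization theorem and the isometry between $(\mathcal{K}, g^{P})$ and $(\mathbb{R}^{d}, g^{D})$.
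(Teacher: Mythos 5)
Your overall route is the same as the paper's: contract from primal to dual via Proposition~\ref{P_Wasserstein_Primal_Dual_Tail_Bound} (using $\nabla^{2}\Psi \succeq I_{d}$ so that $W_{2} \le W_{2,\Psi}$ and $(\nabla\Psi)_{\#}\hat{\pi}_{T}^{P} = \hat{\pi}_{T}^{D}$), verify the hypotheses of Theorem~\ref{T_Error_T_Flow} for the dual target, and conclude. Your reading of the role of $\kappa < \beta/2$ (second moment of $Z_{1}$ via the dual tail bound and Lemma~\ref{L_tail_moment}) is correct and in fact more explicit than the paper's own proof.

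The one place where your plan stalls — the change-of-variables bookkeeping for $|\det \nabla^{2}\Psi(x)|$ near corners of $\mathcal{K}$ — is not actually an obstacle, and the paper's Lemma~\ref{Lma_Relate_Primal_Density} shows why. Since $\nabla^{2}\Psi(x) \succeq I_{d}$ everywhere on $\mathcal{K}$, you have $\det \nabla^{2}\Psi(x) \ge 1$, so the Jacobian factor in $\pi_{1}^{D}(z) = \pi_{Euc}^{P}(x)\,\big(\det \nabla^{2}\Psi(x)\big)^{-1}$ can only shrink the dual density; the large eigenvalues you worried about near corners work in your favor, and the crude one-sided bound $\pi_{1}^{D}(z) \le \pi_{Euc}^{P}(x)$ suffices. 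You also do not need the two-sided relation $\|z\| \asymp \delta^{-\kappa}$: the one-sided estimate $\sup_{x \in \mathcal{K}_{\delta}} \|\nabla\Psi(x)\| \le C'/\delta^{\kappa}$ (from the bounded gradients of the $\phi_{i}$) already shows that $\|z\| \ge C'/\delta^{\kappa}$ forces $x \in \mathcal{K}\setminus\mathcal{K}_{\delta}$, whence Assumption~\ref{A_Primal_Data_Tail} gives $\pi_{1}^{D}(z) \le C_{pdf}\delta^{\gamma} \lesssim \|z\|^{-\gamma/\kappa}$, and $\kappa \le \gamma/(2d+\nu+2)$ yields the exponent needed in Assumption~\ref{A_Data_Tail_Bound}. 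With that observation your proposal closes and matches the paper's argument.
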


The proof is provided in Appendix \ref{Apendix_Proof_Discretiz_Thm_Primal} and essentially follows by Proposition \ref{P_Wasserstein_Primal_Dual_Tail_Bound} and Theorem \ref{T_Error_T_Flow}.

\section{Experiments}

We demonstrate the effectiveness of our approach by performing numerical simulation (see section \ref{Sec_Numerical_sim}) and real world data experiments on AFHQv2 dataset (see section \ref{Sec_AFHQ}). The numerical simulation is performed on a personal laptop using a CPU. The real world data experiments were performed on a single A100 GPU.

\subsection{Numerical simulation}
\label{Sec_Numerical_sim}

We build on the experimental setup of \citet{li2025gauge} and conduct numerical simulations on two representative constrained generative modeling tasks. 
The first task is a $10$-dimensional polytope problem, defined as $
\{x \in \mathbb{R}^{10}: a^\top_{i} x < b_{i}, \; i = 1, 2, \ldots, 30\}$, with constraints loaded from a pre-specified data file from~\cite{li2025gauge}. The target distribution is a uniform mixture of Gaussians, where the means are partly sampled at random and partly human-designed to stress-test the model (e.g., $(-3, -3, 3, 3, \ldots, -3) \in \mathbb{R}^{10}$), and covariances are fixed to $0.4 I_{10}$.  
The second task is a $6$-dimensional $L_{2}$ ball problem, defined as $\{x \in \mathbb{R}^{6}: \|x\|^{2} < 144\}$, with target distributions generated in the same manner as in the polytope case.  

We implemented our method with $\kappa = 0.3$ and used a $t$-Flow prior with $\nu = 10$. 
As shown in Table~\ref{Table_Simulation_10d_poly} and Table~\ref{Table_Simulation_6d_ball}, our approach consistently outperforms both Gauge Flow Matching \citep{li2025gauge} and Reflected Flow Matching (RFM) \citep{xie2024reflected}. 
Across both tasks, our method achieves lower KL divergence and smaller Maximum Mean Discrepancy (MMD) values, while simultaneously guaranteeing sample feasibility. 
For the $L_{2}$ ball case, Gauge Flow Matching is omitted since it coincides with Reflected Flow Matching.  

These experiments highlight the advantages of our method. By jointly choosing mirror maps and priors based on careful analysis, our approach achieves superior performance on numerical benchmarks while preserving feasibility by construction. The ability to obtain tighter divergence metrics under strict feasibility underscores its promise for high-dimensional constrained generative modeling, demonstrating robustness across geometries (polytope vs. $L_2$ ball) and scalability to practical domains where constraints are central.

\begin{table}[t]
\caption{Performance comparison with $10$-dimensional polytope constraints. Results are based on an average of $10$ runs. MMD values are scaled by $10^{-3}$.}
\label{Table_Simulation_10d_poly}
\begin{center}
\begin{tabular}{llll}
\multicolumn{1}{c}{\bf Method}  &\multicolumn{1}{c}{\bf MMD $\downarrow$ } &\multicolumn{1}{c}{\bf KL Divergence $\downarrow$} &\multicolumn{1}{c}{\bf Feasibility}
\\ \hline \\
\rowcolor{green!10} Mirror t-Flow         & $\bf 5.552 \pm 0.116$ & $\bf 0.339 \pm 0.021$ & $100\%$ \\
Mirror G-Flow             & $5.774 \pm 0.110$ & $0.379 \pm 0.028$ & $100\%$ \\
Gauge Vanilla \citep{li2025gauge}           & $7.490 \pm 0.068$ & $0.813 \pm 0.020$ & $90.653 \pm 0.284\%$ \\
Gauge Reflect \citep{li2025gauge}           & $7.527 \pm 0.072$ & $0.860 \pm 0.020$ & $100\%$ \\
RFM \citep{xie2024reflected}           & $5.868 \pm 0.173$ & $0.360 \pm 0.030$ & $98.259 \pm 0.123 \%$ \\
\end{tabular}
\end{center}
\end{table}

\begin{table}[t]
\caption{Performance comparison on $6$-dimensional $L_{2}$ ball constraints. Results are based on an average of $10$ runs. Performance metrics are scaled by $10^{-2}$.}
\label{Table_Simulation_6d_ball}
\begin{center}
\begin{tabular}{llll}
\multicolumn{1}{c}{\bf Method}  &\multicolumn{1}{c}{\bf MMD $\downarrow$ } &\multicolumn{1}{c}{\bf KL Divergence $\downarrow$} &\multicolumn{1}{c}{\bf Feasibility}
\\ \hline \\
\rowcolor{green!10} Mirror t-Flow         & $\bf 1.153 \pm 0.028 $ & $\bf 6.944 \pm 1.111$ & $100\%$ \\
Mirror G-Flow             & $1.221 \pm 0.035$ & $9.644 \pm 1.174$ & $100\%$ \\
RFM \citep{xie2024reflected}           & $1.393 \pm 0.039$ & $14.827 \pm 1.108$ & $100\%$ \\
\end{tabular}
\end{center}
\end{table}

\subsection{Real-data application: Watermarked image generation }
\label{Sec_AFHQ}


Following \citet{liu2023mirror}, we evaluate our method on the task of $64 \times 64$ watermarked image generation using the AFHQv2 dataset.  
We begin by generating parameters $(a_{i}, b_{i}, c_{i})$, which serve as user-specific private keys.  These parameters define a polytope 
$\mathcal{K} = \{x : c_{i} < a_{i}^{\top} x < b_{i}\}$,  
where an image can be vectorized and checked for feasibility: an image lying inside $\mathcal{K}$ is verifiably generated by the model. During training, we first watermark the AFHQv2 images by projecting them (with added noise) onto the polytope, thereby producing a watermarked dataset.  
We then use these watermarked images as training data and compare the performance of Mirror Diffusion Models (MDM) \citep{liu2023mirror} with our proposed Mirror $t$-Flow approach.

A crucial component is the initialization used for the models.  We first train both methods with random neural network initialization under a limited training budget ($24$ hours). We set the mirror map parameter as $\kappa = 0.1$ for our method, with random initialization. We first report the CMMD metric \citep{jayasumana2024rethinking}. CMMD combines CLIP embedding with Maximum Mean Discrepancy metric and is considered more reliable than FID for evaluating generative models. With $10,000$ generated images, our approach achieves a CMMD score of $0.177$, which is competitive with the MDM baseline \citep{liu2023mirror}, calculated to be $0.152$.  Nevertheless, as shown in Figure~\ref{rand_init}, our method already produces visually high-quality samples within a limited training budget, demonstrating strong potential for further improvements with better initializations.

Towards that, in Table~\ref{Table_real_data} we next report results when the models are initialized at EDM \citep{karras2022elucidating} checkpoint for AFHQv2 dataset; the corresponding sample images are displayed in in Figure~\ref{Fig_Watermark_EDMinit}. We note that in this case, our method achieves superior CMMD and FID scores, requiring a smaller amount of training time. Finally, we remark that if we initialize at the checkpoint for a flow matching model from \cite{lee2024improving}, the FID (50k) can achieve $3.14$ after $1.5$ hours of training. This value is similar to $3.05$ reported in \cite{liu2023mirror}, while fully executing their scheduled number of iterations could result in an estimated training time up to several hundred hours in our experimental setup.

\begin{table}[t]
\caption{Performance comparison on watermarked image generation on the AFHQ2 dataset. Both implementations are initialized at EDM \citep{karras2022elucidating} checkpoint. For MDM, we use the code from \cite{liu2023mirror}. For flow matching, we apply the training framework from \cite{lee2024improving}.}
\label{Table_real_data}
\begin{center}
\begin{tabular}{llll}
\multicolumn{1}{c}{\bf Method}  &\multicolumn{1}{c}{\bf FID (50k)$\downarrow$ } &\multicolumn{1}{c}{\bf CMMD} &\multicolumn{1}{c}{\bf training time}
\\ \hline \\
\rowcolor{green!10} Mirror Flow ($\kappa = 0.05$)    & $4.27$ & $0.023$ & $3$ hours \\
Mirrod Diffusion Model \citep{liu2023mirror}             & $7.29$  & $0.170$ &$13$ hours \\
\end{tabular}
\end{center}
\end{table}

\textcolor{blue}{}

\begin{figure}[t]
\centering
\subfigure
[With random initialization]{
        \includegraphics[width=0.45\linewidth]{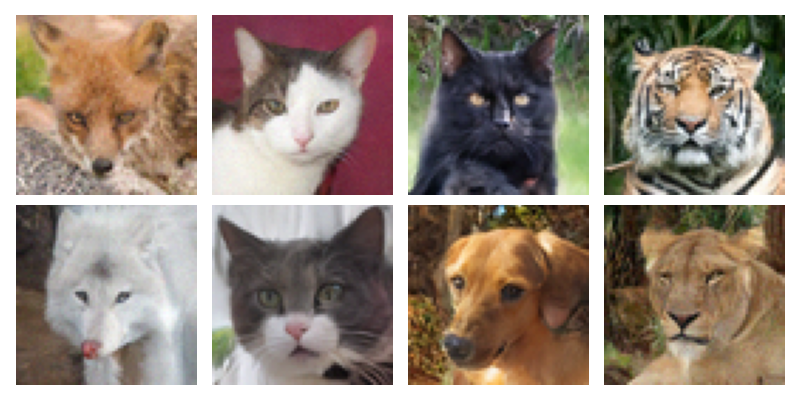}
    \label{rand_init}
    }
\subfigure
[With EDM checkpoint initialization]{
        \includegraphics[width=0.45\linewidth]{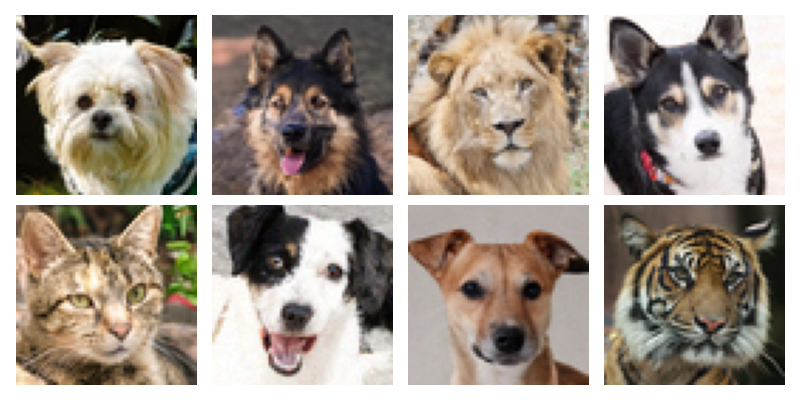}
    \label{Fig_Watermark_EDMinit}
    }
\caption{Samples of generated watermarked images from the AFHQv2 $64 \times 64$ dataset. 
Constraint satisfaction were checked with built-in functions of \cite{liu2023mirror}.\vspace{-0.1in}} 
\label{Fig_Watermark}
\end{figure}

\section{Conclusion}  
We introduced \emph{t-Flow}, a flow-matching framework with Student-$t$ priors, and established rigorous guarantees on both spatial Lipschitzness and temporal regularity of the underlying velocity field . Our analysis yielded the first error bounds for flow matching under polynomial tail assumptions, thereby extending prior results beyond bounded-support assumptions. We further demonstrated that $t$-Flow provides robust sample quality in practice, particularly in scenarios where Gaussian priors fail to capture heavy-tailed structures. Beyond technical guarantees, our results emphasize that successful generative modeling on complex domains requires a careful co-design of mirror maps and priors, rather than defaulting to standard choices. This perspective opens up several promising avenues. One direction is exploring adaptive choices of degrees of freedom in the $t$-prior could yield even more flexibility, enabling flows that automatically adapt to local tail behavior of the data. Another is extending $t$-Flow to constrained domains with non-convex geometry, potentially leveraging landing techniques. On the theory front, improving the exponential dependence on Lipschitz constants, for example via probabilistic couplings or randomized flow strategies is interesting. Finally integrating $t$-Flow with hybrid diffusion–flow architectures and energy-based models offers yet another exciting path, combining the complementary strengths of these paradigms. 

\section*{Acknowledgments}
Krishnakumar Balasubramanian was supported in part by NSF grant DMS-2413426. Shiqian Ma was supported in part by NSF grants CCF-2311275 and ECCS-2326591, and ONR grant N00014-24-1-2705.




\bibliographystyle{abbrvnat}
\bibliography{ref}

\appendix

\section{Visual Illustration of Methodological Challenges}\label{sec:visualmetchallenge}

We illustrate the benefits of our approach in Figure~\ref{N_Numerical2}. The constraint set is a polytope $\mathcal{K} = \{x \in \mathbb{R}^{2}: Ax < b\}$ with  
\[
    A^\top = \begin{pmatrix}
        1 & -1 & 1 & -5 & -1/3 \\
        1 & -1 & -1 & 1 & 1 
    \end{pmatrix}, 
    \qquad 
    b^\top = \begin{pmatrix}
        10 & 30 & 1 & 90 & 5
    \end{pmatrix}.
\]  
The target $\pi_{1}$ is a mixture of three Gaussians, truncated to $\mathcal{K}$: $\mathcal{N}([-10, 0]^{T}, \diag(8, 2))$ with weight $0.6$, $\mathcal{N}([-15, -10]^{T}, \diag(1, 1))$ with weight $0.2$, and $\mathcal{N}([3, 3]^{T}, \diag(0.5, 0.25))$ with weight $0.2$. We compare G-flow (Gaussian prior) and t-flow (Student-$t$ prior) under both the log-barrier mirror map and our proposed regularized map (Figures~\ref{N_Numerical2_b}--\ref{N_Numerical2_e}), alongside samples from the true target (Figure~\ref{N_Numerical2_a}). Vector fields were parameterized by neural networks and simulated via Euler discretization ($h=0.1$) with early stopping. As shown in Figure~\ref{N_Numerical2}, our approach achieves robust mode recovery and faithful constrained sampling, consistently outperforming Gaussian-based flow methods.  

\begin{figure}[h]
\centering
\subfigure
[Ground truth]{
        \includegraphics[width=0.31\textwidth]{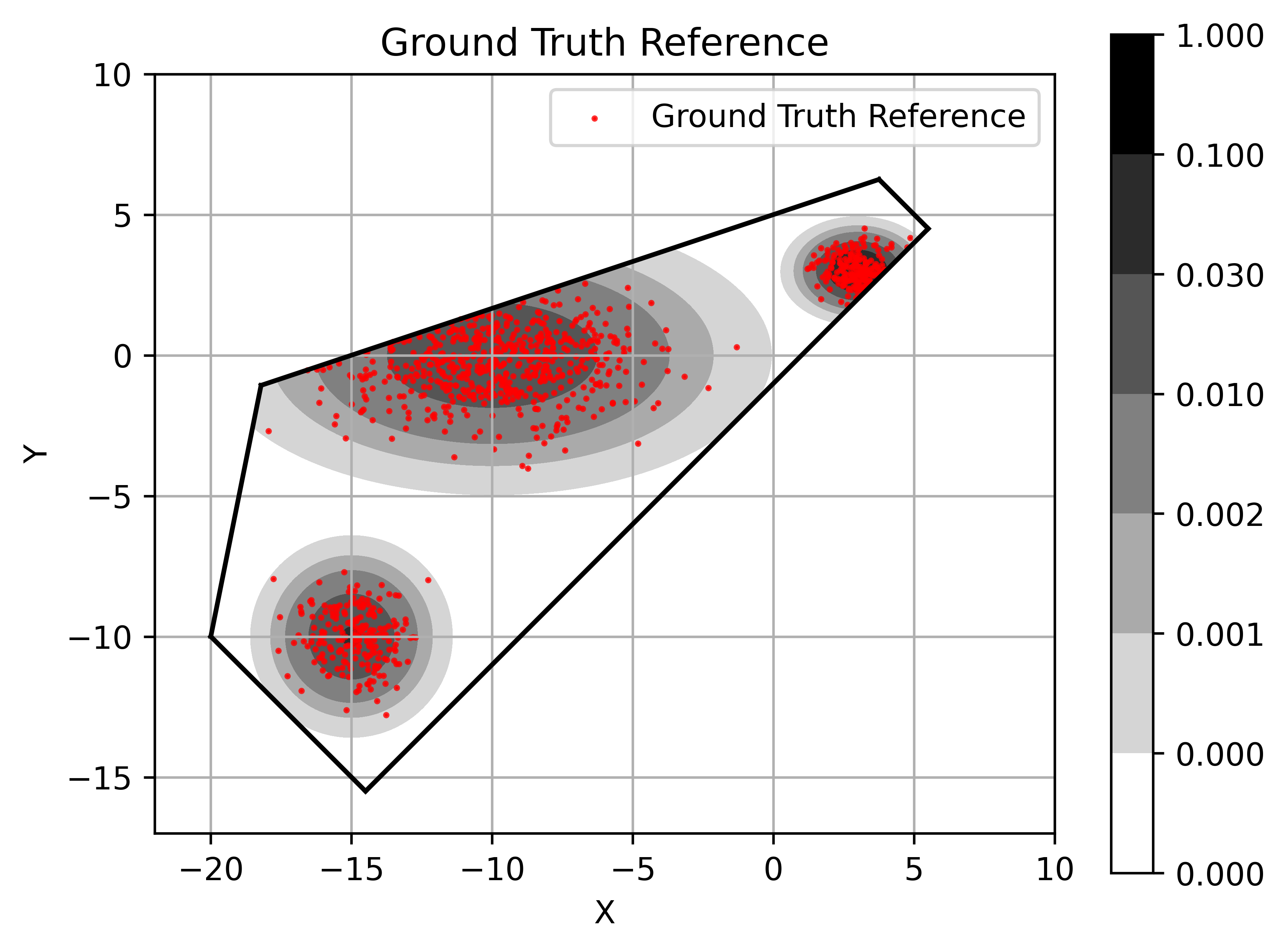}
        \label{N_Numerical2_a}
    }
\subfigure
[G-flow Log Barrier]{
        \includegraphics[width=0.31\textwidth]{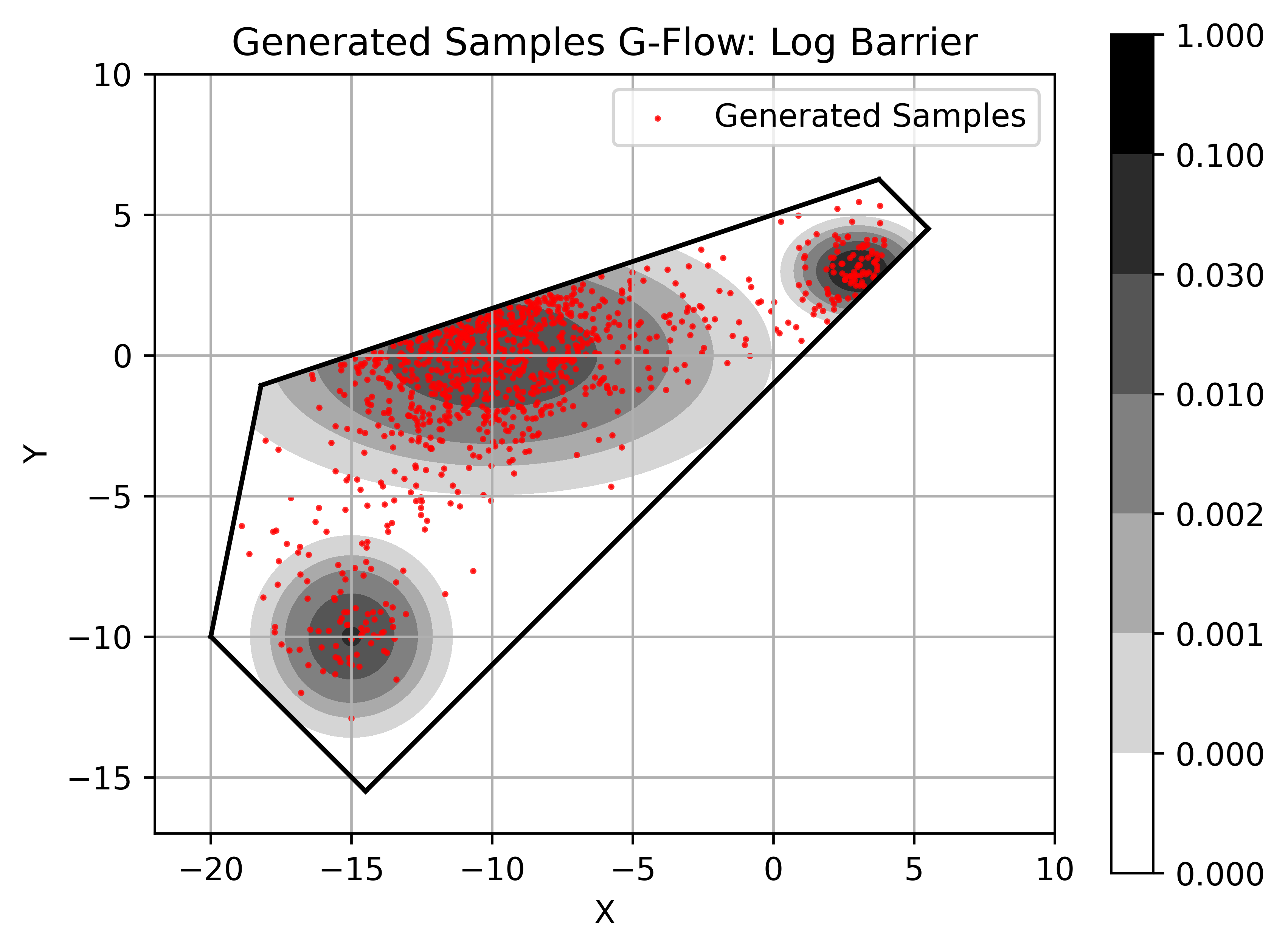}
        \label{N_Numerical2_b}
    }
\subfigure
[t-flow Log Barrier]{
        \includegraphics[width=0.31\textwidth]{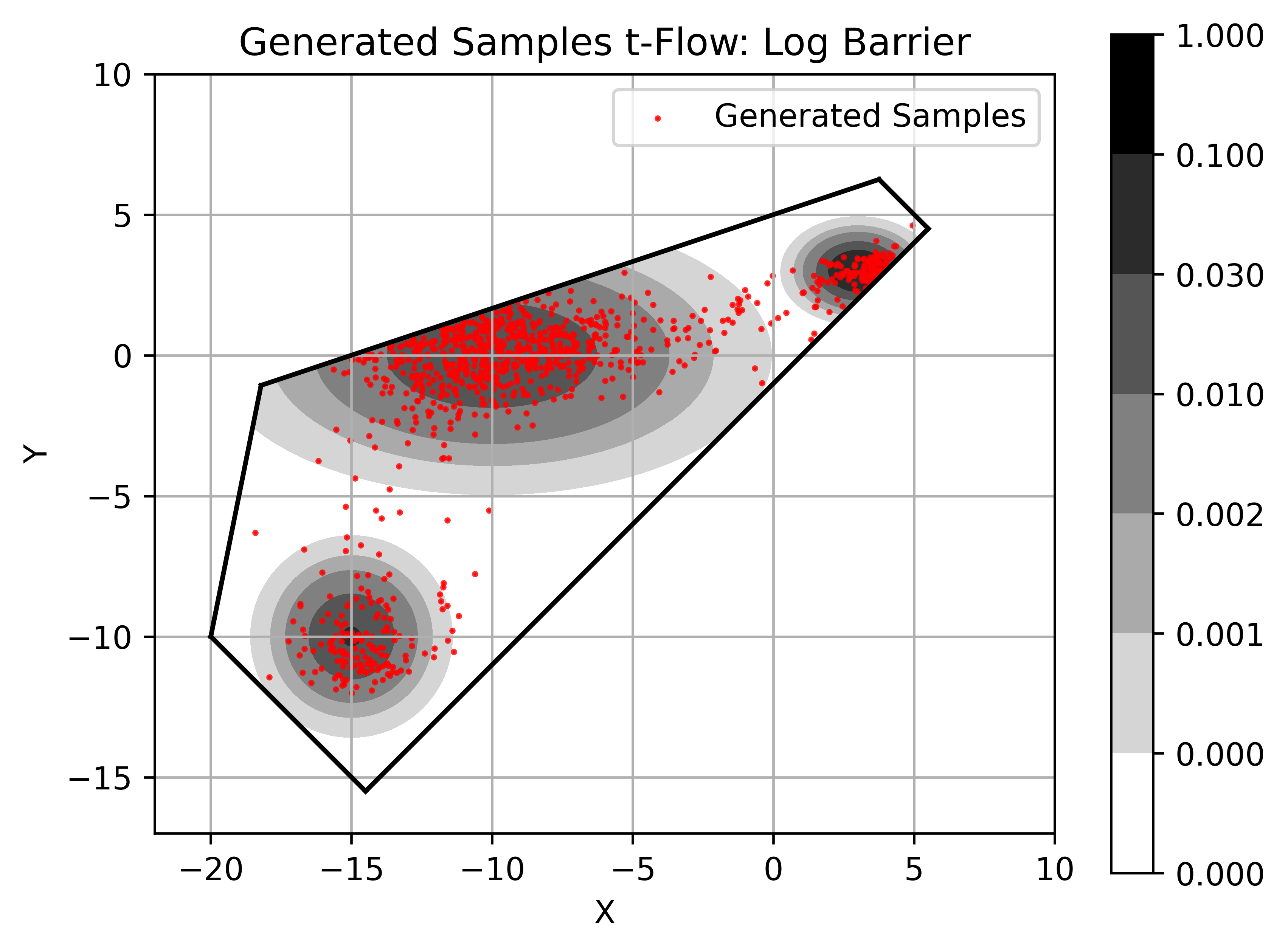}
        \label{N_Numerical2_c}
    }
\subfigure
[G-flow proposed mirror map]{
        \includegraphics[width=0.31\textwidth]{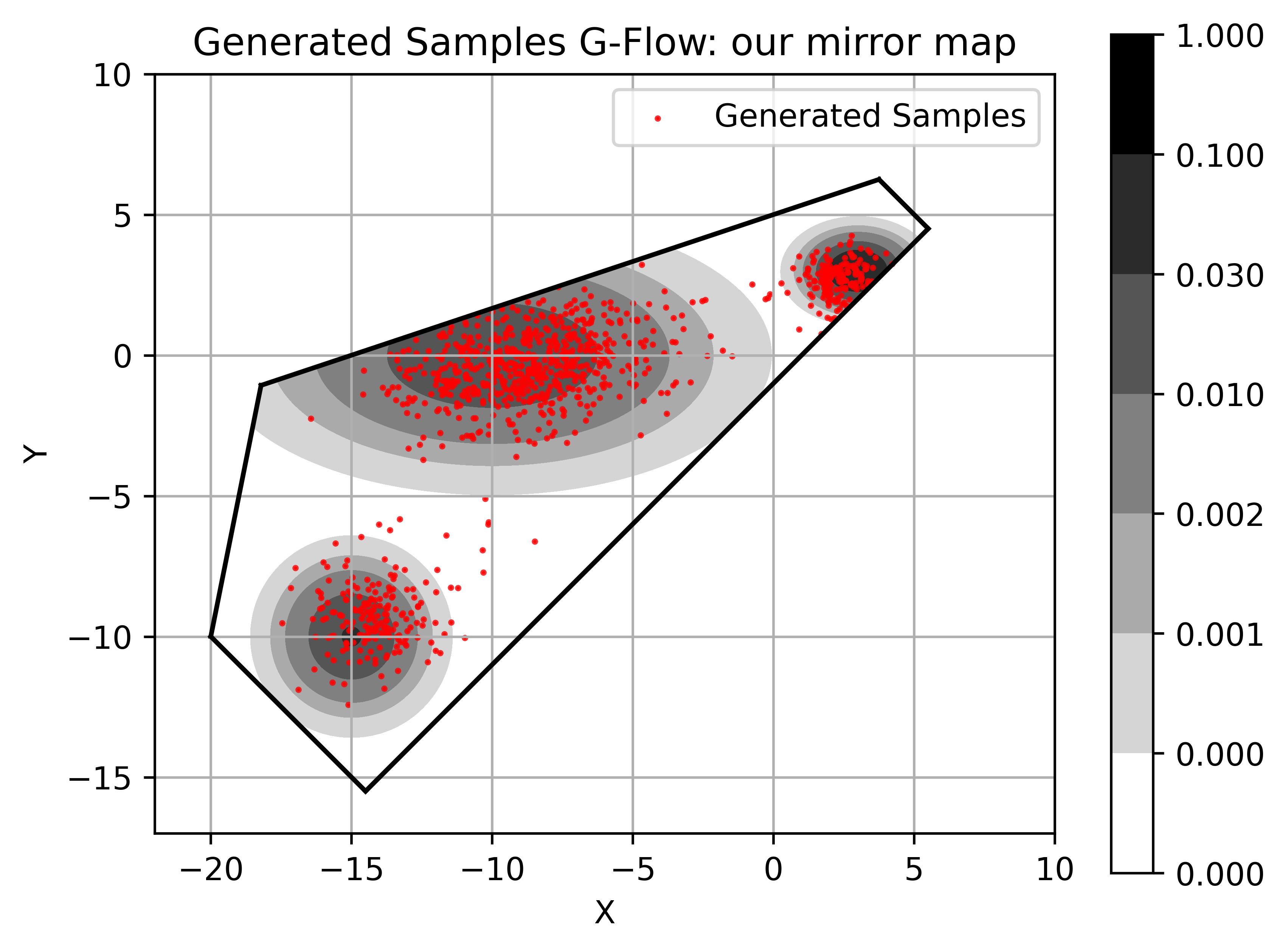}
        \label{N_Numerical2_d}
    }
\subfigure
[T-flow proposed mirror map]{
        \includegraphics[width=0.31\textwidth]{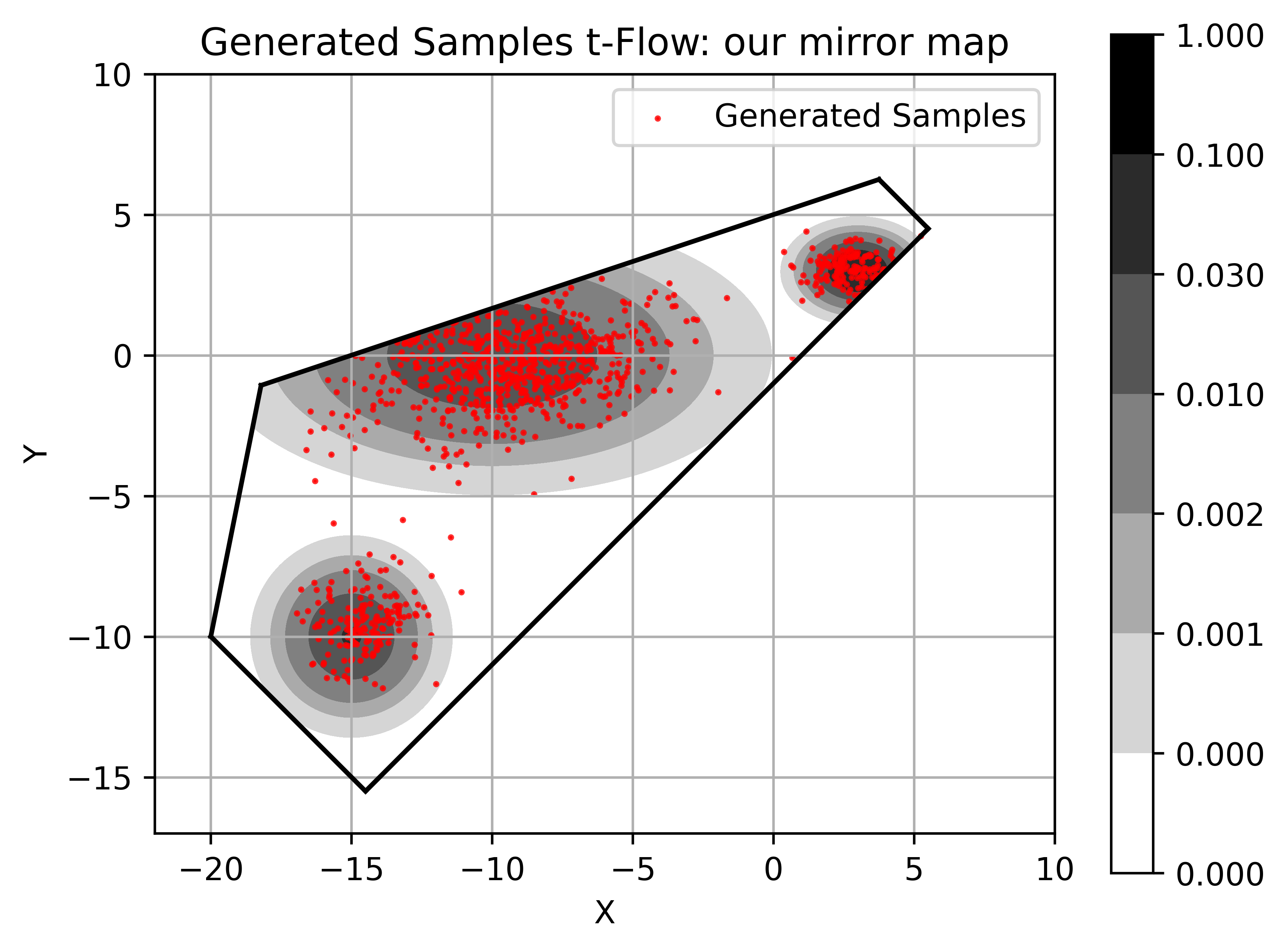}
        \label{N_Numerical2_e}
    }
\caption{Figure~\ref{N_Numerical2_a} shows the ground-truth reference distribution. Figures~\ref{N_Numerical2_b} and~\ref{N_Numerical2_c} illustrate that the log-barrier method performs poorly (both with G or t-flow), 
while Figure~\ref{N_Numerical2_d} demonstrates that G-flow (with our mirror map) fails to capture the mode centered near $(-10,0)$. In contrast, Figure~\ref{N_Numerical2_e} shows that t-flow with our mirror map covers the target distribution better. All results are obtained with discretization step size $h=0.1$. See also Figure~\ref{N_Numerical2_bdry} for a zoomed-in illustration near the boundary.} 
\label{N_Numerical2}
\end{figure}

\begin{figure}[t]
\centering
\subfigure
[Ground truth]{
        \includegraphics[width=0.31\textwidth]{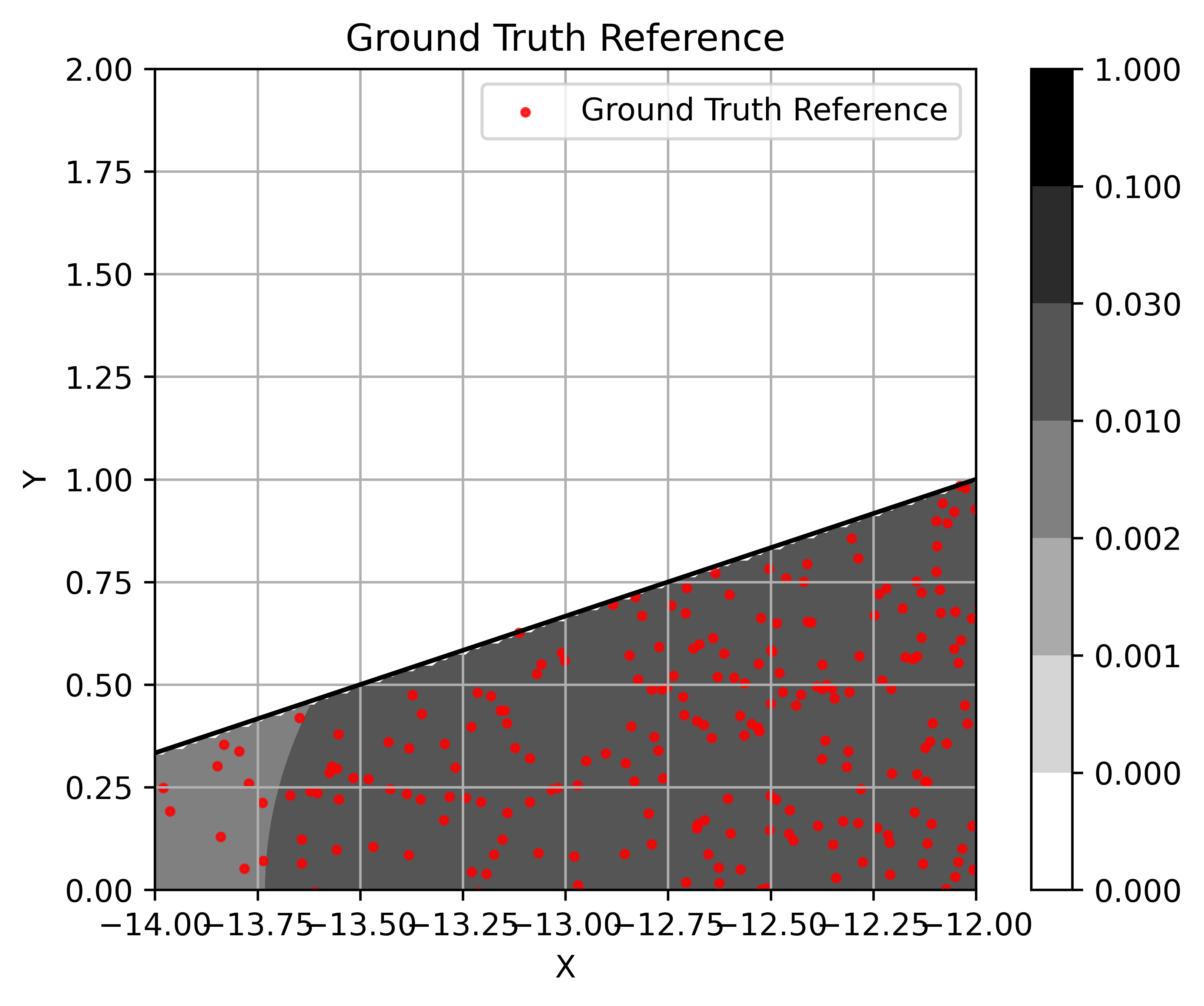}
        \label{N_Numerical2_bdry_a}
    }
\subfigure
[G-flow proposed mirror map]{
        \includegraphics[width=0.31\textwidth]{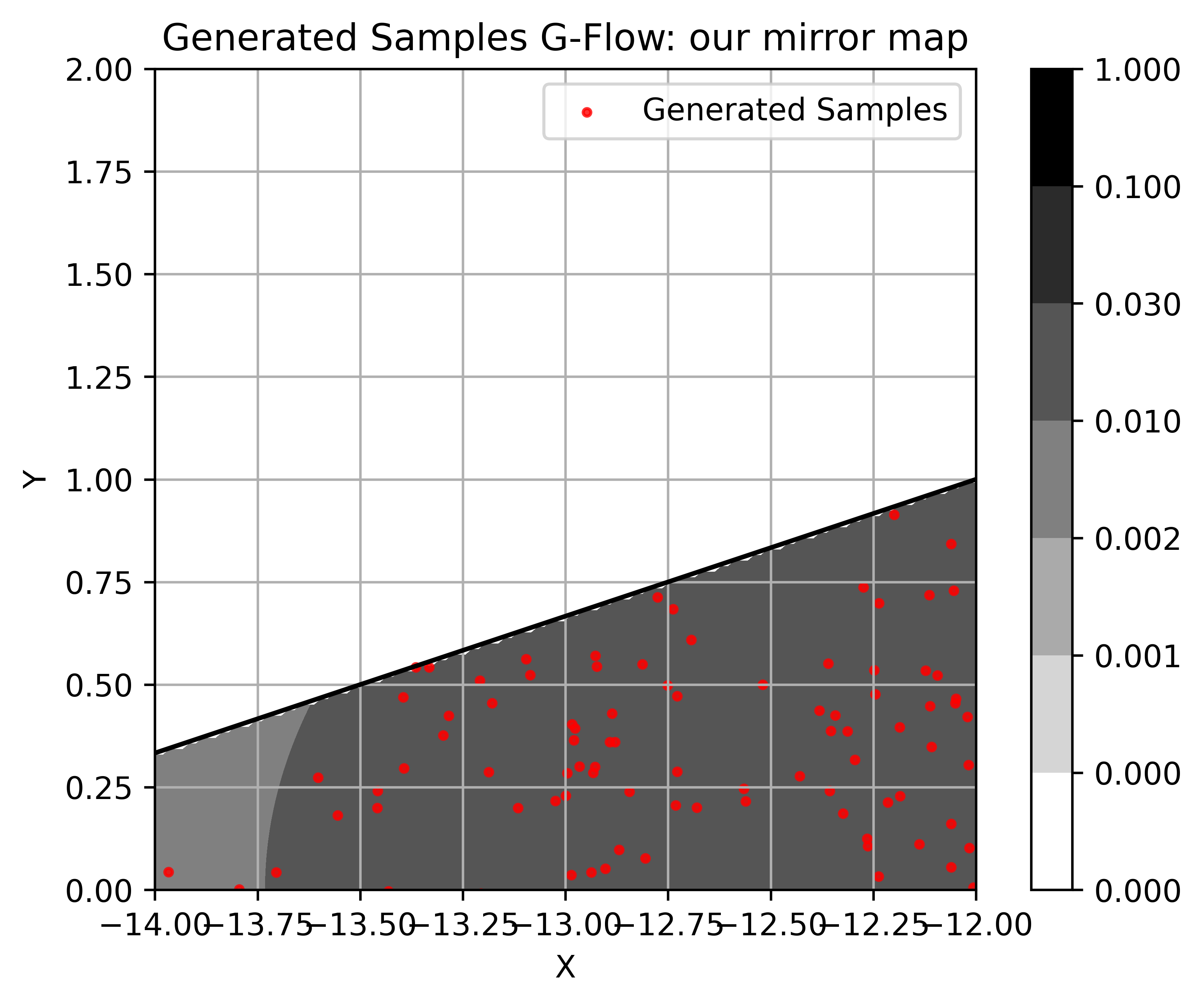}
        \label{N_Numerical2_bdry_b}
    }
\subfigure
[t-flow proposed mirror map]{
        \includegraphics[width=0.31\textwidth]{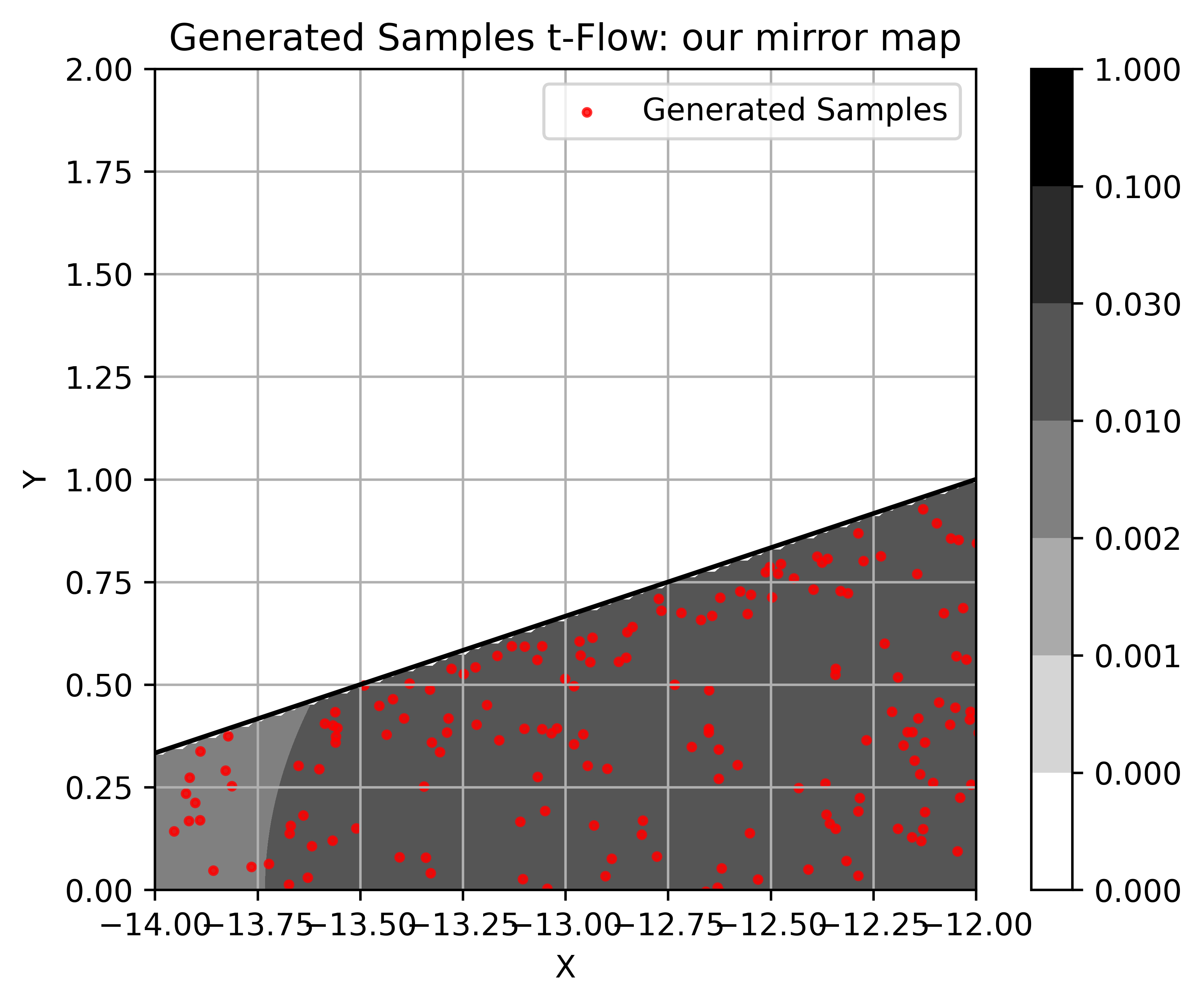}
        \label{N_Numerical_bdry_c}
    }
\caption{We generate a total of $10{,}000$ samples, but for visualization we only display those lying in the boundary region $x \in [-14, -12], y \in [0, 2]$. Figure~\ref{N_Numerical2_bdry_a} shows the ground-truth reference distribution. 
Figures~\ref{N_Numerical2_bdry_b} and~\ref{N_Numerical_bdry_c} demonstrate that, near the boundary, t-flow provides a closer approximation to the ground truth than G-flow. 
}
\label{N_Numerical2_bdry}
\end{figure}

\section{Examples verifying Proposition~\ref{P_Wasserstein_Primal_Dual_Tail_Bound}}\label{sec:propeg}

Proposition~\ref{P_Wasserstein_Primal_Dual_Tail_Bound} can be specialized to several classical examples of convex sets.  
\begin{enumerate}[leftmargin=-0.001in]
    \item \textbf{$L_{2}$ ball.}  
    Consider the closed Euclidean ball $\mathcal{K} = \{x \in \mathbb{R}^{d} : \|x\| \le R\}$.  
    Define the mirror potential $        \Psi(x) = - \frac{1}{1-\kappa} \big(R^{2} - \|x\|^{2}\big)^{1-\kappa} + \tfrac{1}{2}\|x\|^{2}.$ In this case the barrier function is $\phi(x) = \|x\|^{2} - R^{2}$, which is clearly smooth and convex. Moreover, its gradient is bounded on $\mathcal{K}$, satisfying the required assumptions.
    
    \item \textbf{Polytope.}  
    Let $\mathcal{K} = \{x \in \mathbb{R}^{d} : a_{i}^{T}x \le b_{i}, \, \forall i \in [m]\}$ be a polytope defined by $m$ linear inequalities.  
    Define the potential $        \Psi(x) = -\sum_{i=1}^{m} \frac{1}{1-\kappa} \big(b_{i} - a_{i}^{T}x\big)^{1-\kappa} + \tfrac{1}{2}\sum_{j=1}^{d} x_{j}^{2}$. Here the barrier functions are $\phi_{i}(x) = a_{i}^{T}x - b_{i}$. Each $\phi_{i}$ is affine (hence smooth and convex), with Hessian $\nabla^{2}\phi_{i}(x) = 0$, and its gradient is bounded uniformly. Thus the conditions are again satisfied. 
\end{enumerate}

\section{Visual Illustration corresponding Section~\ref{sec:ingredient2}}\label{sec:visualtdist}

We illustrate the blow-up phenomenon discussed in Section~\ref{sec:ingredient2}. In Figure \ref{Fig_Illu_Examp_a}, \ref{Fig_Illu_Examp_b}, \ref{Fig_Illu_Examp_c} we illustrate the $t \to 0$ limit, blows-up for small $t$, and $t \to 1$ limit respectively, for the G-flow. The corresponding Figure \ref{Fig_Illu_Examp_d}, \ref{Fig_Illu_Examp_e}, \ref{Fig_Illu_Examp_f} for the t-flow is more benign.
\begin{figure}[t]
\centering
\subfigure[G Prior: $t \to 0$]{
        \includegraphics[width=0.31\textwidth]{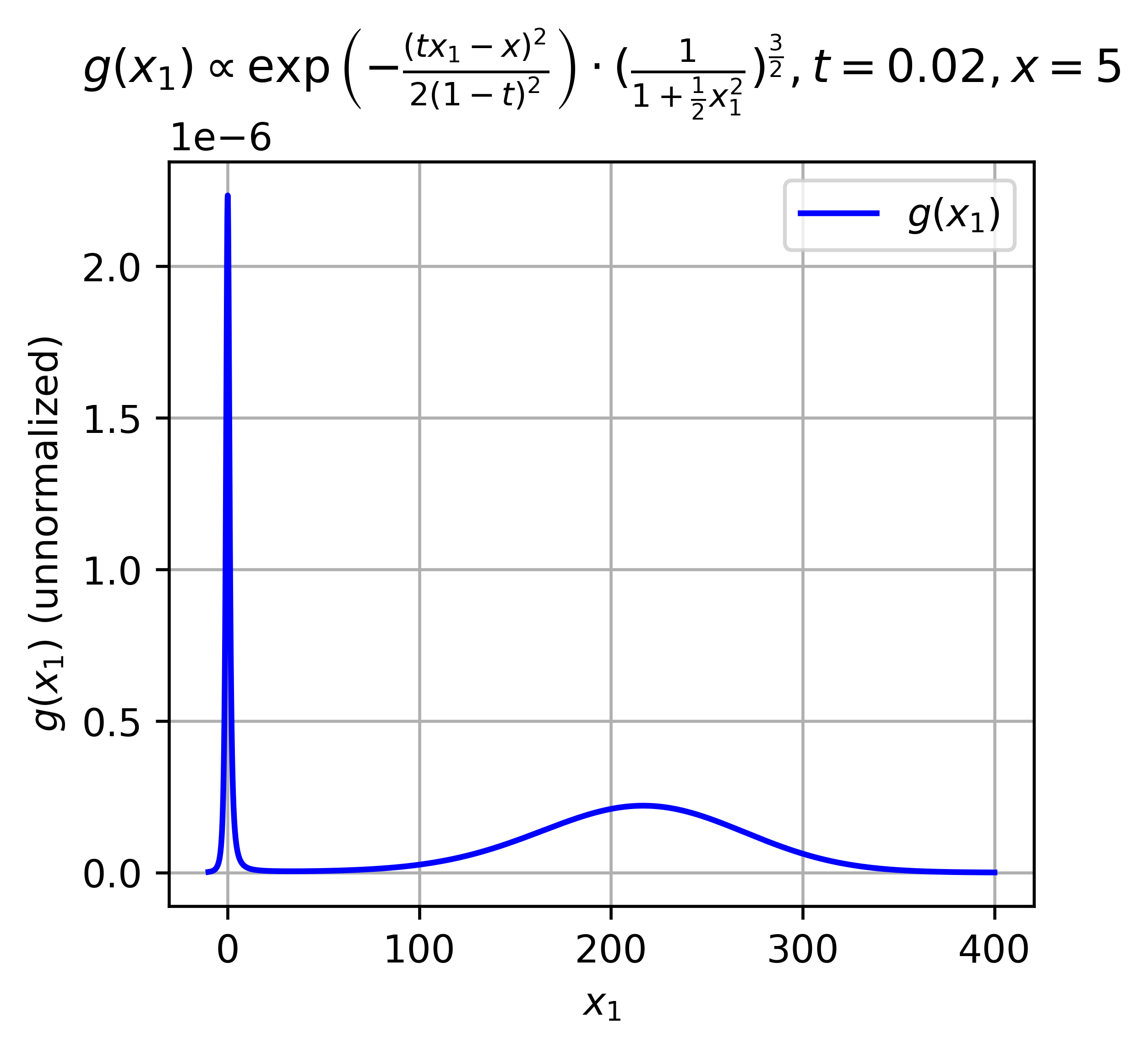}
        \label{Fig_Illu_Examp_a}
    }
\subfigure[G Prior: small $t$]{
        \includegraphics[width=0.31\textwidth]{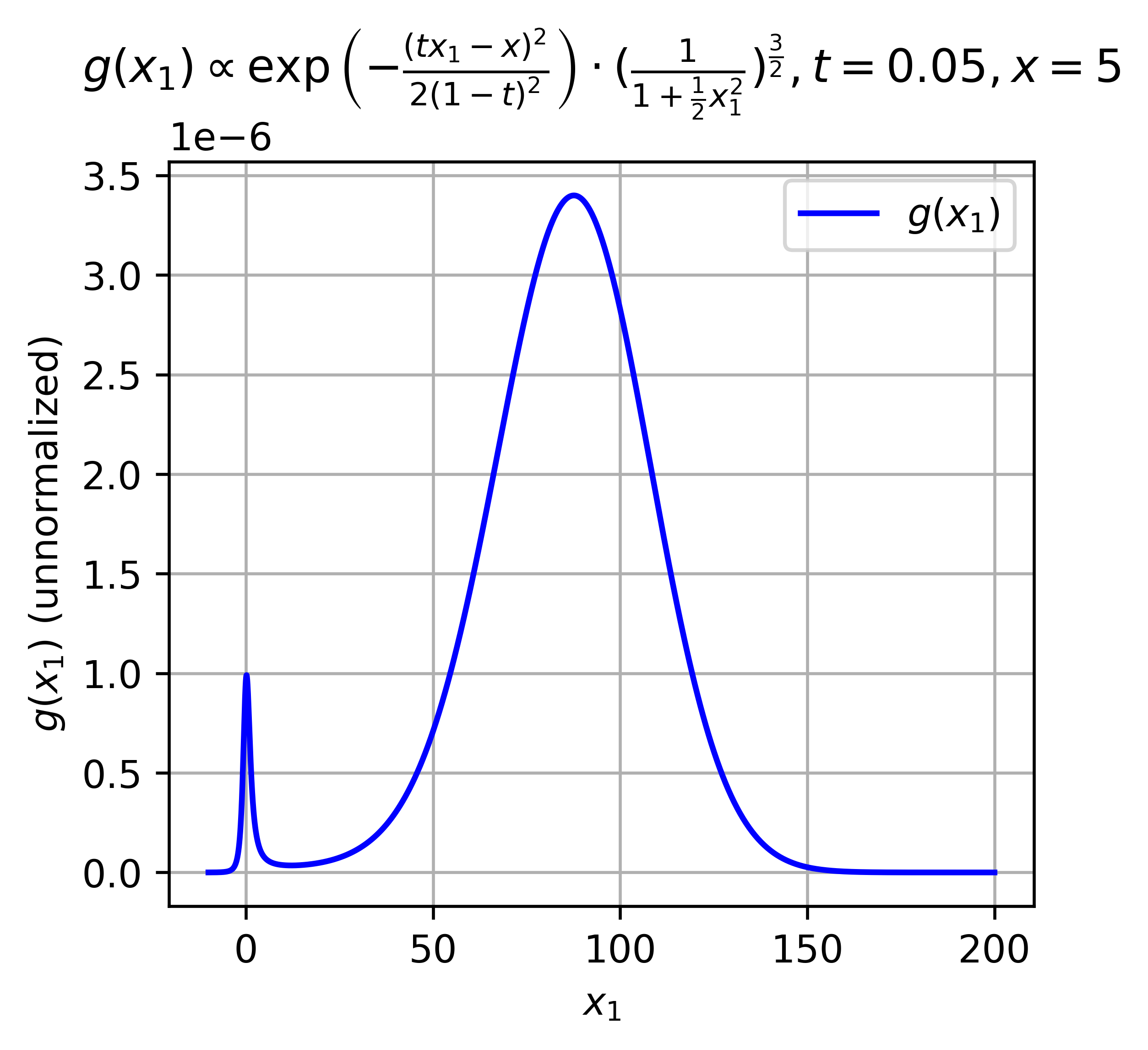}
        \label{Fig_Illu_Examp_b}
    }
\subfigure[G Prior: large $t$, large $x$]{
        \includegraphics[width=0.31\textwidth]{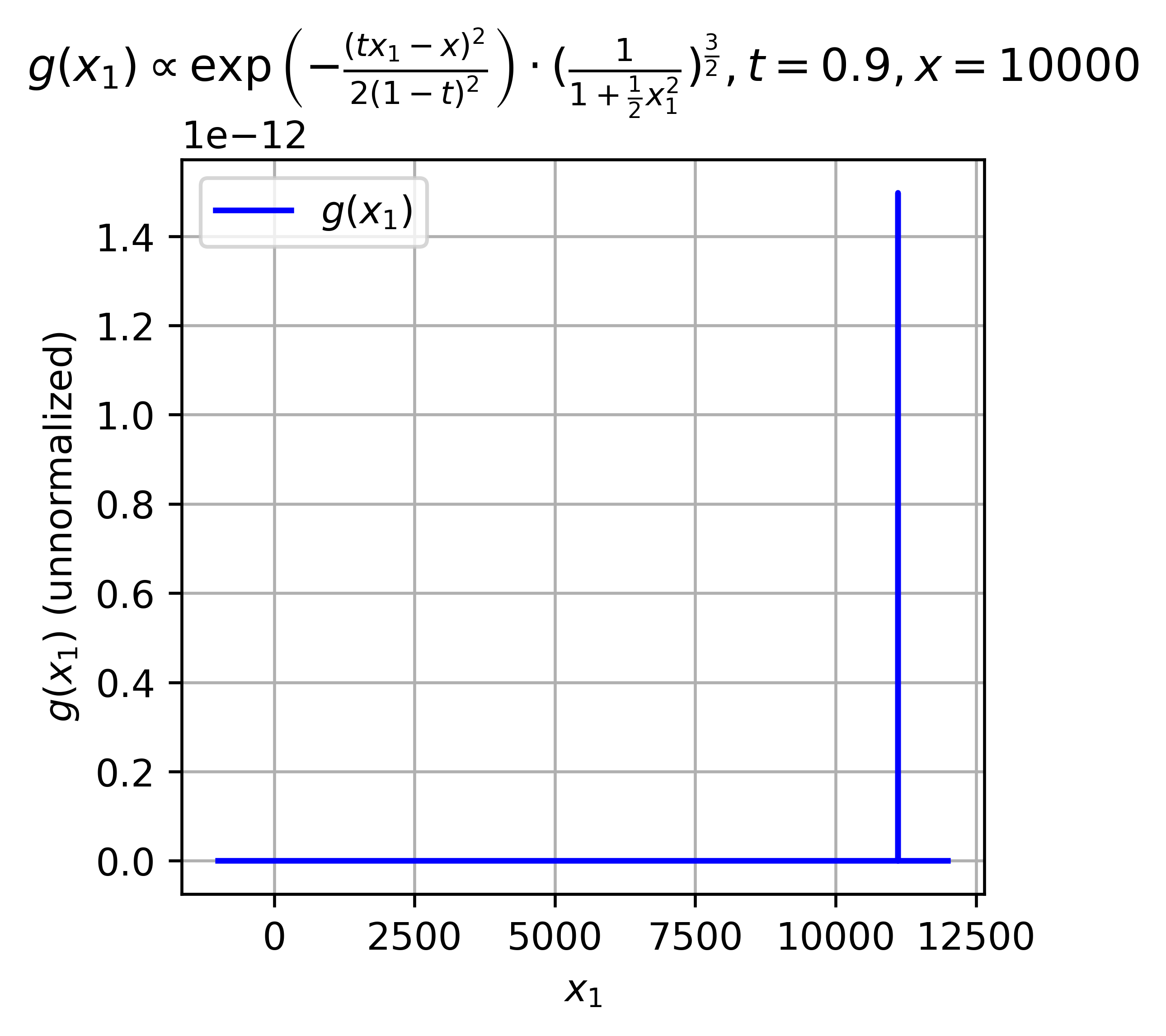}
        \label{Fig_Illu_Examp_c}
    }
\subfigure[t Prior: $t \to 0$]{
        \includegraphics[width=0.31\textwidth]{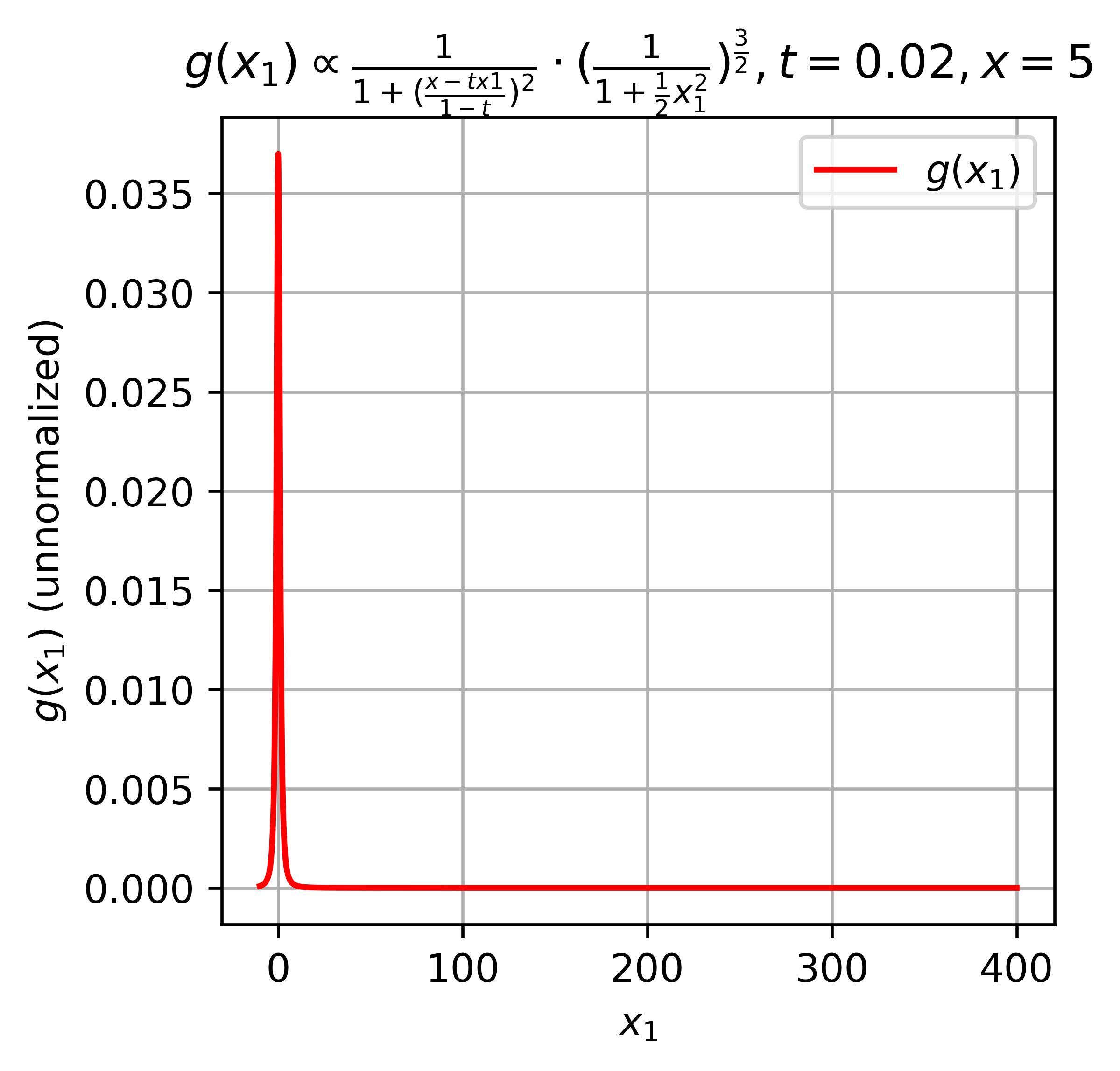}
        \label{Fig_Illu_Examp_d}
    }
\subfigure[t Prior: small $t$]{
        \includegraphics[width=0.31\textwidth]{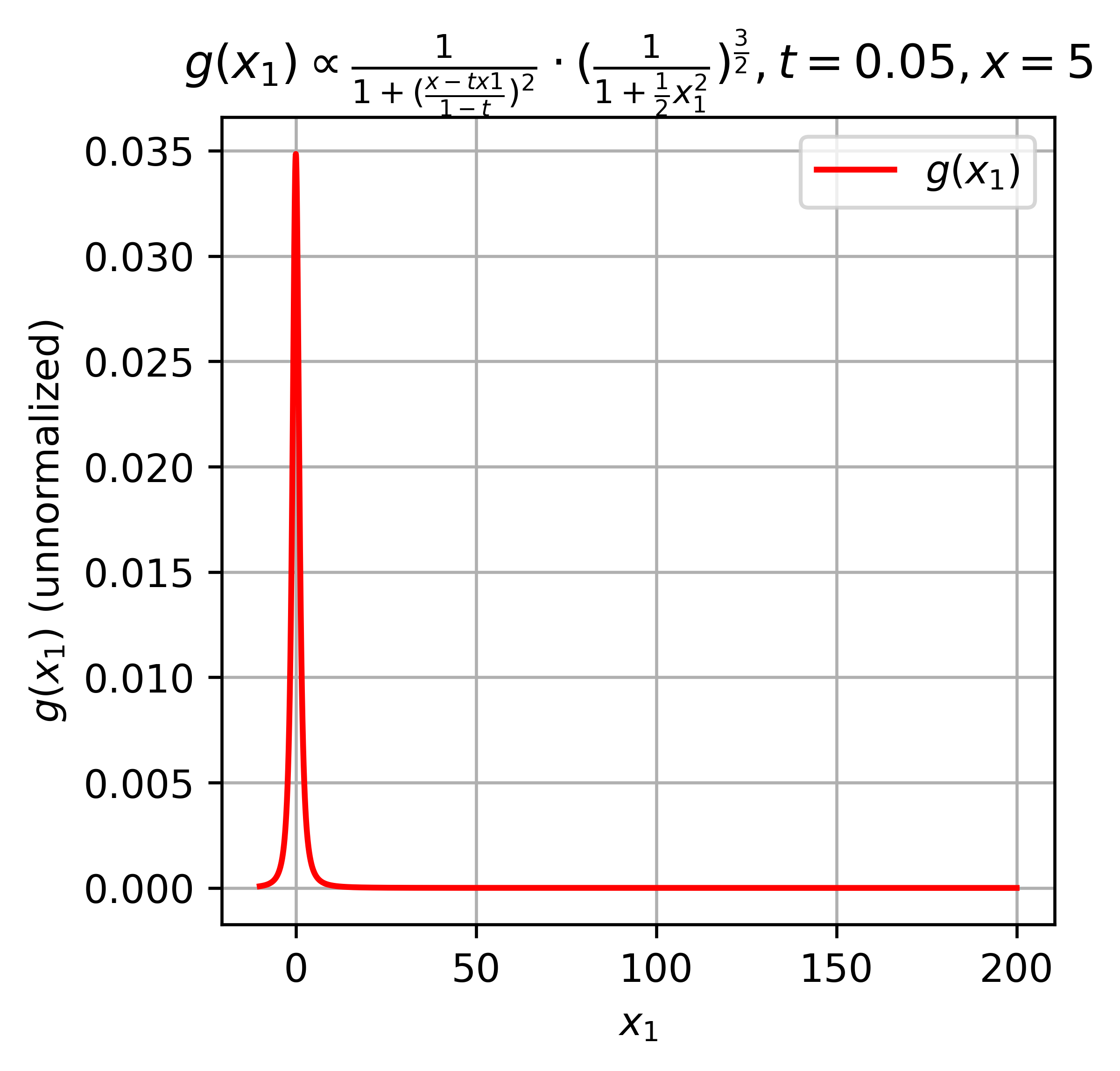}
        \label{Fig_Illu_Examp_e}
    }
\subfigure[t Prior: large $t$, large $x$]{
        \includegraphics[width=0.31\textwidth]{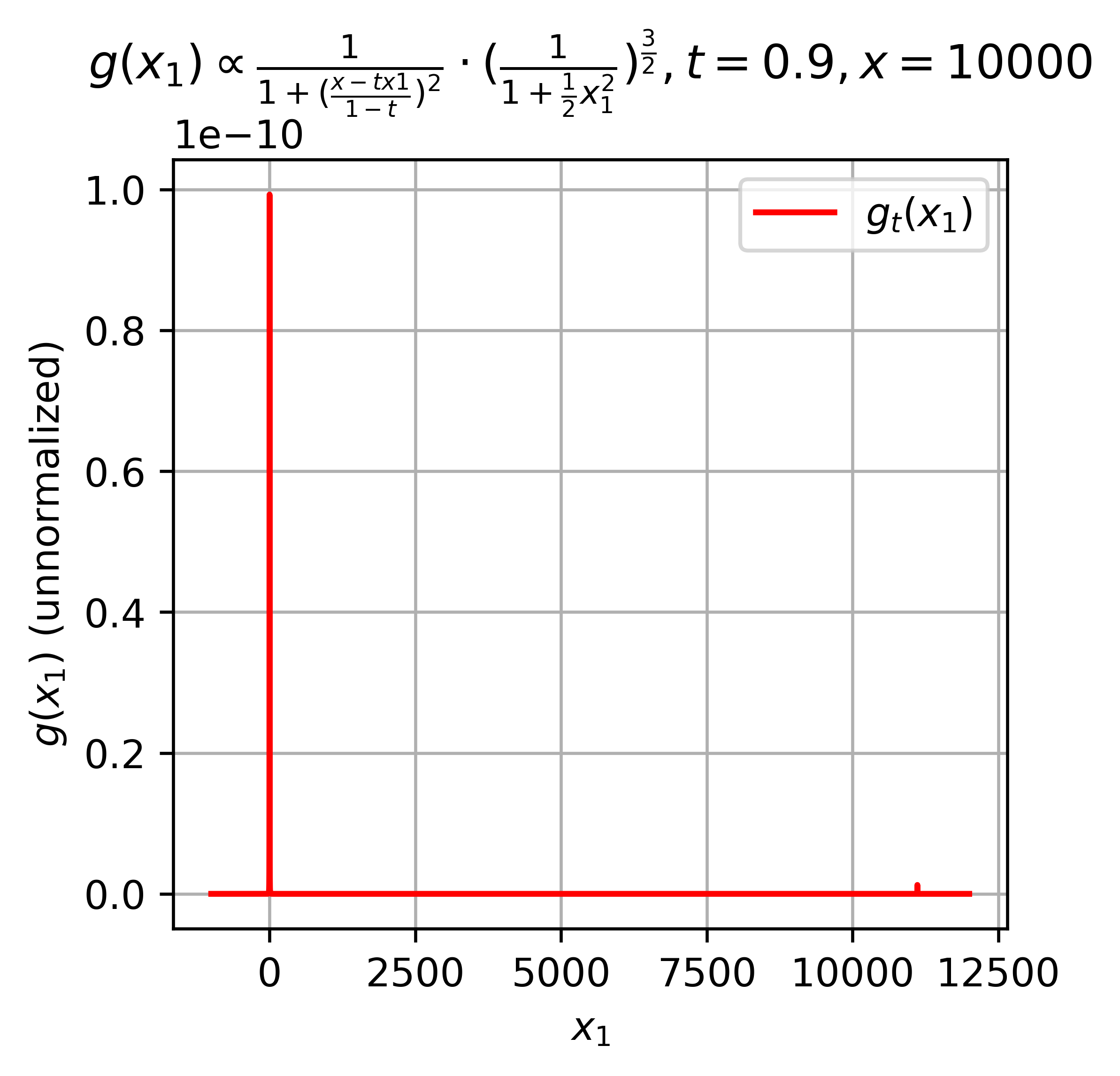}
        \label{Fig_Illu_Examp_f}
    }
\label{Fig_Illu_Examp}
\caption{Illustration for Example~\ref{example_G_vs_T}. 
(i) Figures~\ref{Fig_Illu_Examp_a} and~\ref{Fig_Illu_Examp_d} demonstrate that in the limit $t \to 0$, the distribution remains well-behaved and does not blow up. 
(ii) Figure~\ref{Fig_Illu_Examp_b} shows that for sufficiently large values of $x$ (here we choose a moderately large $x$ for readability), there exists a small value of $t$ such that the flow with a Gaussian prior diverges. 
(iii) Figure~\ref{Fig_Illu_Examp_e} illustrates that such divergence does not occur when using a Student-$t$ prior. 
(iv) Finally, Figures~\ref{Fig_Illu_Examp_c} and~\ref{Fig_Illu_Examp_f} show that as $t$ approaches $1$, the Gaussian-prior flow becomes unstable, whereas the Student-$t$ prior remains stable.}
\end{figure}

\section{Proofs for Section \ref{Sec_Mirror_Map}}

\begin{proof}[Proof of Lemma \ref{L_tail_moment}]
    Recall that for any one dimensional random variable $X$, 
    we have 
    \begin{align*}
        \int_{0}^{\infty} P(X \ge t) dt
        = \int_{0}^{\infty} \mathbb{E}[\mathbbm{1}_{X \ge t} ]  dt
        = \mathbb{E}[\int_{0}^{\infty} \mathbbm{1}_{X \ge t} dt ] 
        = \mathbb{E}[\int_{0}^{X} dt ]
        = \mathbb{E}[ X ].
    \end{align*}
    
    \begin{enumerate}
        \item \textbf{First claim.}

        Assume $P(\|Y\| \ge R) \ge \frac{C}{R^{p}}$. 
        Hence we know (where $s := t^{1/p}$, so that $dt = p s^{p-1} ds$)
        \begin{align*}
            \mathbb{E}[\|Y\|^{p}] 
            &= \int_{0}^{\infty} P(\|Y\|^{p} \ge t) dt
            = \int_{0}^{\infty} P(\|Y\| \ge t^{1/p}) dt
            = \int_{0}^{\infty} P(\|Y\| \ge s) p s^{p-1} ds \\
            &\ge \int_{0}^{\infty} \frac{C}{s^{p}} p s^{p-1} ds 
            = \int_{0}^{\infty} C p s^{-1} ds .
        \end{align*}
        The integral does not converge. 
        
        \item \textbf{Second claim.} 
        
        Assume $P(\|Y\| \ge R) \le \frac{C}{R^{\beta}}$.
        \begin{align*}
            \mathbb{E}[\|Y\|^{p}] 
            &= \int_{0}^{\infty} P(\|Y\|^{p} \ge t) dt
            = \int_{0}^{\infty} P(\|Y\| \ge t^{1/p}) dt
            = \int_{0}^{\infty} P(\|Y\| \ge s) p s^{p-1} ds \\
            \le &\int_{0}^{\infty} \frac{C}{s^{\beta}} p s^{p-1} ds 
            = \int_{0}^{\infty} C p s^{p-1-\beta} ds .
        \end{align*}
        The integral converges iff $p-1-\beta < -1 $, i.e., $\beta > p$. 
    \end{enumerate}

\end{proof}

\begin{example}
    \label{Example_Polytope}
    Let $\mathcal{K} \subseteq \mathbb{R}^{2}$ be a triangle defined by the following inequalities:
    \begin{align*}
        100x_{1} + 0.01x_{2} &\le 1, \\
        100x_{1} - 0.01x_{2} &\le 1, \\
        -x_{1} &\le 0.
    \end{align*}
    Recall that for each constraint $a_{i}^{T} x \le b_{i}$ we can define $\psi_{i}(x) = - \log (b_{i} - a_{i}^{T}x)$. 
    Then the log-barrier is $\psi(x) = \sum_{i} \psi_{i}(x)$.
    Take derivative, we obtain $\nabla \psi(x) = \sum_{i} \frac{1}{b_{i} - a_{i}^{T}x}a_{i}$.

    \begin{align*}
        \nabla \psi (k_{1}, k_{2}) &= \sum_{i} \frac{1}{b_{i} - a_{i}^{T}x}a_{i}
        = \frac{1}{1-100k_{1}-0.01k_{2}}\begin{bmatrix} 100\\0.01 \end{bmatrix} + \frac{1}{1-100k_{1}+0.01k_{2}}\begin{bmatrix} 100\\-0.01 \end{bmatrix} + \frac{1}{k_{1}}\begin{bmatrix} -1\\0 \end{bmatrix} \\
        &
        = \begin{bmatrix} 100 (\frac{2-200k_{1}}{(1-100k_{1}-0.01k_{2})(1-100k_{1}+0.01k_{2})} ) - \frac{1}{k_{1}}\\ 0.01 (\frac{0.02k_{2}}{(1-100k_{1}-0.01k_{2})(1-100k_{1}+0.01k_{2})}) \end{bmatrix} .
    \end{align*}
    Consider two points $(k_{1}, k_{2}), (k_{1}, -k_{2}) \in \mathbb{R}^{2}$ in the dual space:
    \begin{align*}
        \|\nabla \psi (k_{1}, k_{2}) - \nabla \psi (k_{1}, -k_{2})\|
        = \left\|\begin{bmatrix} 0 \\ 0.01 (\frac{0.04k_{2}}{(1-100k_{1}-0.01k_{2})(1-100k_{1}+0.01k_{2})}) \end{bmatrix} \right\|.
    \end{align*}
    Hence 
    \begin{align*}
        &\frac{\|\nabla \psi (k_{1}, k_{2}) - \nabla \psi (k_{1}, -k_{2})\|}{\|(k_{1}, k_{2})^{T} - (k_{1}, -k_{2})^{T}\|} \\
        =& \frac{0.01 (\frac{0.04k_{2}}{(1-100k_{1}-0.01k_{2})(1-100k_{1}+0.01k_{2})})}{2k_{2}}
        = 2 \times 10^{-4} \frac{1}{(1-100k_{1}-0.01k_{2})(1-100k_{1}+0.01k_{2})}.
    \end{align*}
    When $(k_{1}, k_{2}) \to 0$, we have $\frac{\|\nabla \psi (k_{1}, k_{2}) - \nabla \psi (k_{1}, -k_{2})\|}{\|(k_{1}, k_{2})^{T} - (k_{1}, -k_{2})^{T}\|} \to 2\times 10^{-4}$.
\end{example}
The above example shows that, there are cases when the polytope is ``ill-shaped'', and leading to a very large $L_{\psi}$.

\begin{proof}[Proof of Proposition \ref{P_Wasserstein_Primal_Dual_Tail_Bound}]
    We have
    \begin{align*}
        \nabla \Psi(x) &= \sum_{i = 1}^{m} (-\phi_{i}(x))^{-\kappa} \nabla \phi_{i}(x) + x ,\\
        \nabla^{2} \Psi(x) &= \kappa\sum_{i = 1}^{m} (-\phi_{i}(x))^{-\kappa-1} \nabla \phi_{i}(x) \nabla \phi_{i}(x)^{T} + \sum_{i = 1}^{m} (-\phi_{i}(x))^{-\kappa} \nabla^{2} \phi_{i}(x) + I.
    \end{align*}
    Note that $\nabla^{2} \phi_{i}(x) \succeq 0$ due to convexity of $\phi_{i}$. So we have $\nabla^{2} \Psi(x) \succeq I $. It follows that \begin{align*}
        W_{2}(\nu, \mu) \le W_{2, \Psi}(\nu, \mu).
    \end{align*}

    Furthermore, $\nabla \Psi(x) = \sum_{i = 1}^{m} (-\phi_{i}(x))^{-\kappa} \nabla \phi_{i}(x) + x$
    so we know 
    \begin{align*}
        \|\nabla \Psi(x)\| \le \|x\| + \sum_{i = 1}^{m} \|(-\phi_{i}(x))^{-\kappa} \nabla \phi_{i}(x)\|
        \le \|x\| + \sum_{i = 1}^{m} \frac{1}{\delta^{\kappa}} \|\nabla \phi_{i}(x)\|.
    \end{align*}
    Since we assumed $\phi_{i}(x)$ are of bounded gradient, we know $\|\nabla \Psi(x)\| = \frac{C'}{\delta^{\kappa}}$ for some $C'$.

    Denote
    \begin{align*}
        R_{\delta, \kappa} = \frac{C'}{\delta^{\kappa}} \ge \sup_{x \in \mathcal{K}_{\delta}} \|\nabla \Psi(x)\| .
    \end{align*}
    
    Hence we know, $R_{\delta, \kappa}$ is such that $\{x \in \mathbb{R}^{d}: \|x\| \le R_{\delta, \kappa}\} \supseteq \nabla \Psi (\mathcal{K}_{\delta})$.
    It follows that 
    \begin{align*}
        P(\|\nabla \Psi(X)\| \ge R_{\delta, \kappa}) \le P(\mathcal{K} \backslash \mathcal{K}_{\delta})
        \le  C_{\mathcal{K}} \delta^{\beta}
        = \frac{C}{R_{\delta, \kappa}^{\beta/\kappa}}.
    \end{align*}
    where note that 
    $\frac{C}{R_{\delta, \kappa}^{\beta/\kappa}} =\frac{C}{(\frac{C'}{\delta^{\kappa}} )^{\beta/\kappa}} = C_{\mathcal{K}} \delta^{\beta}$.
\end{proof}

\section{Proofs for section \ref{Sec_Geom}}

We first provide some definitions related to conditional expectation in an abstract vector space. 
We follow the notation in \cite{hytonen2016analysis}.
Let $(S, \mathscr{A})$ be a measurable space, and $X$ a Banach space.
$L^{p}(S; X)$ denote the linear space of all $\mu$-measurable functions from $S$ to $X$, 
with $\int_{S}\|f\|^{p} d\mu < \infty$.
When $\mathscr{F}$ is a sub-$\sigma$-algebra of $\mathscr{A}$, 
$L^{p}(S; \mathscr{F};X)$ represent the $L_{p}$ space w.r.t. $(S, \mathscr{F}, \mu|_{\mathscr{F}})$.

\begin{definition}
    \cite[Theorem 2.6.23 and Proposition 2.6.31]{hytonen2016analysis}

    If $\mu$ is $\sigma$-finite on the sub-algebra $\mathscr{F}$, then every $f \in L^{1}(S; X)$ 
    admits a unique conditional expectation with respect to $\mathscr{F}$.
    It satisfies 
    \begin{align*}
        \int_{F} \mathbb{E}[f|\mathscr{F}] d\mu = \int_{F} f d\mu, \forall F \in \mathscr{F}.
    \end{align*}

    Furthermore, let $g \in L^{0}(S; \mathscr{F};X_{1})$, and that $f \in L^{1}(S; X_{2})$
    be $\sigma$-integrable over $\mathscr{F}$.
    Let $\beta: X_{1} \times X_{2} \to Y$ be a bounded bi-linear map.
    Then $\beta(g, f) \in L^{0}(S; Y)$ is $\sigma$-integrable over $\mathscr{F}$, and we have
    \begin{align*}
        \mathbb{E}[\beta(g, f) | \mathscr{F}] = \beta(g, \mathbb{E}[f | \mathscr{F}]) \quad \text{ a.s. }
    \end{align*}
\end{definition}

\begin{proof}[Proof of Proposition \ref{P_vec_field_primal_dual}]
    In primal space, the corresponding interpolation would be 
    \begin{align*}
        \frac{d}{dt} X_{t} = \frac{d}{dt} \nabla \psi^{*} (Z_{t})
        = \nabla^{2} \psi^{*} (Z_{t}) \frac{d}{dt} Z_{t}.
    \end{align*}

    Recall that the two minimization problems are:
    \begin{align*}
        \min_{v} \mathbb{E} \left[ \| v^{P}(X_t, t) - \frac{d}{dt} X_t \|_{g^{P}}^2 \right]
        \quad \text{ and } \quad 
        \min_{v} \mathbb{E} \left[ \| v^{D}(Z_t, t) - \frac{d}{dt} Z_t \|_{g^{D}}^2 \right]
    \end{align*}
    respectively. 

    Recall that $\nabla^{2} \psi$ evaluated at $x$ is the inverse of $\nabla^{2} \psi^{*}$ evaluated at $z = \nabla \psi (x)$, i.e., 
    $\nabla^{2} \psi (x) = (\nabla^{2} \psi^{*}(\nabla \psi(x)))^{-1}$.
    Hence we obtain 
    $\nabla^{2} \psi(x) \circ \nabla^{2} \psi^{*} (z) \frac{d}{dt} Z_{t} = \frac{d}{dt} Z_{t}$.
    Condition on $X_{t} = x$, we have 
    \begin{align*}
        &\| v^{P}(X_t, t) - \frac{d}{dt} X_t \|_{g^{P}}^2
        = g^{P} \left(v^{P}(X_t, t) - \frac{d}{dt} X_t, v^{P}(X_t, t) - \frac{d}{dt} X_t\right) \\
        =& (\nabla^{2} \psi(x) )^{2} \left(v^{P}(X_t, t) - \nabla^{2} \psi^{*} (Z_{t}) \frac{d}{dt} Z_{t}, v^{P}(X_t, t) - \nabla^{2} \psi^{*} (Z_{t}) \frac{d}{dt} Z_{t}\right) \\
        =& g^{D} \left(\nabla^{2} \psi(x) v^{P}(x, t) - \frac{d}{dt} Z_{t}, \nabla^{2} \psi(x) v^{P}(x, t) - \frac{d}{dt} Z_{t}\right) 
        = \| \nabla^{2} \psi(x) v^{P}(x, t) - \frac{d}{dt} Z_t \|_{g^{D}}^2 .
    \end{align*}

    Hence we get $ \| v^{P}(x, t) - \frac{d}{dt} X_t \|_{g^{P}}^2 = \| \nabla^{2} \psi(x) v^{P}(x, t) - \frac{d}{dt} Z_t \|_{g^{D}}^2$
    or equivalently $\| v^{D}(z, t) - \frac{d}{dt} Z_t \|_{g^{D}}^2 = \| \nabla^{2} \psi^{*}(z) v^{D}(z, t) - \frac{d}{dt} X_t \|_{g^{P}}^2$.
    So we get 
    \begin{align*}
        v^{D}(z, t) = \nabla^{2} \psi(x) v^{P}(x, t), \quad 
        v^{P}(x, t) = \nabla^{2} \psi^{*}(z) v^{D}(z, t).
    \end{align*}

    The equivalence follows from the change of variable formula. 

    Now we show the last claim. Now consider $\mathcal{G}$ to be the sigma algebra corresponding to $X_{t} = x$. 
    Note that each tangent space $T_{x}M$ is a Hilbert space, with Riemannian metric $g$.
    Then for any $Y$ (that is measurable in $\mathcal{G}$), we have 
    \begin{align*}
        \mathbb{E}[\|\frac{d}{dt}X_{t} - Y\|_{g(x)}^{2}]
        &= \mathbb{E}[\|\frac{d}{dt}X_{t} - \mathbb{E}[\frac{d}{dt}X_{t} | X_{t} = x] + \mathbb{E}[\frac{d}{dt}X_{t} | X_{t} = x] - Y\|_{g(x)}^{2}] \\
        &= \mathbb{E}[\|\frac{d}{dt}X_{t} - \mathbb{E}[\frac{d}{dt}X_{t} | X_{t} = x] \|_{g(x)}^{2}] 
        + \mathbb{E}[ \|\mathbb{E}[\frac{d}{dt}X_{t} | X_{t} = x] - Y\|_{g(x)}^{2}] \\
        &+ 2 \mathbb{E}[\langle \frac{d}{dt}X_{t} - \mathbb{E}[\frac{d}{dt}X_{t} | X_{t} = x], \mathbb{E}[\frac{d}{dt}X_{t} | X_{t} = x] - Y \rangle].
    \end{align*}

    Since $f:=\mathbb{E}[\frac{d}{dt}X_{t} | X_{t} = x] - Y$ is measurable in $\mathcal{G}$, we have 
    \begin{align*}
        \mathbb{E}[\langle \frac{d}{dt}X_{t} - \mathbb{E}[\frac{d}{dt}X_{t} | X_{t} = x], f\rangle] 
        &= \mathbb{E}[\langle \frac{d}{dt}X_{t} , f\rangle] 
        - \mathbb{E}[\langle  \mathbb{E}[\frac{d}{dt}X_{t} | X_{t} = x], f\rangle] \\
        &= \mathbb{E}[\langle \frac{d}{dt}X_{t} , f\rangle] 
        - \langle\mathbb{E}[ \mathbb{E}[\frac{d}{dt}X_{t} | X_{t} = x]], f\rangle
        = 0.
    \end{align*}
    where the last equality is by tower property \cite[Proposition 2.6.33]{hytonen2016analysis}.

    Hence we get 
    \begin{align*}
        \mathbb{E}[\|\frac{d}{dt}X_{t} - Y\|_{g(x)}^{2}]
        &= \mathbb{E}[\|\frac{d}{dt}X_{t} - \mathbb{E}[\frac{d}{dt}X_{t} | X_{t} = x] \|_{g(x)}^{2}] 
        + \mathbb{E}[ \|\mathbb{E}[\frac{d}{dt}X_{t} | X_{t} = x] - Y\|_{g(x)}^{2}] \\
        &\ge \mathbb{E}[\|\frac{d}{dt}X_{t} - \mathbb{E}[\frac{d}{dt}X_{t} | X_{t} = x] \|_{g(x)}^{2}], \forall Y \in \mathcal{G}.
    \end{align*}

    It follows that among all $Y$ being measurable in $\mathcal{G}$, 
    the choice $Y = \mathbb{E}[\frac{d}{dt}X_{t} | X_{t} = x]$ minimize the problem. 
    Hence $v^{P}(x, t) = \mathbb{E}[\frac{d}{dt}X_{t}|X_{t} = x]$.
\end{proof}

\section{Proofs for Section~\ref{sec:mainresults}}
We start with several intermediate results. Define $p_{t, x}(z_{1}) = \frac{(1 + \frac{1}{\nu} \|\frac{x - tz_{1}}{1-t}\|^{2})^{-\frac{\nu + d}{2}}p(z_{1}) }{\int_{\mathbb{R}^{d}} (1 + \frac{1}{\nu} \|\frac{x - tz}{1-t}\|^{2})^{-\frac{\nu + d}{2}} p(z) dz } $. Throughout this section, to make the notation compatible with $p_{t, x}(z_{1})$, we use $p$ to denote the probability density function of the data distribution, supported on Euclidean space.

\begin{proposition}
    \label{P_Verify_Moment_Assumption}
    Under Assumption \ref{A_Data_Tail_Bound} with $\alpha \ge 2d + \nu + 2$, 
    there exists a constant $B$ that doesn' depend on $t, x$ s.t. for all $t \in [0, T]$, 
    \begin{align*}
        \mathbb{E}_{p_{t, x}(z_{1})}[\|z_{1}\|^{2}] \le \frac{B}{(1-T)^{\nu + d}}.
    \end{align*}
    In other words, we have that, 
    for all $T \in (0, 1)$, there exists $B_{1}, B_{2}$ independent of $x$, so that 
    \begin{align*}
        &\sup_{t \in [0, T]}\mathbb{E}_{p_{t, x}}[\|z_{1}\|] \le B_{1}, \forall x, \\
        &\sup_{t \in [0, T]}\mathbb{E}_{p_{t, x}}[\|z_{1}\|^{2}] \le B_{2}, \forall x.
    \end{align*}
\end{proposition}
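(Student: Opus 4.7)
The plan is to write $\mathbb{E}_{p_{t,x}}[\|z_{1}\|^{2}]$ as a ratio $N(x,t)/K(x,t)$, where
\[
N(x,t) = \int \|z\|^{2}\, q_{t,x}(z)\, p(z)\, dz,
\qquad
K(x,t) = \int q_{t,x}(z)\, p(z)\, dz,
\]
with $q_{t,x}(z) = \bigl(1+\|x-tz\|^{2}/(\nu(1-t)^{2})\bigr)^{-(\nu+d)/2}$, and to bound this ratio uniformly in $x$ for $t\in[0,T]$. The central technical tool I will use is the two-sided pointwise estimate
\[
c_{1}\!\left(\frac{\sqrt{\nu}(1-t)}{\sqrt{\nu}(1-t)+\|x-tz\|}\right)^{\nu+d}
\le q_{t,x}(z)
\le c_{2}\!\left(\frac{\sqrt{\nu}(1-t)}{\sqrt{\nu}(1-t)+\|x-tz\|}\right)^{\nu+d},
\]
which follows from AM--GM by comparing $1+y^{2}$ with $(1+|y|)^{2}$ (setting $y = \|x-tz\|/(\sqrt{\nu}(1-t))$). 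This rewrites $q_{t,x}$ as a Student-$t$-like kernel that is much easier to integrate against $p$.

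For the denominator, I would fix $R_{0}$ with $\mathbb{P}(\|Z_{1}\|\le R_{0})\ge 1/2$ (such $R_{0}$ exists since $p$ is a probability density on $\mathbb{R}^{d}$) and restrict the integral defining $K$ to the ball $\{\|z\|\le R_{0}\}$. Using $\|x-tz\|\le \|x\|+R_{0}$ together with the lower pointwise bound yields
\[
K(x,t) \ge \frac{c_{K}\,(1-t)^{\nu+d}}{(1+\|x\|)^{\nu+d}},
\]
with constants depending only on $d,\nu,R_{0}$ and $p$, but not on $x$ or $t$.

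For the numerator I first note that $M_{2}:=\int \|z\|^{2} p(z)\,dz<\infty$ by Assumption~\ref{A_Data_Tail_Bound} (since $\alpha\ge 2d+\nu+2>d+2$). I then apply the upper pointwise bound on $q_{t,x}$ and split the integration region at $t\|z\|=\|x\|/2$. On $\{t\|z\|\le\|x\|/2\}$ the inequality $\|x-tz\|\ge\|x\|/2$ factors out the decay $(1-t)^{\nu+d}/(1+\|x\|)^{\nu+d}$ times $M_{2}$. On the complementary region $\{\|z\|>\|x\|/(2t)\}$, combining $p(z)\le C/\|z\|^{\alpha}$ with the crude bound $(\sqrt{\nu}(1-t)+\|x-tz\|)^{-(\nu+d)}\le(\sqrt{\nu}(1-t))^{-(\nu+d)}$ reduces matters to $\int_{\|z\|>\|x\|/(2t)}\|z\|^{2-\alpha}\,dz \propto (\|x\|/t)^{d+2-\alpha}$. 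The hypothesis $\alpha\ge 2d+\nu+2$ is exactly what ensures that this second piece is controlled by $t^{\nu+d}/(1+\|x\|)^{\nu+d}$, matching the decay of the first piece. Combining them gives $N(x,t)\lesssim[(1-t)^{\nu+d}+t^{\nu+d}]/(1+\|x\|)^{\nu+d}$ for $\|x\|$ bounded away from zero; for $\|x\|$ in a bounded set, the trivial estimate $N\le M_{2}$ combined with the previous lower bound on $K$ already suffices.

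Taking the ratio then produces $N/K\lesssim 1+(t/(1-t))^{\nu+d}\le 1+(T/(1-T))^{\nu+d}\lesssim (1-T)^{-(\nu+d)}$ for $t\in[0,T]$, which is the claimed second-moment bound; the first-moment bound follows immediately by Cauchy--Schwarz. The hard part will be securing an upper bound on $N(x,t)$ that decays in $\|x\|$ at the same rate as the lower bound on $K(x,t)$ -- the naive $N\le M_{2}$ is far too weak for large $\|x\|$. The condition $\alpha\ge 2d+\nu+2$ is exactly what allows the polynomial tail of $p$ to compete with the Student-$t$ decay carried by $q_{t,x}$; intuitively, because the prior is heavier-tailed than the target, the conditional distribution of $Z_{1}$ given $X_{t}=x$ concentrates near the origin rather than near the MAP-like value $x/t$, keeping its moments uniformly bounded in $x$.
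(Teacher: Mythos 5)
Your proposal is correct and follows essentially the same route as the paper's proof: lower-bound the normalizing constant by restricting to a ball carrying at least half the mass of $p$, then upper-bound the numerator by splitting according to whether $\|x-tz\|\gtrsim\|x\|$ (where the kernel decay cancels the denominator) or $\|z\|\gtrsim\|x\|/t$ (where the polynomial tail with $\alpha\ge 2d+\nu+2$ takes over), exactly the decomposition used in the paper's three-region argument. The only differences are presentational — you package the kernel comparison via $(1+y^{2})^{-(\nu+d)/2}\asymp(1+y)^{-(\nu+d)}$ and dispatch small $\|x\|$ (and small $\|z\|$) with a trivial bound, where the paper instead introduces a separate region $\|z\|\le 1$ using the bound $p\le C_{u}$ — and these do not change the substance of the argument.
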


\begin{proof}[Proof of Proposition \ref{P_Verify_Moment_Assumption}]

    To derive the desired upper bound, we aim to upper bound $p_{t, x}(z_{1})$. We first derive a lower bound on the normalizing constant:
    \begin{align*}
        &\int_{\mathbb{R}^{d}} (1 + \frac{1}{\nu} \|\frac{x - tz_{1}}{1-t}\|^{2})^{-\frac{\nu + d}{2}}p(z_{1}) dz_{1}\\
        =&\int_{\mathbb{R}^{d}} (\frac{(\frac{1-t}{t})^{2}}{(\frac{1-t}{t})^{2} + \frac{1}{\nu} \|\frac{x}{t} - z_{1}\|^{2}})^{\frac{\nu + d}{2}}p(z_{1}) dz_{1}
        \ge \int_{\|z_{1}\| \le R} (\frac{(\frac{1-t}{t})^{2}}{(\frac{1-t}{t})^{2} + \frac{1}{\nu} \|\frac{x}{t} - z_{1}\|^{2}})^{\frac{\nu + d}{2}}p(z_{1}) dz_{1} \\
        \ge & (\frac{(\frac{1-t}{t})^{2}}{(\frac{1-t}{t})^{2} + \frac{1}{\nu} (\frac{\|x\|}{t} + R)^{2}})^{\frac{\nu + d}{2}} (1-\frac{C'}{R^{\beta}})
        \ge (\frac{(\frac{1-t}{t})^{2}}{(\frac{1-t}{t})^{2} + \frac{1}{\nu} \frac{2\|x\|^{2}}{t^{2}} })^{\frac{\nu + d}{2}} \frac{1}{2} \\
        = & \frac{1}{2}(\frac{(1-t)^{2}}{(1-t)^{2} + \frac{2}{\nu} \|x\|^{2} })^{\frac{\nu + d}{2}} .
    \end{align*}
    We will split $\mathbb{R}^{d}$ into different regions, and derive upper bounds of $p_{t, x}(z_{1})$ for each of them. 
    \begin{enumerate}
        \item Region 1 $\|\frac{x}{t} - z_{1}\| \le \frac{\|x\|}{2t}$.
        \begin{align*}
            &(1 + \frac{1}{\nu} \|\frac{x - tz_{1}}{1-t}\|^{2})^{-\frac{\nu + d}{2}}p(z_{1}) \\
            =& \frac{1}{(1 + \frac{1}{\nu} \|\frac{x - tz_{1}}{1-t}\|^{2})^{\frac{\nu + d}{2}}} p(z_{1})
            \le \frac{1}{(1 + \frac{1}{\nu} \|\frac{x - tz_{1}}{1-t}\|^{2})^{\frac{\nu + d}{2}}} (\frac{C^{\frac{2}{\nu + d}}}{\|z_{1}\|^{\frac{2\alpha}{\nu + d}}})^{\frac{\nu + d}{2}} \\
            =& (\frac{1}{(1 + \frac{1}{\nu} \|\frac{x - tz_{1}}{1-t}\|^{2})}\frac{C^{\frac{2}{\nu + d}}}{\|z_{1}\|^{\frac{2\alpha}{\nu + d}}})^{\frac{\nu + d}{2}}
            = (\frac{(\frac{1-t}{t})^{2}}{((\frac{1-t}{t})^{2} + \frac{1}{\nu} \|\frac{x}{t} - z_{1}\|^{2})}\frac{C^{\frac{2}{\nu + d}}}{\|z_{1}\|^{\frac{2\alpha}{\nu + d}}})^{\frac{\nu + d}{2}} \\
            \le& (\frac{C^{\frac{2}{\nu + d}}}{\|z_{1}\|^{\frac{2\alpha}{\nu + d}}})^{\frac{\nu + d}{2}}.
        \end{align*}

        Hence 
        \begin{align*}
            p_{t, x}(z_{1}) &= \frac{(1 + \frac{1}{\nu} \|\frac{x - tz_{1}}{1-t}\|^{2})^{-\frac{\nu + d}{2}}p(z_{1}) }{\int_{\mathbb{R}^{d}} (1 + \frac{1}{\nu} \|\frac{x - tz}{1-t}\|^{2})^{-\frac{\nu + d}{2}} p(z) dz } 
            \le 2(\frac{C^{\frac{2}{\nu + d}}}{\|z_{1}\|^{\frac{2\alpha}{\nu + d}}})^{\frac{\nu + d}{2}}
            (\frac{(1-t)^{2}}{(1-t)^{2} + \frac{2}{\nu} \|x\|^{2} })^{-\frac{\nu + d}{2}} \\
            &= 2(\frac{(1-t)^{2} + \frac{2}{\nu} \|x\|^{2}}{ (1-t)^{2}}
            \frac{C^{\frac{2}{\nu + d}}}{\|z_{1}\|^{\frac{2\alpha}{\nu + d}}})^{\frac{\nu + d}{2}}.
        \end{align*} 


        When $\|\frac{x}{t} - z_{1}\| \le \frac{\|x\|}{2t}$, 
        \begin{align*}
            \int_{B_{x}(R)} \|z_{1}\|^{2} p_{t, x}(z_{1}) dz_{1}
            \le& 2\int_{B_{\frac{x}{t}}(R=\frac{\|x\|}{2t})} \|z_{1}\|^{2} (\frac{(1-t)^{2} + \frac{2}{\nu} \|x\|^{2}}{ (1-t)^{2}}
            \frac{C^{\frac{2}{\nu + d}}}{\|z_{1}\|^{\frac{2\alpha}{\nu + d}}})^{\frac{\nu + d}{2}} dz_{1} \\
            \le& 2 \text{Vol}(B_{\frac{x}{t}}(R=\frac{\|x\|}{2t}))
            \sup_{B_{\frac{x}{t}}(R=\frac{\|x\|}{2t})} \|z_{1}\|^{2} (\frac{(1-t)^{2} + \frac{2}{\nu} \|x\|^{2}}{ (1-t)^{2}}
            \frac{C^{\frac{2}{\nu + d}}}{\|z_{1}\|^{\frac{2\alpha}{\nu + d}}})^{\frac{\nu + d}{2}} \\
            \le& 2 C_{B} (\frac{\|x\|}{t})^{d}
            \sup_{B_{\frac{x}{t}}(R=\frac{\|x\|}{2t})} (\frac{\frac{3}{\nu} \|x\|^{2}}{ (1-T)^{2}}
            \frac{C^{\frac{2}{\nu + d}}}{\|z_{1}\|^{\frac{2\alpha - 4}{\nu + d}}})^{\frac{\nu + d}{2}} \\
            \le& 2 C_{B} \|x\|^{d} \frac{1}{t^{d}}
            \sup_{B_{\frac{x}{t}}(R=\frac{\|x\|}{2t})} (\frac{\frac{3}{\nu} }{ (1-T)^{2}\|x\|^{-2}}
            \frac{C^{\frac{2}{\nu + d}}}{\|\frac{x}{2t}\|^{\frac{2\alpha - 4}{\nu + d}}})^{\frac{\nu + d}{2}} \\
            \le& 2 C_{B} \|x\|^{d} \frac{1}{t^{d}} (\frac{\frac{3}{\nu} C^{\frac{2}{\nu + d}} (2t)^{\frac{2\alpha - 4}{\nu + d}}}{ (1-T)^{2}\|x\|^{\frac{2\alpha-4 -2\nu - 2d}{\nu + d}}})^{\frac{\nu + d}{2}} \\
            =& \|x\|^{2d + \nu + 2 - \alpha} t^{\alpha-2-d} \frac{1}{(1-T)^{\nu + d}}
            \left(2 C_{B} 2^{\alpha - 2} C (\frac{3}{\nu})^{\frac{\nu + d}{2}}  \right),
        \end{align*}
        where observe that $\frac{2\alpha-4}{\nu+d}-2 = \frac{2\alpha-4 -2\nu - 2d}{\nu + d}$
        and 
        \begin{align*}
            \|x\|^{d}(\frac{1}{\|x\|^{\frac{2\alpha-4 -2\nu - 2d}{\nu + d}}})^{\frac{\nu + d}{2}}
            = \|x\|^{d}
            \|x\|^{-\frac{2\alpha-4 -2\nu - 2d}{\nu + d} \frac{\nu + d}{2}}
            = \|x\|^{d - (\alpha-2 -\nu - d)}
            = \|x\|^{2d + \nu + 2 - \alpha}.
        \end{align*} 

        To control the second moment so that it doesn't explode with $\|x\|$, we need $\alpha \ge 2d + \nu + 2$.

        \item Region 2 $\|\frac{x}{t} - z_{1}\| \ge \frac{1}{2t} \|x\|$ and $\|z_{1}\| \ge 1$
        
        For this case, we can have a sharper upper bound on $p_{t, x}(z_{1})$.
        \begin{align*}
            &(1 + \frac{1}{\nu} \|\frac{x - tz_{1}}{1-t}\|^{2})^{-\frac{\nu + d}{2}}p(z_{1}) \\
            =& \frac{1}{(1 + \frac{1}{\nu} \|\frac{x - tz_{1}}{1-t}\|^{2})^{\frac{\nu + d}{2}}} p(z_{1})
            \le \frac{1}{(1 + \frac{1}{\nu} \|\frac{x - tz_{1}}{1-t}\|^{2})^{\frac{\nu + d}{2}}} (\frac{C^{\frac{2}{\nu + d}}}{\|z_{1}\|^{\frac{2\alpha}{\nu + d}}})^{\frac{\nu + d}{2}} \\
            =& (\frac{1}{(1 + \frac{1}{\nu} \|\frac{x - tz_{1}}{1-t}\|^{2})}\frac{C^{\frac{2}{\nu + d}}}{\|z_{1}\|^{\frac{2\alpha}{\nu + d}}})^{\frac{\nu + d}{2}}
            = (\frac{(\frac{1-t}{t})^{2}}{((\frac{1-t}{t})^{2} + \frac{1}{\nu} \|\frac{x}{t} - z_{1}\|^{2})}\frac{C^{\frac{2}{\nu + d}}}{\|z_{1}\|^{\frac{2\alpha}{\nu + d}}})^{\frac{\nu + d}{2}} \\
            \le& (\frac{(\frac{1-t}{t})^{2}}{((\frac{1-t}{t})^{2} + \frac{1}{4t^{2}\nu} \|x\|^{2})}
            \frac{C^{\frac{2}{\nu + d}}}{\|z_{1}\|^{\frac{2\alpha}{\nu + d}}})^{\frac{\nu + d}{2}}
            =(\frac{(1-t)^{2}}{((1-t)^{2} + \frac{1}{4\nu} \|x\|^{2})}
            \frac{C^{\frac{2}{\nu + d}}}{\|z_{1}\|^{\frac{2\alpha}{\nu + d}}})^{\frac{\nu + d}{2}}.
        \end{align*}
        Hence 
        \begin{align*}
            p_{t, x}(z_{1}) &= \frac{(1 + \frac{1}{\nu} \|\frac{x - tz_{1}}{1-t}\|^{2})^{-\frac{\nu + d}{2}}p(z_{1}) }{\int_{\mathbb{R}^{d}} (1 + \frac{1}{\nu} \|\frac{x - tz}{1-t}\|^{2})^{-\frac{\nu + d}{2}} p(z) dz }  \\
            &\le 2(\frac{(1-t)^{2}}{((1-t)^{2} + \frac{1}{4\nu} \|x\|^{2})}
            \frac{C^{\frac{2}{\nu + d}}}{\|z_{1}\|^{\frac{2\alpha}{\nu + d}}})^{\frac{\nu + d}{2}}
            (\frac{(1-t)^{2}}{(1-t)^{2} + \frac{2}{\nu} \|x\|^{2} })^{-\frac{\nu + d}{2}} \\
            &= 2(\frac{(1-t)^{2} + \frac{2}{\nu} \|x\|^{2}}{(1-t)^{2} + \frac{1}{4\nu} \|x\|^{2}}
            \frac{C^{\frac{2}{\nu + d}}}{\|z_{1}\|^{\frac{2\alpha}{\nu + d}}})^{\frac{\nu + d}{2}}.
        \end{align*} 
        We see that for $\|\frac{x}{t} - z_{1}\| \ge \frac{1}{2t} \|x\|$, 
        $p_{t, x}(z_{1})$ has a polynomial tail bound that doesn't depend on $x, t$.
        Thus $\int_{\|\frac{x}{t} - z_{1}\| \ge \frac{1}{2t} \|x\| \text{ and } \|z_{1}\| \ge 1} \|z_{1}\|^{2} p_{t, x}(z_{1}) dz_{1}$ can be bounded by some constant that doesn't depend on $x, t$:
        \begin{align*}
            \int_{\|\frac{x}{t} - z_{1}\| \ge \frac{1}{2t} \|x\| \text{ and } \|z_{1}\| \ge 1} \|z_{1}\|^{2} p_{t, x}(z_{1}) dz_{1} 
            \le C' \int_{\|z_{1}\| \ge 1}\frac{1}{\|z_{1}\|^{\alpha - 2}}  dz_{1} .
        \end{align*}
        The convergence of the integral is equivalent to the convergence of 
        $\int_{1}^{\infty} r^{d-\alpha + 2}$.
        When $\alpha \ge 2d + \nu + 2$, it converges.

        \item Region 3 $\|\frac{x}{t} - z_{1}\| \ge \frac{1}{2t} \|x\|$ and $\|z_{1}\| \le 1$
        
        \begin{align*}
            &(1 + \frac{1}{\nu} \|\frac{x - tz_{1}}{1-t}\|^{2})^{-\frac{\nu + d}{2}}p(z_{1}) \\
            =& \frac{1}{(1 + \frac{1}{\nu} \|\frac{x - tz_{1}}{1-t}\|^{2})^{\frac{\nu + d}{2}}} p(z_{1})
            \le \frac{1}{(1 + \frac{1}{\nu} \|\frac{x - tz_{1}}{1-t}\|^{2})^{\frac{\nu + d}{2}}} 
            (C_{u}^{\frac{2}{\nu + d}})^{\frac{\nu + d}{2}} \\
            =& (\frac{1}{(1 + \frac{1}{\nu} \|\frac{x - tz_{1}}{1-t}\|^{2})}C_{u}^{\frac{2}{\nu + d}})^{\frac{\nu + d}{2}}
            = (\frac{(\frac{1-t}{t})^{2}}{((\frac{1-t}{t})^{2} + \frac{1}{\nu} \|\frac{x}{t} - z_{1}\|^{2})}C_{u}^{\frac{2}{\nu + d}})^{\frac{\nu + d}{2}} \\
            \le& (\frac{(\frac{1-t}{t})^{2}}{((\frac{1-t}{t})^{2} + \frac{1}{4t^{2}\nu} \|x\|^{2})}
            C_{u}^{\frac{2}{\nu + d}})^{\frac{\nu + d}{2}}
            =(\frac{(1-t)^{2}}{((1-t)^{2} + \frac{1}{4\nu} \|x\|^{2})}
            C_{u}^{\frac{2}{\nu + d}})^{\frac{\nu + d}{2}}.
        \end{align*}
        Hence 
        \begin{align*}
            p_{t, x}(z_{1}) &= \frac{(1 + \frac{1}{\nu} \|\frac{x - tz_{1}}{1-t}\|^{2})^{-\frac{\nu + d}{2}}p(z_{1}) }{\int_{\mathbb{R}^{d}} (1 + \frac{1}{\nu} \|\frac{x - tz}{1-t}\|^{2})^{-\frac{\nu + d}{2}} p(z) dz }  \\
            &\le 2(\frac{(1-t)^{2}}{((1-t)^{2} + \frac{1}{4\nu} \|x\|^{2})}
            C_{u}^{\frac{2}{\nu + d}})^{\frac{\nu + d}{2}}
            (\frac{(1-t)^{2}}{(1-t)^{2} + \frac{2}{\nu} \|x\|^{2} })^{-\frac{\nu + d}{2}} \\
            &= 2(\frac{(1-t)^{2} + \frac{2}{\nu} \|x\|^{2}}{(1-t)^{2} + \frac{1}{4\nu} \|x\|^{2}}
            C_{u}^{\frac{2}{\nu + d}})^{\frac{\nu + d}{2}}.
        \end{align*} 
        We see that for this case, when restricting $\|z_{1}\| \le 1$,
        $p_{t, x}(z_{1})$ has a constant upper bound that doesn't depend on $x, t$.
        Thus $\int_{\|\frac{x}{t} - z_{1}\| \ge \frac{1}{2t} \|x\| \text{ and } \|z_{1}\| \le 1} \|z_{1}\|^{2} p_{t, x}(z_{1}) dz_{1}$ can be bounded by some constant that doesn't depend on $x, t$.
    \end{enumerate}

\end{proof}

With the above Proposition, we can prove Lemma \ref{L_Lip_x_conditional_exp} and \ref{L_Lip_t_conditional_exp}, which are the key ingredients in proving the Lipschitzness of $v$.
\begin{lemma}
    \label{L_Lip_x_conditional_exp}
    Under Assumption \ref{A_Data_Tail_Bound}, we have 
    \begin{align*}
        \|\nabla_{x} \mathbb{E}[Z_{1}|Z_{t} = x]\|
        \le \frac{\nu + d}{\nu} \frac{2\sqrt{\nu} }{1-t} \mathbb{E}_{p_{t}(z_{1}|x)}[\|z_{1}\|]
        \le \frac{\nu + d}{\nu} \frac{2\sqrt{\nu} }{1-T} B_{1}, \forall t \in [0, T].
    \end{align*}
\end{lemma}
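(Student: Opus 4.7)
The plan is to use the standard log-derivative (score function) trick to convert the gradient of an expectation into a covariance, and then bound the covariance by a uniform bound on the score term together with the existing moment bound from Proposition \ref{P_Verify_Moment_Assumption}.

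First, I would write the conditional expectation as $\mathbb{E}[Z_1 \mid Z_t = x] = \int z_1 \, p_{t,x}(z_1)\, dz_1$ and set $f(x,z_1) := -\frac{\nu+d}{2}\log\bigl(1 + \tfrac{1}{\nu}\|\tfrac{x-tz_1}{1-t}\|^2\bigr)$, so that $p_{t,x}(z_1) \propto e^{f(x,z_1)}\,p(z_1)$ with normalizer $Z(t,x) = \int e^{f(x,z)} p(z)\,dz$. A direct computation using $\nabla_x \log Z(t,x) = \mathbb{E}_{p_{t,x}}[\nabla_x f(x,Z_1)]$ gives
\begin{align*}
\nabla_x \mathbb{E}[Z_1 \mid Z_t = x]
= \mathbb{E}_{p_{t,x}}\bigl[ Z_1 \, \nabla_x f(x,Z_1)^\top \bigr]
- \mathbb{E}_{p_{t,x}}[Z_1]\, \mathbb{E}_{p_{t,x}}[\nabla_x f(x,Z_1)]^\top,
\end{align*}
which is the (cross-)covariance of $Z_1$ and the score $\nabla_x f(x,Z_1)$.

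Next, I would explicitly compute
\begin{align*}
\nabla_x f(x,z_1) = -\frac{\nu+d}{\nu(1-t)}\cdot\frac{u}{1 + \tfrac{1}{\nu}\|u\|^2}, \qquad u := \frac{x-tz_1}{1-t},
\end{align*}
and bound its norm uniformly in $z_1$. Setting $r = \|u\|$, a one-variable calculus check shows that $r/(1 + r^2/\nu)$ attains its maximum $\sqrt{\nu}/2$ at $r = \sqrt{\nu}$, hence
\begin{align*}
\|\nabla_x f(x,z_1)\| \;\le\; \frac{\nu+d}{\nu(1-t)}\cdot\frac{\sqrt{\nu}}{2} \;=\; \frac{\nu+d}{\nu}\cdot\frac{\sqrt{\nu}}{2(1-t)},
\end{align*}
uniformly in $z_1$ and $x$.

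Applying the triangle inequality to the covariance representation, using that $\|\mathbb{E}[Z_1]\| \le \mathbb{E}[\|Z_1\|]$ and that $\|\nabla_x f\|$ is bounded by the above constant, yields
\begin{align*}
\|\nabla_x \mathbb{E}[Z_1 \mid Z_t = x]\|
\;\le\; 2\cdot\frac{\nu+d}{\nu}\cdot\frac{\sqrt{\nu}}{2(1-t)}\cdot \mathbb{E}_{p_{t,x}}[\|Z_1\|]
\;=\; \frac{\nu+d}{\nu}\cdot\frac{2\sqrt{\nu}}{1-t}\cdot \mathbb{E}_{p_{t,x}}[\|Z_1\|].
\end{align*}
The second inequality in the statement then follows by combining $t\le T$ with Proposition \ref{P_Verify_Moment_Assumption}, which supplies the uniform bound $\mathbb{E}_{p_{t,x}}[\|Z_1\|] \le B_1$ for all $t \in [0,T]$ and all $x$.

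The main technical point is the uniform bound on $\|\nabla_x f\|$: the score is not a priori bounded in $z_1$, and the cancellation $r/(1+r^2/\nu) \le \sqrt{\nu}/2$ is exactly what makes the Student-$t$ prior work, since the analogous computation for a Gaussian prior produces a score that grows linearly in $z_1$, forcing the use of the unbounded second moment $\mathbb{E}[\|Z_1\|^2]$ rather than $\mathbb{E}[\|Z_1\|]$ and creating precisely the $\|x\|$-dependent blow-up discussed in Section \ref{sec:ingredient2}. Once this uniform score bound is in hand, the remainder reduces to the covariance identity and Proposition \ref{P_Verify_Moment_Assumption}.
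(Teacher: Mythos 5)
Your proposal is correct and follows essentially the same route as the paper: both derive the covariance representation of $\nabla_x \mathbb{E}[Z_1\mid Z_t=x]$ (the paper via the quotient rule, you via the score-function identity), bound the weight-gradient term uniformly by $\frac{\nu+d}{\nu}\frac{\sqrt{\nu}}{2(1-t)}$ using the maximum of $r/(1+r^2/\nu)$, and invoke Proposition \ref{P_Verify_Moment_Assumption} for the uniform first-moment bound $B_1$. Note a harmless arithmetic slip in your last display: $2\cdot\frac{\sqrt{\nu}}{2(1-t)}$ equals $\frac{\sqrt{\nu}}{1-t}$, not $\frac{2\sqrt{\nu}}{1-t}$, so your argument actually yields a slightly sharper constant than the lemma claims (the paper's looser factor comes from centering both variables before applying Cauchy--Schwarz).
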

\begin{proof}[Proof of Lemma \ref{L_Lip_x_conditional_exp}]
    \begin{align*}
        &\nabla_{x} \mathbb{E}[Z_{1}|Z_{t} = x]
        = \nabla_{x} \frac{\int_{\mathbb{R}^{d}} z_{1} (1 + \frac{1}{\nu} \|\frac{x - tz_{1}}{1-t}\|^{2})^{-\frac{\nu + d}{2}} p(z_{1}) dz_{1}}{\int_{\mathbb{R}^{d}} (1 + \frac{1}{\nu} \|\frac{x - tz}{1-t}\|^{2})^{-\frac{\nu + d}{2}} p(z) dz} \\
        =& \int_{\mathbb{R}^{d}} z_{1} \nabla_{x} \frac{ (1 + \frac{1}{\nu} \|\frac{x - tz_{1}}{1-t}\|^{2})^{-\frac{\nu + d}{2}} p(z_{1}) }{\int_{\mathbb{R}^{d}} (1 + \frac{1}{\nu} \|\frac{x - tz}{1-t}\|^{2})^{-\frac{\nu + d}{2}} p(z) dz} dz_{1} \\
        =& \int_{\mathbb{R}^{d}} 
         z_{1} \left( \nabla_{x}  (1 + \frac{1}{\nu} \|\frac{x - tz_{1}}{1-t}\|^{2})^{-\frac{\nu + d}{2}}\right)^{T} 
        \frac{p(z_{1})  \int_{\mathbb{R}^{d}} (1 + \frac{1}{\nu} \|\frac{x - tz}{1-t}\|^{2})^{-\frac{\nu + d}{2}} p(z) dz }
        {\left(\int_{\mathbb{R}^{d}} (1 + \frac{1}{\nu} \|\frac{x - tz}{1-t}\|^{2})^{-\frac{\nu + d}{2}} p(z) dz \right)^{2}} dz_{1} \\
        &-\int_{\mathbb{R}^{d}} z_{1} \left(\nabla_{x} \int_{\mathbb{R}^{d}} (1 + \frac{1}{\nu} \|\frac{x - tz}{1-t}\|^{2})^{-\frac{\nu + d}{2}} p(z) dz \right)^{T}
        \frac{ (1 + \frac{1}{\nu} \|\frac{x - tz_{1}}{1-t}\|^{2})^{-\frac{\nu + d}{2}} p(z_{1}) }
        {\left(\int_{\mathbb{R}^{d}} (1 + \frac{1}{\nu} \|\frac{x - tz}{1-t}\|^{2})^{-\frac{\nu + d}{2}} p(z) dz \right)^{2}} dz_{1}.
    \end{align*}
    Observe that 
    \begin{align*}
        &\nabla_{x}  (1 + \frac{1}{\nu} \|\frac{x - tz_{1}}{1-t}\|^{2})^{-\frac{\nu + d}{2}} \\
        =& - \frac{\nu + d}{2} (1 + \frac{1}{\nu} \|\frac{x - tz_{1}}{1-t}\|^{2})^{-\frac{\nu + d}{2}-1}
        (\nabla_{x} \frac{1}{\nu} \|\frac{x - tz_{1}}{1-t}\|^{2}) \\
        =& - \frac{\nu + d}{2} (1 + \frac{1}{\nu} \|\frac{x - tz_{1}}{1-t}\|^{2})^{-\frac{\nu + d}{2}-1}
        \frac{1}{\nu}  (2\frac{x - tz_{1}}{1-t}) \frac{1}{1-t} \\
        =& - (1 + \frac{1}{\nu} \|\frac{x - tz_{1}}{1-t}\|^{2})^{-\frac{\nu + d}{2}}
        \frac{\nu + d}{\nu} \frac{1}{1 + \frac{1}{\nu} \|\frac{x - tz_{1}}{1-t}\|^{2}}
         (\frac{x - tz_{1}}{1-t}) \frac{1}{1-t} .
    \end{align*}
    Hence 
    \begin{align*}
        \nabla_{x} \mathbb{E}[Z_{1}|Z_{t} = x]
        &= -\int_{\mathbb{R}^{d}} 
         z_{1} \left( (1 + \frac{1}{\nu} \|\frac{x - tz_{1}}{1-t}\|^{2})^{-\frac{\nu + d}{2}}
        \frac{\nu + d}{\nu} \frac{1}{1 + \frac{1}{\nu} \|\frac{x - tz_{1}}{1-t}\|^{2}}
         (\frac{x - tz_{1}}{1-t}) \frac{1}{1-t}\right)^{T} \\
        &\frac{p(z_{1})  \int_{\mathbb{R}^{d}} (1 + \frac{1}{\nu} \|\frac{x - tz}{1-t}\|^{2})^{-\frac{\nu + d}{2}} p(z) dz }
        {\left(\int_{\mathbb{R}^{d}} (1 + \frac{1}{\nu} \|\frac{x - tz}{1-t}\|^{2})^{-\frac{\nu + d}{2}} p(z) dz \right)^{2}} dz_{1} \\
        &+\int_{\mathbb{R}^{d}} z_{1} \left(\int_{\mathbb{R}^{d}} (1 + \frac{1}{\nu} \|\frac{x - tz}{1-t}\|^{2})^{-\frac{\nu + d}{2}}
        \frac{\nu + d}{\nu} \frac{1}{1 + \frac{1}{\nu} \|\frac{x - tz}{1-t}\|^{2}}
         (\frac{x - tz}{1-t}) \frac{1}{1-t} p(z) dz \right)^{T} \\
        &\frac{ (1 + \frac{1}{\nu} \|\frac{x - tz_{1}}{1-t}\|^{2})^{-\frac{\nu + d}{2}} p(z_{1}) }
        {\left(\int_{\mathbb{R}^{d}} (1 + \frac{1}{\nu} \|\frac{x - tz}{1-t}\|^{2})^{-\frac{\nu + d}{2}} p(z) dz \right)^{2}} dz_{1}.
    \end{align*}
    Recall that we use the notation $p_{t, x}(z_{1}) = \frac{(1 + \frac{1}{\nu} \|\frac{x - tz_{1}}{1-t}\|^{2})^{-\frac{\nu + d}{2}}p(z_{1}) }{\int_{\mathbb{R}^{d}} (1 + \frac{1}{\nu} \|\frac{x - tz}{1-t}\|^{2})^{-\frac{\nu + d}{2}} p(z) dz } $.
    \begin{align*}
        \nabla_{x} \mathbb{E}[Z_{1}|Z_{t} = x]
        &= -\int_{\mathbb{R}^{d}} 
         z_{1} \left( 
        \frac{\nu + d}{\nu} \frac{1}{1 + \frac{1}{\nu} \|\frac{x - tz_{1}}{1-t}\|^{2}}
         (\frac{x - tz_{1}}{1-t}) \frac{1}{1-t}\right)^{T} 
        p_{t, x}(z_{1}) dz_{1} \\
        &+ \int_{\mathbb{R}^{d}} z_{1} p_{t, x}(z_{1}) dz_{1}\left(\int_{\mathbb{R}^{d}} 
        \frac{\nu + d}{\nu} \frac{1}{1 + \frac{1}{\nu} \|\frac{x - tz}{1-t}\|^{2}}
         (\frac{x - tz}{1-t}) \frac{1}{1-t} p(z|x) dz \right)^{T}.
    \end{align*}
    
    In general, we have $\mathbb{E}[XY^{T}] - \mathbb{E}[X]\mathbb{E}[Y]^{T} = \mathbb{E}[(X - \mathbb{E}[X]) (Y - \mathbb{E}[Y])^{T}]$.
    Let $X = z_{1}$ and $Y = \frac{\nu + d}{\nu} \frac{1}{1 + \frac{1}{\nu} \|\frac{x - tz_{1}}{1-t}\|^{2}} (\frac{x - tz_{1}}{1-t}) \frac{1}{1-t}$.
    To bound $\nabla_{x} \mathbb{E}[Z_{1}|Z_{t} = x]$, we consider any unit vector $v$:
    \begin{align*}
        v^{T}\nabla_{x} \mathbb{E}[Z_{1}|Z_{t} = x] v
        &= \mathbb{E}[v^{T}(X - \mathbb{E}[X]) \cdot v^{T}(Y - \mathbb{E}[Y])]
        \le \mathbb{E}[\|X - \mathbb{E}[X]\| \cdot \|Y - \mathbb{E}[Y]\|] \\
        &\le \mathbb{E}[(\|X\| + \|\mathbb{E}[X]\|) (\|Y\| + \|\mathbb{E}[Y]\|)] 
        \le \mathbb{E}[\|X\| \|Y\|] + 3 \mathbb{E}[\|X\|] \mathbb{E}[\|Y\|].
    \end{align*}
    We have 
    \begin{align*}
        \|Y\|
        = \frac{\nu + d}{\nu (1-t)^{2}} 
        \frac{\|x - tz_{1}\|}{1 + \frac{1}{\nu} \|\frac{x - tz_{1}}{1-t}\|^{2}} 
        = \frac{\nu + d}{\nu} 
        \frac{\|x - tz_{1}\|}{(1-t)^{2} + \frac{1}{\nu} \|x - tz_{1}\|^{2}} .
    \end{align*}
    At $(1-t)^{2} = \frac{1}{\nu} \|x - tz_{1}\|^{2} $, $\|Y\|$ reach maximum, 
    \begin{align*}
        \sup_{z_{1}} \|Y\| = 
        \frac{\nu + d}{\nu} 
        \frac{\sqrt{\nu} }{2(1-t)} .
    \end{align*}
    Therefore 
    \begin{align*}
        \|\nabla_{x} \mathbb{E}[Z_{1}|Z_{t} = x]\|
        \le  \frac{\nu + d}{\nu} \frac{2\sqrt{\nu} }{1-t} \mathbb{E}_{p_{t}(z_{1}|x)}[\|z_{1}\|].
    \end{align*}
\end{proof}

\begin{lemma}
    \label{L_Lip_t_conditional_exp}
    Under Assumption \ref{A_Data_Tail_Bound} with $\alpha \ge 2d + \nu + 2$, we have 
    \begin{align*}
        \|\frac{\partial}{\partial t} \mathbb{E}[Z_{1}|Z_{t} = x] \|
        &\le \frac{\nu + d}{\nu}\frac{3\sqrt{\nu} }{2(1-t)^{2}} \left(\mathbb{E}[\|z_{1}\|^{2}] + 3 \mathbb{E}[\|z_{1}\|]^{2}\right)
        \le \frac{\nu + d}{\nu} \frac{3\sqrt{\nu}}{2(1-T)^{2}} \left(B_{2} + 3 B_{1}^{2}\right), \forall t \in [0, T].
    \end{align*}
\end{lemma}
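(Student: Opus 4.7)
The plan is to mirror the strategy of Lemma~\ref{L_Lip_x_conditional_exp}, with $\partial_t$ replacing $\nabla_x$. Writing $\mathbb{E}[Z_1\mid Z_t=x] = N(t)/D(t)$ for the usual normalized integrals, the quotient rule reduces everything to a covariance expression against a scalar weight $W$. The central computation is $\partial_t K$ for the Student-$t$ kernel $K(z_1, x, t) = (1 + \|u\|^2/\nu)^{-(\nu+d)/2}$ with $u = (x-tz_1)/(1-t)$. Here I would exploit the simplification $\partial_t u = (x-z_1)/(1-t)^2 = (u - z_1)/(1-t)$, which follows from $x - z_1 = (1-t)(u - z_1)$; this converts an apparent $(1-t)^{-2}$ singularity into a single $(1-t)^{-1}$ factor multiplying a dimensionless ratio.

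Carrying out the quotient rule then gives
\begin{align*}
\frac{\partial}{\partial t}\mathbb{E}[Z_1 \mid Z_t = x] = -\frac{\nu+d}{\nu}\Big(\mathbb{E}_{p_{t,x}}[z_1 W] - \mathbb{E}_{p_{t,x}}[z_1]\,\mathbb{E}_{p_{t,x}}[W]\Big),
\qquad
W = \frac{\|u\|^2 - \langle u, z_1\rangle}{(1-t)\bigl(1 + \|u\|^2/\nu\bigr)},
\end{align*}
which is the temporal analogue of the vector $Y$ from Lemma~\ref{L_Lip_x_conditional_exp}. I would then bound $|W|$ pointwise by splitting $W = W_1 - W_2$ with $W_1 \propto \|u\|^2$ and $W_2 \propto \langle u, z_1\rangle$. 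For $W_1$, using $\|u\|^2/(1+\|u\|^2/\nu) \le \nu$ yields $|W_1| \le \nu/(1-t)$. For $W_2$, Cauchy--Schwarz followed by the same saddle-point estimate as in Lemma~\ref{L_Lip_x_conditional_exp}, namely $\sup_u \|u\|/(1+\|u\|^2/\nu) = \sqrt{\nu}/2$ attained at $\|u\| = \sqrt{\nu}$, gives $|W_2| \le \sqrt{\nu}\|z_1\|/\bigl(2(1-t)\bigr)$.

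Finally I would reuse the covariance inequality from the proof of Lemma~\ref{L_Lip_x_conditional_exp}, namely
\begin{align*}
\|\mathrm{Cov}_{p_{t,x}}(z_1, W)\| \le \mathbb{E}[\|z_1\|\,|W|] + 3\,\mathbb{E}[\|z_1\|]\,\mathbb{E}[|W|],
\end{align*}
substitute the pointwise bound on $|W|$, and collect the leading quadratic-moment contributions as $\tfrac{\sqrt{\nu}}{2(1-t)}\bigl(\mathbb{E}[\|z_1\|^2] + 3\,\mathbb{E}[\|z_1\|]^2\bigr)$. Multiplying by $(\nu+d)/\nu$ and using the trivial bound $1/(1-t) \le 1/(1-t)^2$ on $[0, T]$ produces the stated pointwise inequality; invoking Proposition~\ref{P_Verify_Moment_Assumption} then converts $\mathbb{E}_{p_{t,x}}[\|z_1\|^i] \le B_i$ uniformly on $[0, T]$, delivering the second $(1-T)^{-2}$ bound in terms of $B_1, B_2$.

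The main obstacle I expect is that, unlike the spatial case where $Y$ has the single monomial shape $u/(1+\|u\|^2/\nu)$, the temporal weight $W$ has two pieces of different pointwise magnitudes: a $z_1$-free contribution of order $\nu/(1-t)$ from $W_1$, and a linear-in-$\|z_1\|$ contribution from $W_2$. Keeping the bound in the compact form $\mathbb{E}[\|z_1\|^2] + 3\,\mathbb{E}[\|z_1\|]^2$ therefore requires careful bookkeeping; a cleaner path is to first rewrite $W_1 = \nu/(1-t) - \nu/\bigl((1-t)(1+\|u\|^2/\nu)\bigr)$ so that the additive constant drops out of the covariance, isolating the genuinely $z_1$-dependent piece before applying the covariance bound and absorbing any residual first-moment terms via $1/(1-t) \le 1/(1-t)^2$ together with $\mathbb{E}[\|z_1\|]^2 \le \mathbb{E}[\|z_1\|^2]$.
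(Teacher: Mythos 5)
Your derivation follows the paper's own route almost verbatim: quotient rule on $\mathbb{E}[Z_1\mid Z_t=x]$, reduction to a covariance of $z_1$ against a scalar weight (your $W$ is exactly the paper's $Y$ up to the prefactor $\tfrac{\nu+d}{\nu}$, since $(x-tz_1)^{T}(x-z_1)=(1-t)^2\bigl(\|u\|^2-\langle u,z_1\rangle\bigr)$), the inequality $\mathbb{E}[\|X\|\,|Y|]+3\,\mathbb{E}[\|X\|]\,\mathbb{E}[|Y|]$, the saddle bound $\sup_u \|u\|/(1+\|u\|^2/\nu)=\sqrt{\nu}/2$, and Proposition~\ref{P_Verify_Moment_Assumption} for uniformity in $x,t$. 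The only substantive difference is how the weight is bounded: you split $W$ algebraically into the $\|u\|^2$ and $\langle u,z_1\rangle$ pieces, whereas the paper runs a case analysis on $\|z_1\|\le\|x\|/2$ versus $\|z_1\|\ge\|x\|/2$ aiming at a pointwise bound linear in $\|z_1\|$.

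The caveat you flag is real, and your proposed repair does not close it. After writing $W_1=\tfrac{\nu}{1-t}-\tfrac{\nu}{(1-t)(1+\|u\|^2/\nu)}$, the true constant $\nu/(1-t)$ indeed drops out of the covariance, but the leftover piece still depends on $z_1$ through $u$ and is only bounded by the $z_1$-free quantity $\nu/(1-t)$ (at $z_1=0$ and $\|x\|$ large it is of exactly that order), so the covariance bound retains a term of order $\tfrac{\nu}{1-t}\,\mathbb{E}_{p_{t,x}}[\|z_1\|]$. This term cannot be absorbed into $\tfrac{3\sqrt{\nu}}{2(1-t)^2}\bigl(\mathbb{E}[\|z_1\|^2]+3\,\mathbb{E}[\|z_1\|]^2\bigr)$ by $1/(1-t)\le 1/(1-t)^2$ and $\mathbb{E}[\|z_1\|]^2\le\mathbb{E}[\|z_1\|^2]$: a first moment is not dominated by second-moment quantities when $\mathbb{E}_{p_{t,x}}[\|z_1\|]$ is small. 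So your argument yields the lemma only up to an extra additive term of order $\tfrac{\nu+d}{1-T}B_1$, i.e.\ with a different constant than the one displayed. In fairness, the paper's proof has the same soft spot: in its case $\|z_1\|\le\|x\|/2$ the derived bound on $\|Y\|$ is a constant of order $(\nu+d)/(1-t)$, not linear in $\|z_1\|$, yet the final line uses a pointwise bound proportional to $\|z_1\|/(1-t)^2$. Either way, the uniform conclusion $\sup_{t\in[0,T]}\|\tfrac{\partial}{\partial t}\mathbb{E}[Z_1\mid Z_t=x]\|\le C(d,\nu,T,B_1,B_2)$ — which is all that Proposition~\ref{P_Lipschitz_x_t} and Lemma~\ref{L_straightness} actually use — survives; only the precise displayed constant does not.
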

\begin{proof}[Proof of Lemma \ref{L_Lip_t_conditional_exp}]
    \begin{align*}
        \frac{\partial}{\partial t} \mathbb{E}[Z_{1}|Z_{t} = x]
        &= \frac{\partial}{\partial t} \frac{\int_{\mathbb{R}^{d}} z_{1} (1 + \frac{1}{\nu} \|\frac{x - tz_{1}}{1-t}\|^{2})^{-\frac{\nu + d}{2}} p(z_{1}) dz_{1}}{\int_{\mathbb{R}^{d}} (1 + \frac{1}{\nu} \|\frac{x - tz}{1-t}\|^{2})^{-\frac{\nu + d}{2}} p(z) dz} \\
        &= \int_{\mathbb{R}^{d}} z_{1} \nabla_{x} \frac{ (1 + \frac{1}{\nu} \|\frac{x - tz_{1}}{1-t}\|^{2})^{-\frac{\nu + d}{2}} p(z_{1}) }{\int_{\mathbb{R}^{d}} (1 + \frac{1}{\nu} \|\frac{x - tz}{1-t}\|^{2})^{-\frac{\nu + d}{2}} p(z) dz} dz_{1} \\
        &= \int_{\mathbb{R}^{d}} 
         z_{1} \left( \frac{\partial}{\partial t} (1 + \frac{1}{\nu} \|\frac{x - tz_{1}}{1-t}\|^{2})^{-\frac{\nu + d}{2}}\right) 
        \frac{p(z_{1})  \int_{\mathbb{R}^{d}} (1 + \frac{1}{\nu} \|\frac{x - tz}{1-t}\|^{2})^{-\frac{\nu + d}{2}} p(z) dz }
        {\left(\int_{\mathbb{R}^{d}} (1 + \frac{1}{\nu} \|\frac{x - tz}{1-t}\|^{2})^{-\frac{\nu + d}{2}} p(z) dz \right)^{2}} dz_{1} \\
        &-\int_{\mathbb{R}^{d}} z_{1} \left(\frac{\partial}{\partial t} \int_{\mathbb{R}^{d}} (1 + \frac{1}{\nu} \|\frac{x - tz}{1-t}\|^{2})^{-\frac{\nu + d}{2}} p(z) dz \right)
        \frac{ (1 + \frac{1}{\nu} \|\frac{x - tz_{1}}{1-t}\|^{2})^{-\frac{\nu + d}{2}} p(z_{1}) }
        {\left(\int_{\mathbb{R}^{d}} (1 + \frac{1}{\nu} \|\frac{x - tz}{1-t}\|^{2})^{-\frac{\nu + d}{2}} p(z) dz \right)^{2}} dz_{1}.
    \end{align*}
    Observe that 
    \begin{align*}
        &\frac{\partial}{\partial t}  (1 + \frac{1}{\nu} \|\frac{x - tz_{1}}{1-t}\|^{2})^{-\frac{\nu + d}{2}} \\
        =& -\frac{\nu + d}{2} (1 + \frac{1}{\nu} \|\frac{x - tz_{1}}{1-t}\|^{2})^{-\frac{\nu + d}{2}-1}
        (\frac{1}{\nu} \frac{\partial}{\partial t}\|\frac{x - tz_{1}}{1-t}\|^{2}) \\
        =& -\frac{\nu + d}{2} (1 + \frac{1}{\nu} \|\frac{x - tz_{1}}{1-t}\|^{2})^{-\frac{\nu + d}{2}-1}
        \frac{1}{\nu}  (2\frac{x - tz_{1}}{1-t})^{T} \frac{x-z_{1}}{(1-t)^{2}} \\
        =& -(1 + \frac{1}{\nu} \|\frac{x - tz_{1}}{1-t}\|^{2})^{-\frac{\nu + d}{2}}
        \frac{\nu + d}{\nu} \frac{1}{1 + \frac{1}{\nu} \|\frac{x - tz_{1}}{1-t}\|^{2}}
        \frac{1}{(1-t)^{3}} (\|x\|^{2} - z_{1}^{T}x (1+t) + t\|z_{1}\|^{2}) .
    \end{align*}
    Hence 
    \begin{align*}
        &\nabla_{x} \mathbb{E}[Z_{1}|Z_{t} = x] \\
        =& \int_{\mathbb{R}^{d}} 
         z_{1} \left( (1 + \frac{1}{\nu} \|\frac{x - tz_{1}}{1-t}\|^{2})^{-\frac{\nu + d}{2}}
        \frac{\nu + d}{\nu} \frac{1}{1 + \frac{1}{\nu} \|\frac{x - tz_{1}}{1-t}\|^{2}}
        \frac{1}{(1-t)^{3}} (\|x\|^{2} - z_{1}^{T}x (1+t) + t\|z_{1}\|^{2})\right) \\
        &\frac{p(z_{1})  \int_{\mathbb{R}^{d}} (1 + \frac{1}{\nu} \|\frac{x - tz}{1-t}\|^{2})^{-\frac{\nu + d}{2}} p(z) dz }
        {\left(\int_{\mathbb{R}^{d}} (1 + \frac{1}{\nu} \|\frac{x - tz}{1-t}\|^{2})^{-\frac{\nu + d}{2}} p(z) dz \right)^{2}} dz_{1} \\
        &-\int_{\mathbb{R}^{d}} z_{1} \left(\int_{\mathbb{R}^{d}} (1 + \frac{1}{\nu} \|\frac{x - tz_{1}}{1-t}\|^{2})^{-\frac{\nu + d}{2}}
        \frac{\nu + d}{\nu} \frac{1}{1 + \frac{1}{\nu} \|\frac{x - tz_{1}}{1-t}\|^{2}}
        \frac{1}{(1-t)^{3}} (\|x\|^{2} - z_{1}^{T}x (1+t) + t\|z_{1}\|^{2}) 
         p(z) dz \right) \\ &
        \frac{ (1 + \frac{1}{\nu} \|\frac{x - tz_{1}}{1-t}\|^{2})^{-\frac{\nu + d}{2}} p(z_{1}) }
        {\left(\int_{\mathbb{R}^{d}} (1 + \frac{1}{\nu} \|\frac{x - tz}{1-t}\|^{2})^{-\frac{\nu + d}{2}} p(z) dz \right)^{2}} dz_{1}.
    \end{align*}
    Define $p_{t, x}(z_{1}) = \frac{(1 + \frac{1}{\nu} \|\frac{x - tz_{1}}{1-t}\|^{2})^{-\frac{\nu + d}{2}}p(z_{1}) }{\int_{\mathbb{R}^{d}} (1 + \frac{1}{\nu} \|\frac{x - tz}{1-t}\|^{2})^{-\frac{\nu + d}{2}} p(z) dz } $.
    \begin{align*}
        &\frac{\partial}{\partial t} \mathbb{E}[Z_{1}|Z_{t} = x] \\
        =& \int_{\mathbb{R}^{d}} 
         z_{1} \left( 
        \frac{\nu + d}{\nu} \frac{1}{1 + \frac{1}{\nu} \|\frac{x - tz_{1}}{1-t}\|^{2}}
        \frac{1}{(1-t)^{3}} (\|x\|^{2} - z_{1}^{T}x (1+t) + t\|z_{1}\|^{2})\right)
        p_{t, x}(z_{1}) dz_{1} \\
        &- \int_{\mathbb{R}^{d}} z_{1} p_{t, x}(z_{1}) dz_{1}\left(\int_{\mathbb{R}^{d}} 
        \frac{\nu + d}{\nu} \frac{1}{1 + \frac{1}{\nu} \|\frac{x - tz}{1-t}\|^{2}}
        \frac{1}{(1-t)^{3}} (\|x\|^{2} - z^{T}x (1+t) + t\|z\|^{2}) p(z|x) dz \right).
    \end{align*}
    Define $X = z_{1}$, $Y = \frac{\nu + d}{\nu} \frac{1}{1 + \frac{1}{\nu} \|\frac{x - tz_{1}}{1-t}\|^{2}}\frac{1}{(1-t)^{3}} (\|x\|^{2} - z_{1}^{T}x (1+t) + t\|z_{1}\|^{2})$.
    Note that 
    \begin{align*}
        \|Y\| &= \|\frac{\nu + d}{\nu} \frac{1}{1 + \frac{1}{\nu} \|\frac{x - tz_{1}}{1-t}\|^{2}}\frac{1}{(1-t)^{3}} (x - t z_{1})^{T}(x - z_{1})\| \\
        \le &\frac{\nu + d}{\nu (1-t)^{3}} \frac{\|x - t z_{1}\| \|x - z_{1}\|}{1 + \frac{1}{\nu} \|\frac{x - tz_{1}}{1-t}\|^{2}} 
        = \frac{\nu + d}{\nu } \frac{1}{1-t} \frac{\|x - t z_{1}\| \|x - z_{1}\|}{(1-t)^{2} + \frac{1}{\nu} \|x - tz_{1}\|^{2}} 
        \le \frac{\nu + d}{\nu}\|z_{1}\|.
    \end{align*}
    where observe that if $\|z_{1}\| \le \frac{1}{2}\|x\|$, 
    we have $\|x - z_{1}\| \le 2\|x - t z_{1}\|$. Then $\|Y\| \le \frac{\nu + d}{\nu^{2} } \frac{1}{1-t} $.
    If $\|z_{1}\| \ge \frac{1}{2}\|x\|$, 
    \begin{align*}
        \|Y\| \le \frac{\nu + d}{\nu} \frac{\sqrt{\nu} }{2(1-t)} \frac{1}{1-t}\|x - z_{1}\|
        \le \frac{\nu + d}{\nu} \frac{\sqrt{\nu} }{2} \frac{1}{(1-t)^{2}}(\|x\|+\|z_{1}\|)
        \le \|z_{1}\|\frac{\nu + d}{\nu} \frac{3\sqrt{\nu} }{2} \frac{1}{(1-t)^{2}}.
    \end{align*}

    Recall: $\mathbb{E}[XY] - \mathbb{E}[X]\mathbb{E}[Y] = \mathbb{E}[(X - \mathbb{E}[X]) (Y - \mathbb{E}[Y])]$.
    Therefore 
    \begin{align*}
        \|\frac{\partial}{\partial t} \mathbb{E}[Z_{1}|Z_{t} = x] \|
        &\le \mathbb{E}[v^{T}(X - \mathbb{E}[X]) (Y - \mathbb{E}[Y])]
        \le \mathbb{E}[\|X - \mathbb{E}[X]\| \cdot \|Y - \mathbb{E}[Y]\|] \\
        &\le \mathbb{E}[(\|X\| + \|\mathbb{E}[X]\|) (\|Y\| + \|\mathbb{E}[Y]\|)] 
        \le \mathbb{E}[\|X\| \|Y\|] + 3 \mathbb{E}[\|X\|] \mathbb{E}[\|Y\|] \\
        &\le \frac{\nu + d}{\nu}\frac{3\sqrt{\nu} }{2} \frac{1}{(1-t)^{2}} \left(\mathbb{E}[\|z_{1}\|^{2}] + 3 \mathbb{E}[\|z_{1}\|]^{2}\right).
    \end{align*}
\end{proof}

The following Lemma will be used when analyzing the discretization error.
\begin{lemma}
    \label{L_straightness}
    Under Assumption \ref{A_Data_Tail_Bound} with $\alpha \ge 2d + \nu + 2$ and Assumption \ref{A_data_bounded_moment},
    there exists $D_{3}$ that depends polynomially in $\frac{1}{1-T}, d, \nu$ 
    and $B_{1}, B_{2}, \mathbb{E}[\|Z_{1}\|\|^{2}], \mathbb{E}[\|Z_{0}\|\|^{2}]$
    s.t. 
    \begin{align*}
        \mathbb{E}[\|v(Z_{t}, t) - v(Z_{t_{i}}, t_{i})\|^{2}] \le h^{2} D_{3}.
    \end{align*}
\end{lemma}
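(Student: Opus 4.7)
The plan is to split $v(Z_t,t)-v(Z_{t_i},t_i)$ into a spatial and a temporal increment and bound each using Proposition~\ref{P_Lipschitz_x_t}, then take expectations using the second-moment assumption and the uniform-in-$x$ bounds $B_1,B_2$ from Proposition~\ref{P_Verify_Moment_Assumption}.

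Concretely, first I would write
\begin{align*}
v(Z_t,t)-v(Z_{t_i},t_i) \;=\; \underbrace{\bigl[v(Z_t,t)-v(Z_{t_i},t)\bigr]}_{\text{spatial}} \;+\; \underbrace{\bigl[v(Z_{t_i},t)-v(Z_{t_i},t_i)\bigr]}_{\text{temporal}},
\end{align*}
apply the elementary inequality $\|a+b\|^2 \le 2\|a\|^2 + 2\|b\|^2$, and treat the two pieces separately.

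For the spatial term, Proposition~\ref{P_Lipschitz_x_t} gives $\|v(Z_t,t)-v(Z_{t_i},t)\| \le L_1\|Z_t-Z_{t_i}\|$. Because $Z_t=(1-t)Z_0+tZ_1$ is affine in $t$, we have $Z_t-Z_{t_i}=(t-t_i)(Z_1-Z_0)$, hence $\|Z_t-Z_{t_i}\|\le h\|Z_1-Z_0\|$. Squaring and taking expectations yields
\begin{align*}
\mathbb{E}\bigl[\|v(Z_t,t)-v(Z_{t_i},t)\|^2\bigr] \;\le\; 2 L_1^2\, h^2 \bigl(\mathbb{E}[\|Z_0\|^2]+\mathbb{E}[\|Z_1\|^2]\bigr),
\end{align*}
using Assumption~\ref{A_data_bounded_moment}.

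For the temporal term, the fundamental theorem of calculus and the temporal regularity bound in Proposition~\ref{P_Lipschitz_x_t} give
\begin{align*}
\|v(Z_{t_i},t)-v(Z_{t_i},t_i)\| \;\le\; \int_{t_i}^{t}\Bigl\|\tfrac{\partial}{\partial s}v(Z_{t_i},s)\Bigr\|\,ds \;\le\; h\left(\frac{1}{(1-T)^2}\|Z_{t_i}\| + C_{T,d,\nu}\right),
\end{align*}
where $C_{T,d,\nu} := \frac{B_1}{(1-T)^2} + \frac{1}{1-T}\frac{\nu+d}{\nu}\frac{3\sqrt{\nu}}{2(1-T)^2}(B_2+3B_1^2)$. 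Squaring, taking expectations, and using $\mathbb{E}[\|Z_{t_i}\|^2] \le 2(\mathbb{E}[\|Z_0\|^2]+\mathbb{E}[\|Z_1\|^2])$ (from the affine interpolation and convexity of $\|\cdot\|^2$) bounds this contribution also by $h^2$ times a quantity that is polynomial in $\tfrac{1}{1-T}$, $d$, $\nu$, $B_1$, $B_2$, $\mathbb{E}[\|Z_0\|^2]$, $\mathbb{E}[\|Z_1\|^2]$.

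Combining the two pieces yields the desired constant $D_3$ depending polynomially on the stated quantities. No serious obstacle is expected: the proof is essentially bookkeeping after Proposition~\ref{P_Lipschitz_x_t}, with the only minor care being the uniform-in-$s$ control of $\|\partial_s v(Z_{t_i},s)\|$ on $[t_i,t]\subseteq[0,T]$, which is precisely what Proposition~\ref{P_Lipschitz_x_t} provides.
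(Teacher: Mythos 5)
Your decomposition into a spatial and a temporal increment, bounded via Proposition~\ref{P_Lipschitz_x_t} and the moment bounds, is in spirit the same as the paper's argument, which bounds $\frac{d}{ds}v(Z_s,s)=\partial_s v(Z_s,s)+\nabla_x v(Z_s,s)\,\dot Z_s$ along the trajectory and then applies Jensen's inequality. However, there is one genuine gap: you take $Z_t$ to be the straight-line interpolation $(1-t)Z_0+tZ_1$ and use the pathwise identity $Z_t-Z_{t_i}=(t-t_i)(Z_1-Z_0)$. In the paper, this lemma is invoked inside the proof of Theorem~\ref{T_Error_T_Flow} with $Z_t$ the solution of the true ODE $dZ_t=v(Z_t,t)\,dt$, $Z_0\sim\pi_0$, and the paper's proof correspondingly uses $\dot Z_t=v(Z_t,t)$. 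The flow and the interpolation share the same time-$t$ marginals, but the quantity $\mathbb{E}[\|v(Z_t,t)-v(Z_{t_i},t_i)\|^2]$ depends on the \emph{joint} law of $(Z_{t_i},Z_t)$, which differs between the two processes; so your spatial-increment bound, as written, proves the statement for the wrong process and cannot be plugged into the theorem.

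The fix is straightforward and brings you back to the paper's route: for the ODE flow write $Z_t-Z_{t_i}=\int_{t_i}^{t}v(Z_s,s)\,ds$, use the linear-growth bound $\|v(z,s)\|\le\frac{1}{1-T}(\|z\|+B_1)$ (which follows from $v(z,s)=\frac{1}{1-s}\big(\mathbb{E}[Z_1\mid Z_s=z]-z\big)$ and Proposition~\ref{P_Verify_Moment_Assumption}), apply Jensen or Cauchy--Schwarz to get $\mathbb{E}[\|Z_t-Z_{t_i}\|^2]\le h^2\sup_{s\in[0,T]}\mathbb{E}[\|v(Z_s,s)\|^2]$, and control $\mathbb{E}[\|Z_s\|^2]$ through the marginal identity $\mathrm{Law}(Z_s)=\mathrm{Law}(sZ_1+(1-s)Z_0)$ together with Assumption~\ref{A_data_bounded_moment} (this marginal identity is also what justifies your bound on $\mathbb{E}[\|Z_{t_i}\|^2]$ in the temporal term). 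With that replacement your bookkeeping goes through and yields a $D_3$ with the stated polynomial dependence; the temporal-increment part of your argument is fine as written.
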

\begin{proof}[Proof of Lemma \ref{L_straightness}]

    By chain rule, 
    \begin{align*}
        \frac{d}{dt} v(Z_{t}, t)
        = \frac{\partial}{\partial t} v(Z_{t}, t) + \frac{\partial}{\partial x} v(Z_{t}, t) \circ \frac{\partial}{\partial t} Z_{t},
    \end{align*}
    and therefore (note that $\frac{\partial}{\partial t} Z_{t} = v(Z_{t}, t)$)
    \begin{align*}
        \|\frac{d}{dt} v(Z_{t}, t)\| \le \|\frac{\partial}{\partial t} v(Z_{t}, t)\| + 
        \|\frac{\partial}{\partial x} v(Z_{t}, t)\| \cdot \| v(Z_{t}, t)\|.
    \end{align*}
    Recall 
    \begin{align*}
        \|\frac{\partial}{\partial t} v(x, t) \|
        &\le \frac{1}{(1-T)^{2}} \|x\| + \frac{1}{(1-T)^{2}} B_{1}
        + \frac{1}{1-T}  \frac{\nu + d}{\nu} \frac{3\sqrt{\nu}}{2(1-T)^{2}} \left(B_{2} + 3 B_{1}^{2}\right), \forall t \in [0, T] \\
        \|\nabla_{x} v(x, t)\| 
        &\le \frac{1}{1-T} + \frac{\nu + d}{\nu} \frac{2\sqrt{\nu} }{(1-T)^{2}} B_{1}, \forall t \in [0, T].
    \end{align*}
    and 
    \begin{align*}
        \|v(x, t)\| 
        &= \|-\frac{1}{1-t}x + \frac{1}{1-t} \mathbb{E}[Z_{1}|Z_{t} = x]\|
        \le \frac{1}{1-T}\left(\|x\| + \| \mathbb{E}[Z_{1}|Z_{t} = x]\|\right) \\
        &\le \frac{1}{1-T}\left(\|x\| + \mathbb{E}[\|Z_{1}\||Z_{t} = x]\right)
        = \frac{1}{1-T}\left(\|x\| + B_{1}\right) .
    \end{align*}
    Hence we have 
    \begin{align*}
        \|\frac{d}{dt} v(Z_{t}, t)\| 
        &\le \frac{1}{(1-T)^{2}} \|Z_{t}\| + \frac{1}{(1-T)^{2}} B_{1}
        + \frac{1}{1-T}  \frac{\nu + d}{\nu} \frac{3\sqrt{\nu}}{2(1-T)^{2}} \left(B_{2} + 3 B_{1}^{2}\right) \\
        &\quad + \left(\frac{1}{1-T} + \frac{\nu + d}{\nu} \frac{2\sqrt{\nu} }{(1-T)^{2}} B_{1}\right) \cdot \frac{1}{1-T}\left(\|Z_{t}\| + B_{1}\right) \forall t \in [0, T].
    \end{align*}
    It follows that there exists $D_{1}, D_{2}$ (that depends polynomially in $\frac{1}{1-T}, d, \nu, B_{1}, B_{2}$) s.t. 
    \begin{align*}
        \|\frac{d}{dt} v(Z_{t}, t)\|^{2} \le D_{1}\|Z_{t}\|^{2} + D_{2}, \forall t \in [0, T].
    \end{align*} 

    Recall that $\text{Law}(Z_{t}) = \text{Law}(tZ_{1} + (1-t)Z_{0})$.
    Hence 
    \begin{align*}
        \mathbb{E}[\|Z_{t}\|^{2}] &= \mathbb{E}[\|tZ_{1} + (1-t)Z_{0}\|^{2}]
        = t^{2}\mathbb{E}[\|Z_{1}\|\|^{2}] + (1-t)^{2} \mathbb{E}[\|Z_{0}\|^{2}]
        + 2t(1-t)\mathbb{E}[Z_{0}^{T}Z_{1}] \\
        &\le 2\mathbb{E}[\|Z_{1}\|\|^{2}] + 2\mathbb{E}[\|Z_{0}\|^{2}].
    \end{align*}
    which implies there exists $D_{3}$ (that depends polynomially in $\frac{1}{1-T}, d, \nu, B_{1}, B_{2}, \mathbb{E}[\|Z_{1}\|\|^{2}], \mathbb{E}[\|Z_{0}\|\|^{2}]$)
    s.t. 
    \begin{align*}
        \mathbb{E}[\|\frac{d}{dt} v(Z_{t}, t)\|^{2}] \le D_{3}.
    \end{align*}

    By Jensen's inequality,
    \begin{align*}
        \mathbb{E}[\|v(Z_{t}, t) - v(Z_{t_{i}}, t_{i})\|^{2}]
        &= \mathbb{E}[\|\int_{t_{i}}^{t}\left(\frac{d}{ds} v(Z_{s}, s) \right) ds\|^{2}]
        \le (t-t_{i}) \mathbb{E}[\int_{t_{i}}^{t}\left\|\frac{d}{ds}  v(Z_{s}, s) \right\|^{2} ds ] \\
        &\le h^{2}\mathbb{E}[\|\frac{d}{dt} v(Z_{t}, t) \|^{2} ] 
        \le h^{2} D_{3}.
    \end{align*}
    
\end{proof}

\subsection{Proof of Proposition \ref{P_Lipschitz_x_t}}\label{Apendix_Proof_Lip}

\begin{proof}[Proof of Proposition \ref{P_Lipschitz_x_t}]
    Using Lemma \ref{L_Lip_x_conditional_exp},
    \begin{align*}
        \|\nabla_{x} v(z, t)\| 
        &= \|-\frac{1}{1-t}I + \frac{1}{1-t} \nabla_{x}\mathbb{E}[Z_{1}|Z_{t} = x]\| \\
        &\le \frac{1}{1-T} + \frac{\nu + d}{\nu} \frac{2\sqrt{\nu} }{(1-T)^{2}} B_{1}, \forall t \in [0, T].
    \end{align*}

    Notice that 
    \begin{align*}
        \frac{\partial}{\partial t} v(z, t) 
        &= \frac{\partial}{\partial t} (-\frac{1}{1-t}z + \frac{1}{1-t} \mathbb{E}[Z_{1}|Z_{t} = z]) \\
        &= -\frac{1}{(1-t)^{2}} z + \frac{1}{(1-t)^{2}} \mathbb{E}[Z_{1}|Z_{t} = z]
        + \frac{1}{1-t} \frac{\partial}{\partial t}\mathbb{E}[Z_{1}|Z_{t} = z].
    \end{align*}
    Using Lemma \ref{L_Lip_t_conditional_exp}, we have 
    \begin{align*}
        \|\frac{\partial}{\partial t} v(z, t) \|
        \le \frac{1}{(1-T)^{2}} \|z\| + \frac{1}{(1-T)^{2}} B_{1}
        + \frac{1}{1-T}  \frac{\nu + d}{\nu} \frac{3\sqrt{\nu}}{2(1-T)^{2}} \left(B_{2} + 3 B_{1}^{2}\right), \forall t \in [0, T].
    \end{align*}

\end{proof}

\subsection{Proof of Theorem \ref{T_Error_T_Flow}}\label{Apendix_Proof_Discretiz_Thm}

\begin{proof}[Proof of Theorem \ref{T_Error_T_Flow}]

    Define 
    \begin{align*}
        dZ_{t} &= v(Z_{t}, t) dt, Z_{0} \sim \pi_{0}, \\
        d\overline{Y}_{t} &= \hat{v}(\overline{Y}_{t_{i}}, t_{i}) dt, \overline{Y}_{0} = Z_{0}.
    \end{align*}
    
    By direct computation,
    \begin{align*}
        \frac{d}{dt} \|Z_{t} - \overline{Y}_{t}\|^{2}
        &= 2\langle Z_{t} - \overline{Y}_{t}, \frac{d}{dt}Z_{t} - \frac{d}{dt}\overline{Y}_{t} \rangle
        = 2\langle Z_{t} - \overline{Y}_{t}, v(Z_{t}, t) - \hat{G}(\overline{Y}_{t_{i}}, t_{i}) \rangle \\
        &= 2\langle Z_{t} - \overline{Y}_{t}, v(Z_{t}, t) - v(Z_{t_{i}}, t_{i}) \rangle 
        + 2\langle Z_{t} - \overline{Y}_{t}, v(Z_{t_{i}}, t_{i}) - v(\overline{Y}_{t_{i}}, t_{i}) \rangle \\
        &\quad + 2\langle Z_{t} - \overline{Y}_{t}, v(\overline{Y}_{t_{i}}, t_{i}) - \hat{v}(\overline{Y}_{t_{i}}, t_{i}) \rangle.
    \end{align*}
    
    Using Young's inequality, we can bound the rest of the terms as follows.
    \begin{enumerate}
        \item We bound the first term. By Lemma \ref{L_straightness},
        \begin{align*}
            &2\mathbb{E}[\langle Z_{t} - \overline{Y}_{t}, v(Z_{t}, t) - v(Z_{t_{i}}, t)\rangle] \\
            \le& L_{1}\mathbb{E}[\|Z_{t} - \overline{Y}_{t}\|^{2}] + \frac{1}{L_{1}}\mathbb{E}[\|v(Z_{t}, t) - v(Z_{t_{i}}, t)\|^{2}] \\
            \le& L_{1}\mathbb{E}[\|Z_{t} - \overline{Y}_{t}\|^{2}] + \frac{1}{L_{1}}h^{2} D_{3}.
        \end{align*}

        \item We bound the second term.
        \begin{align*}
            &2\mathbb{E}[\langle Z_{t} - \overline{Y}_{t}, v(Z_{t_{i}}, t_{i}) - v(\overline{Y}_{t_{i}}, t_{i})\rangle] \\
            \le& L_{1}\mathbb{E}[\|Z_{t} - \overline{Y}_{t}\|^{2}] + \frac{1}{L_{1}}\mathbb{E}[\|v(Z_{t_{i}}, t_{i}) - v(\overline{Y}_{t_{i}}, t_{i})\|^{2}] \\
            \le& L_{1}\mathbb{E}[\|Z_{t} - \overline{Y}_{t}\|^{2}] + \frac{1}{L_{1}}L_{1}^{2} \mathbb{E}[\|Z_{t_{i}} - \overline{Y}_{t_{i}}\|^{2}] .
        \end{align*}
        Here we used Proposition \ref{P_Lipschitz_x_t}.

        \item We bound the third term. 
        Recall that we assumed $\mathbb{E}[\| v(x, t) - \hat{v}(x, t)\|^{2}] \le \varepsilon^{2}$.
        Then 
        \begin{align*}
            &2\langle Z_{t} - \overline{Y}_{t}, v(\overline{Y}_{t_{i}}, t_{i}) - \hat{v}(\overline{Y}_{t_{i}}, t_{i}) \rangle \\
            \le & L_{1}\mathbb{E}[\|Z_{t} - \overline{Y}_{t}\|^{2}] 
            + \frac{1}{L_{1}}\mathbb{E}[\|v(\overline{Y}_{t_{i}}, t_{i}) - \hat{v}(\overline{Y}_{t_{i}}, t_{i})\|^{2}] \\
            \le & L_{1}\mathbb{E}[\|Z_{t} - \overline{Y}_{t}\|^{2}] 
            + \frac{1}{L_{1}}\varepsilon^{2} .
        \end{align*}
    \end{enumerate}

    Together, 
    \begin{align*}
        \frac{d}{dt} \mathbb{E}[\|Z_{t} - \overline{Y}_{t}\|^{2}] 
        \le 3L_{1}\mathbb{E}[\|Z_{t} - \overline{Y}_{t}\|^{2}]  
        + \frac{1}{L_{1}}\left( h^{2} D_{3} + L_{1}^{2} \mathbb{E}[\|Z_{t_{i}} - \overline{Y}_{t_{i}}\|^{2}] + \varepsilon^{2} \right) .
    \end{align*}
    Define 
    \begin{align*}
        K = h^{2} D_{3} + L_{1}^{2} \mathbb{E}[\|Z_{t_{i}} - \overline{Y}_{t_{i}}\|^{2}] + \varepsilon^{2} .
    \end{align*}
    Then 
    \begin{align*}
        &\mathbb{E}[\|Z_{t_{i+1}} - \overline{Y}_{t_{i+1}}\|^{2}] \\
        \le &e^{3L_{1}h}\mathbb{E}[\|Z_{t_{i}} - \overline{Y}_{t_{i}}\|^{2}]
        + \frac{3}{L_{1}}\int_{t_{i}}^{t_{i+1}} e^{3L_{1}(t_{i+1} - t)}
        \left(K \right) dt \\
        \le &e^{3L_{1}h}\mathbb{E}[\|Z_{t_{i}} - \overline{Y}_{t_{i}}\|^{2}]
        + \frac{e^{3L_{1}h} - 1}{L_{1}^{2}} K \\
        = & e^{3L_{1}h}\mathbb{E}[\|Z_{t_{i}} - \overline{Y}_{t_{i}}\|^{2}]
        + \frac{e^{3L_{1}h} - 1}{L_{1}^{2}} (h^{2} D_{3}  + \varepsilon^{2} )
        + (e^{3L_{1}h} - 1) \mathbb{E}[\|Z_{t_{i}} - \overline{Y}_{t_{i}}\|^{2}] \\
        \le &(2e^{3L_{1}h}-1)\mathbb{E}[\|Z_{t_{i}} - \overline{Y}_{t_{i}}\|^{2}]
        + \frac{e^{3L_{1}h} - 1}{L_{1}^{2}} (h^{2} D_{3} + \varepsilon^{2}).
    \end{align*}

    For $A_{i+1} \le (2e^{3L_{1}h}-1)A_{i} + \frac{e^{3L_{1}h} - 1}{L_{1}^{2}}B$ with $A_{0} = 0$, we have 
    \begin{align*}
        A_{n} = \sum_{i = 0}^{n-1} (2e^{3L_{1}h}-1)^{i} \frac{2e^{3L_{1}h} - 1}{L_{1}^{2}}B
        = \frac{1 - (2 e^{3L_{1}h} - 1)^{n} }{1 - (2 e^{3L_{1}h} - 1) } \frac{e^{3L_{1}h} - 1}{L_{1}^{2}}B
        \le \frac{(2 e^{3L_{1}h} - 1)^{n}-1}{2L_{1}^{2}}B.
    \end{align*}
    In general, for $x \in [0, 1]$ we have $e^{x} \le 1+2x$.
    Hence $2e^{3L_{1}h} - 1 \le e^{3L_{1}h} + 6L_{1}h \le 1 + 12L_{1}h$.
    And we get $(2 e^{3L_{1}h} - 1)^{n} \le (1+12L_{1}h)^{n} \le (1+\frac{12L_{1}}{n})^{n} \le e^{12L_{1}} $.

    Hence 
    \begin{align*}
        \mathbb{E}[\|Z_{T} - \overline{Y}_{T}\|^{2}] 
        \le & \frac{e^{12L_{1}}}{L_{1}^{2}}(h^{2} D_{3}  + \varepsilon^{2}).
    \end{align*}

    This implies 
    \begin{align*}
        W_{2}(\pi_{T}^{D}, \hat{\pi}_{T}^{D}) \le 
        \frac{e^{6L_{1}}}{L_{1}}\sqrt{h^{2} D_{3}  + \varepsilon^{2}}.
    \end{align*}
    Consequently, 
    \begin{align*}
        W_{2}(\pi_{1}^{D}, \hat{\pi}_{T}^{D}) \le \frac{e^{6L_{1}}}{L_{1}}\sqrt{h^{2} D_{3}  + \varepsilon^{2}} + 
        (1-T)\sqrt{2(\mathbb{E}[\|Z_{1}\|^{2}] +\mathbb{E}[\|Z_{0}\|^{2}])} .
    \end{align*}
\end{proof}

\subsection{Proof of Theorem \ref{thm:primalg}}\label{Apendix_Proof_Discretiz_Thm_Primal}

\begin{proof}[Proof of Lemma~\ref{lem:primallipschitz}]
    Note that we have
\begin{align*}
    \|v^{P}(x_{1}, t) - P_{x_{2}}^{x_{1}} v^{P}(x_{2}, t)\|_{g^{P}(x_{1})}
    &= \|\nabla^{2}\Psi (x_{1})\left(\nabla^{2}\Psi^*(z_1) v^{D}(z_{1}, t) - P_{x_{2}}^{x_{1}} \nabla^{2}\Psi^*(z_1) v^{D}(z_{2}, t) \right)\|_{g^{D}(z_{1})} \\ 
    &= \|v^{D}(z_{1}, t) - \nabla^{2}\Psi (x_{1}) P_{x_{2}}^{x_{1}} \nabla^{2}\Psi^*(z_1) v^{D}(z_{2}, t)\|_{g^{D}(z_{1})} \\
    &= \|v^{D}(z_{1}, t) - \nabla^{2}\Psi (x_{1}) \nabla^{2}\Psi^*(z_1) P_{z_{2}}^{z_{1}}  v^{D}(z_{2}, t)\|_{g^{D}(z_{1})} \\
    &= \|v^{D}(z_{1}, t) - v^{D}(z_{2}, t)\|_{g^{D}(z_{1})},
\end{align*}
where $P_{x}^{y}$ denotes parallel transport from $x$ to $y$. This proves the result.
\end{proof}

\begin{lemma}
    \label{Lma_Relate_Primal_Density} Under Assumption \ref{A_Primal_Data_Tail},
     For $\kappa \le \frac{\gamma}{2d + \nu + 2}$, we can guarantee Assumption \ref{A_Data_Tail_Bound} holds with $\alpha \ge 2d + \nu + 2$.
\end{lemma}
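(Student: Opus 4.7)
The plan is to convert the primal-space decay bound of Assumption \ref{A_Primal_Data_Tail} into the dual-space polynomial tail bound of Assumption \ref{A_Data_Tail_Bound} via the change-of-variable formula induced by the mirror map $\nabla\Psi$. Writing $z = \nabla\Psi(x)$, the pushforward density satisfies $\pi_1^D(z) = \pi_{Euc}^P(x)\,|\det \nabla^2 \Psi^*(z)|$, and since the proof of Proposition \ref{P_Wasserstein_Primal_Dual_Tail_Bound} shows $\nabla^2 \Psi(x) \succeq I$, we have $\nabla^2 \Psi^*(z) = (\nabla^2 \Psi(x))^{-1} \preceq I$ and hence $|\det \nabla^2 \Psi^*(z)| \le 1$. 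Consequently $\pi_1^D(z) \le \pi_{Euc}^P(x)$, so it suffices to control $\pi_{Euc}^P(\nabla\Psi^*(z))$ in terms of $\|z\|$.

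The key step is to show that large $\|z\|$ forces $x = \nabla\Psi^*(z)$ into a thin boundary layer of $\mathcal{K}$. From the computation appearing inside the proof of Proposition \ref{P_Wasserstein_Primal_Dual_Tail_Bound}, for $x \in \mathcal{K}_\delta$ one has $\|\nabla\Psi(x)\| \le \|x\| + C_\phi/\delta^\kappa$, where $C_\phi$ absorbs the uniform bounds on the $\nabla\phi_i$. Since $\mathcal{K}$ is bounded, $\|x\|$ is uniformly $O(1)$, so for every sufficiently large $R$ the event $\|\nabla\Psi(x)\| \ge R$ forces $x \in \mathcal{K} \setminus \mathcal{K}_{cR^{-1/\kappa}}$ for a constant $c$ depending only on $C_\phi$ and $\mathrm{diam}(\mathcal{K})$. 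Taking $R \ge R_0$ large enough that $cR^{-1/\kappa} \le \delta_0$ and invoking Assumption \ref{A_Primal_Data_Tail} yields
\[
    \pi_1^D(z) \;\le\; \pi_{Euc}^P(x) \;\le\; C_{pdf}\,(cR^{-1/\kappa})^{\gamma} \;=\; C'\,\|z\|^{-\gamma/\kappa}.
\]
Setting $\alpha := \gamma/\kappa$, the hypothesis $\kappa \le \gamma/(2d+\nu+2)$ gives $\alpha \ge 2d+\nu+2$, proving the polynomial tail on $\|z\| \ge R_0$; on $1 \le \|z\| < R_0$ we absorb the bound into the constant $C$, and on $\|z\| < 1$ we use smoothness of $\pi_{Euc}^P$ and compactness of $\mathcal{K}$ to set $C_u := \sup_{\mathcal{K}} \pi_{Euc}^P$.

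The main obstacle is the quantitative inversion in the second step: making rigorous that $\|\nabla\Psi(x)\|$ blowing up truly constrains $x$ to a boundary layer of width $\lesssim \|z\|^{-1/\kappa}$, with constants independent of the particular $x$ chosen. This depends crucially on boundedness of both $\mathcal{K}$ and of each $\nabla\phi_i$; without these, the identity perturbation $\tfrac{1}{2}\|x\|^2$ in the definition of $\Psi$ could inflate $\|\nabla\Psi(x)\|$ without pushing $x$ toward $\partial\mathcal{K}$, breaking the correspondence between tail decay in $z$ and boundary thickness in $x$ and thus the passage from exponent $\gamma$ in the primal bound to exponent $\gamma/\kappa$ in the dual bound.
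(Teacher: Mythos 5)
Your proposal is correct and follows essentially the same route as the paper's proof: you bound $\pi_1^D(z)\le \pi_{Euc}^P(x)$ via the Jacobian $\det\nabla^2\Psi\ge 1$ (the paper phrases this through the squared-Hessian-metric density and the isometry with the dual space, which is the same computation), then use the bound $\sup_{x\in\mathcal{K}_\delta}\|\nabla\Psi(x)\|\lesssim \delta^{-\kappa}$ from Proposition~\ref{P_Wasserstein_Primal_Dual_Tail_Bound} to force $x$ into a boundary layer of width $\sim\|z\|^{-1/\kappa}$ and invoke Assumption~\ref{A_Primal_Data_Tail}, giving the exponent $\gamma/\kappa\ge 2d+\nu+2$ exactly as in the paper. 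Your explicit treatment of the ranges $\|z\|<1$ and $1\le\|z\|<R_0$, and your remark that boundedness of $\mathcal{K}$ and of the $\nabla\phi_i$ is what lets $\|x\|$ be absorbed into the constant, only make explicit steps the paper leaves implicit.
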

\begin{proof}
    Using the change of variable formula, together with the fact that the determinant of a matrix equals to the product of all its eigenvalues, we know 
    \begin{align*}
        d\pi_{Euc}^{P}(x) = \sqrt{\det \nabla^{2} \Psi(x)} d\pi_{Hess}^{P}(x) \ge d\pi_{Hess}^{P}(x),
    \end{align*}
    where $\pi_{Euc}^{P}, \pi_{Hess}^{P}$ denotes the probability density function of the target distribution in primal space, under the Euclidean metric and squared Hessian metric, respectively. 
    Furthermore, the isometric mapping from primal space to dual space guarantees that 
    \begin{align*}
        \pi_{Euc}^{P}(x) \ge \pi^{D}(z).
    \end{align*}
    Notice that 
    \begin{align*}
        \sup_{x \in \mathcal{K}_{\delta}} \|\nabla \Psi(x)\| \le \frac{C'}{\delta^{\kappa}}.
    \end{align*}
    Since we assumed $\sup_{x \in \mathcal{K} \backslash \mathcal{K}_{\delta}} \pi_{Euc}^{P}(x) \le C_{pdf} \delta^{\gamma}$, we have 
    \begin{align*}
         \pi^{D}(z) \le \pi_{Euc}^{P}(x) \le C_{pdf} \delta^{\gamma}, \forall z \ge \frac{C'}{\delta^{\kappa}}.
    \end{align*}
    
    Using $\delta^{\gamma} = (\frac{1}{(\frac{1}{\delta^{\kappa}})})^{\gamma/\kappa}$,
    we conclude that there exists some $C > 0$ s.t. 
    \begin{align*}
        \pi^{D}(z) \le \frac{C}{\|z\|^{\gamma/\kappa}}, \forall z \ge 1.
    \end{align*}
    To guarantee $\gamma/\kappa \ge 2d + \nu + 2$, we need $\kappa \le \frac{\gamma}{2d + \nu + 2}$.
\end{proof}

\begin{proof}[Proof of Theorem \ref{thm:primalg}]
    Using Lemma \ref{Lma_Relate_Primal_Density}, we know Assumption \ref{A_Data_Tail_Bound} holds with $\alpha \ge 2d + \nu + 2$. 
    The result follows from applying
    Proposition \ref{P_Wasserstein_Primal_Dual_Tail_Bound} and Theorem \ref{T_Error_T_Flow}. 
\end{proof}

\end{document}